\documentclass[sigconf,nonacm]{acmart}

\setcopyright{none}
\renewcommand\footnotetextcopyrightpermission[1]{}

\usepackage{booktabs}
\usepackage{float}
\usepackage{placeins}
\usepackage{capt-of}
\usepackage{algorithm}
\usepackage{algpseudocode}
\usepackage{tikz}
\usetikzlibrary{arrows.meta,positioning,fit,calc}
\usepackage{enumitem}
\setlist[itemize]{leftmargin=1.2em, topsep=2pt, itemsep=1.5pt,
                  parsep=0pt}

\AtBeginDocument{%
  \setlength{\abovedisplayskip}{4pt plus 1pt minus 2pt}%
  \setlength{\belowdisplayskip}{4pt plus 1pt minus 2pt}%
  \setlength{\abovedisplayshortskip}{2pt plus 1pt minus 1pt}%
  \setlength{\belowdisplayshortskip}{2pt plus 1pt minus 1pt}}

\newtheorem{lemma}{Lemma}
\newtheorem{proposition}{Proposition}
\newtheorem{fact}{Fact}

\newcommand{\panache}{\textsc{Panache}}
\newcommand{\scamp}{\textsc{Scamp}}
\newcommand{\stumpy}{\textsc{Stumpy}}
\newcommand{\attimo}{\textsc{Attimo}}
\newcommand{\dist}{\mathrm{d}}
\newcommand{\pbound}{\underline{\dist}}

\begin{document}

\title{\panache: One-Pass Motif Discovery at Every Window Length}

\author{Tej Sanibh Ranade}
\email{tsranade1@gmail.com}
\affiliation{%
  \institution{Independent Researcher}
  \city{}
  \country{}
}

\begin{abstract}
Motif discovery, the search for recurring patterns within a time
series, is a core primitive of exploratory data analysis. A pattern,
however, is defined by its duration, which analysts rarely know in
advance. To resolve this unknown duration, an interval of window
lengths is defined, and the accepted method is to try every length
in that interval. Existing pan matrix profile (PMP) methods compute
one z-normalized matrix profile per length, so $L$ lengths cost $L$
quadratic self-joins over the same series. We introduce \panache{},
to our knowledge the first one-pass streaming algorithm for
z-normalized PMP motif discovery. It replaces the repeated
self-joins with a single scan whose runtime is near-linear in the
series length. The key observation is that mean-centering a
subsequence changes only its DC Fourier coefficient, so the non-DC
spectrum of every z-normalized subsequence can be maintained online
by sliding-DFT recurrences and running statistics. This spectral
state is the key under which similar subsequences collide in an
occupancy-controlled hash directory and, through Parseval's
theorem, yields a lower bound that rejects most colliding pairs
before any exact computation. \panache{} computes every
data-dependent parameter itself, leaving only a resource budget to
tune. At the default budget, it recovers all top-20 pan-motifs
against exact fixed-exclusion ground truth on 17 UCR
configurations, and is faster than every CPU and GPU baseline
benchmarked in this paper. On Wafer at five million samples over
51 lengths, \panache{} completes one pass in 2.9 minutes and emits
the exact motifs in 6.0 minutes, against 7.95 hours for the
fastest exact CPU baseline and 38.3 minutes for \scamp{} on an
H100 GPU.
\end{abstract}

\begin{CCSXML}
<ccs2012>
 <concept>
  <concept_id>10002951.10003227.10003351</concept_id>
  <concept_desc>Information systems~Data mining</concept_desc>
  <concept_significance>500</concept_significance>
 </concept>
 <concept>
  <concept_id>10003752.10003809.10010047</concept_id>
  <concept_desc>Theory of computation~Streaming, sublinear and near linear time algorithms</concept_desc>
  <concept_significance>300</concept_significance>
 </concept>
</ccs2012>
\end{CCSXML}
\ccsdesc[500]{Information systems~Data mining}
\ccsdesc[300]{Theory of computation~Streaming, sublinear and near linear time algorithms}

\keywords{time series, matrix profile, motif discovery, locality-sensitive hashing, streaming algorithms}

\maketitle

\section{Introduction}
\label{sec:intro}

Time-series motif discovery looks for subsequences that recur in the
same series~\cite{lin2002finding,patel2002mining}. The difficulty is
that a motif is not defined until a
duration is chosen. A window length $m$ represents a duration. It
determines which subsequences are compared and what it means for
two occurrences to have the same shape. Too short a window fragments the event; too long a window
mixes the event with its context. In exploratory data analysis, this
duration is often the unknown rather than an input the analyst can
set reliably.

The matrix profile~\cite{yeh2016matrix} replaced the varying
definitions of a motif with a single formulation. For a chosen
window length $m$, a fixed-length algorithm starts by initializing
a window and computing its z-normalized distance to all valid
non-overlapping windows of the same length in the entire series
(Figure~\ref{fig:pmpcon}). The window that achieved the smallest of
these distances is the nearest neighbor. When the same window
advances by one position, distances are recomputed across the
entire series again. This process is repeated from the start of the
series to the end, producing a quadratic self-join, and one
self-join yields one row, the matrix profile $P_m$, which stores
each window's nearest-neighbor distance. Each stored entry is a
cell, and motifs are the cells of $P_m$ with the smallest distances
rather than the outcome of a separate combinatorial search. The
speed of these algorithms comes from recurrences that reuse dot
products while two windows of the same length slide together. The
reuse makes each evaluation cheap, but it never makes evaluations
fewer. This formulation is why fixed-length motif discovery has
mature exact
algorithms~\cite{zhu2016matrix2,zhu2018scrimp,zimmerman2019scamp,law2019stumpy}.

\begin{figure}[t]
  \centering
  \includegraphics[width=\columnwidth]{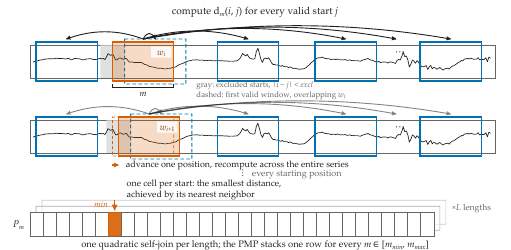}
  \caption{Existing PMP representation on a sample of ECG200.
  Window $w_i$ is evaluated against all windows in the series.
  Every slide causes the repetition. Every different window length
  requires the same process in the PMP.}
  \label{fig:pmpcon}
\end{figure}

The pan matrix profile (PMP)~\cite{madrid2019skimp} replaces the
duration choice with an interval of lengths. It computes a matrix
profile for every length and stacks each profile for every
$m\in[m_{\min},m_{\max}]$. Existing PMP algorithms achieve this
generality through repetition. The computations performed at one
length are never reused at another, because every length changes
the set of windows, the overlap rule that rejects trivial matches,
the mean and standard deviation that z-normalize every window, and
the length-$m$ Fourier basis used by the FFT steps. The recurrences
never cross lengths either. Every single window length forces its
own rescan of the entire series, and that just by taking one slide.
The cost increases with every length in the interval.

We present \panache{}, to our knowledge the first one-pass streaming
algorithm for z-normalized PMP motif discovery. \panache{} scans the
series once while maintaining a single multi-length search state.
Each sample is consumed on arrival and advances the whole
multi-length state at once. No length and no slide ever forces a
rescan of the series. From this state it selects which subsequence
pairs to evaluate. A profile cell is updated only with the exact
z-normalized distance of a valid pair (\S\ref{sec:background}).
Every written cell therefore holds an exact distance to a valid
neighbor.

The central representation is the normalized AC spectrum of a window.
If $X_k$ is a Fourier coefficient of a window, subtracting the window
mean changes only $X_0$. Every coefficient with $k>0$ is unchanged by
centering and is divided only by the window standard deviation. The
non-DC coefficients can therefore be maintained online by sliding-DFT
recurrences. Beyond serving as the streaming state, this vector is
the SimHash key~\cite{charikar2002estimating}, a locality-sensitive
hash (LSH) under which windows of similar normalized shape
collide. The same vector gives the Parseval lower bound that rejects
most candidate pairs before exact distance computation.

We introduce four mechanisms that make this representation scale
with a growing stream. \emph{Anchor lengths} restrict the online
state to a subset of the interval's lengths, relying on the
persistence of motifs across neighboring lengths. Similar windows
collide in the same hash bucket, and \emph{occupancy-controlled
prefix hashing} keeps mean bucket occupancy near a fixed target as
the series grows, so a constant per-query budget stays effective for
all but the densest classes.
\emph{Exclusion-zone batching} parallelizes the stream while leaving
the output bit-identical to serial processing at any thread count. A
final \emph{emit} stage materializes the profile with exact
comparisons inside oversized buckets and around the discovered
motifs.

Our contributions are as follows.
\begin{itemize}
\item A formulation of one-pass PMP motif discovery around a sound
  upper pan-profile, which guarantees exact distances and valid
  neighbors for every written cell (\S\ref{sec:background}).
\item A normalized AC-spectrum representation whose non-DC
  coefficients are maintained online and serve simultaneously as the
  streaming state, the SimHash key, and a Parseval rejection bound
  (\S\ref{sec:spectra}).
\item The mechanisms required at scale: occupancy-controlled
  global-depth hashing, anchor-length search, deterministic
  exclusion-zone batching, and emit-time completion
  (\S\ref{sec:anchors}--\ref{sec:emit}).
\item An evaluation on ten UCR datasets across 17
  exact-ground-truth configurations, with full top-20 recovery,
  faster runtime than every comparable public CPU and GPU baseline,
  and per-configuration statistics of the recovered motif values
  (\S\ref{sec:eval}). Every data-dependent parameter is computed by
  the algorithm itself, and every design constant stayed at its
  default in every experiment.
\end{itemize}

\section{Background and Problem}
\label{sec:background}

A time series is a vector $T=(t_0,\ldots,t_{n-1})$. For a window
length $m$, write $T_{i,m}=(t_i,\ldots,t_{i+m-1})$, with
$0\le i<n-m+1$, for the subsequence starting at position $i$. Its
z-normalized form is
$Z_{i,m}=(T_{i,m}-\mu_{i,m}\mathbf{1})/\sigma_{i,m}$,
where constant windows, for which $\sigma_{i,m}=0$, are ignored. The
distance between two subsequences of length $m$ is the Euclidean
distance after z-normalization,
$\dist_m(i,j)=\lVert Z_{i,m}-Z_{j,m}\rVert_2$,
equivalently $\dist_m(i,j)^2 = 2m(1-\rho_m(i,j))$ for their Pearson
correlation $\rho_m(i,j)$. Exact matrix-profile algorithms exploit this
identity. The correlation is the raw dot product corrected by the
window moments,
$m\,\rho_m(i,j)=(T_{i,m}\cdot T_{j,m}-m\,\mu_{i,m}\mu_{j,m})
/(\sigma_{i,m}\sigma_{j,m})$. One dot product and the moments
$\mu,\sigma$ of both windows therefore determine the distance in
closed form, without re-reading
the samples~\cite{yeh2016matrix,zimmerman2019scamp}. In this paper, an
exact distance is the value $\dist_m(i,j)$ itself, never an estimate
or a bound.

In a self-join, windows whose starts are very close share most of
their samples and are not considered motif pairs. A pair is
\emph{valid} when its start positions satisfy
$|i-j|\ge \mathit{excl}$, for a fixed exclusion parameter
$\mathit{excl}$~\cite{lin2002finding,chiu2003probabilistic}. For one
length $m$, the
matrix profile is the row pair $(P_m,I_m)$~\cite{yeh2016matrix},
\[
  P_m[i] = \min_{\substack{j\\ |i-j|\ge \mathit{excl}}}
           \dist_m(i,j), \qquad
  I_m[i] \in \arg\min_{\substack{j\\ |i-j|\ge \mathit{excl}}}
           \dist_m(i,j).
\]

The pan matrix profile (PMP)~\cite{madrid2019skimp} is the stack of
these rows over an interval $[m_{\min},m_{\max}]$. Let
$L=m_{\max}-m_{\min}+1$. A pan-motif is a low cell of this stack
together with its nearest-neighbor index. To avoid reporting the same
region repeatedly, the top-$k$ list is formed by sorting cells by
distance and skipping any cell that lies within $m/2$ of an already
accepted cell at the same length.

\paragraph{The streaming target.}
The exact PMP contains $\sum_{m}(n-m+1)$ cells, each requiring a
nearest-neighbor search among the windows of its own length.
\panache{} maintains a relaxation of this object. A \emph{sound upper
pan-profile} is a stack of rows $(\hat{P}_m,\hat{I}_m)$ satisfying,
for every cell, either $\hat{P}_m[i]=\infty$ or
\[
  \hat{P}_m[i]=\dist_m(i,\hat{I}_m[i])
  \quad\text{with}\quad
  |i-\hat{I}_m[i]|\ge \mathit{excl},
\]
and in all cases $\hat{P}_m[i]\ge P_m[i]$. If
$\hat{P}_m[i]<\infty$, then $\hat{I}_m[i]$ is a valid neighbor and
$\hat{P}_m[i]$ is its exact distance. Only the nearest-neighbor
quantifier is relaxed: if the algorithm evaluates the true
nearest-neighbor pair, $\hat{P}_m[i]$ attains the exact profile value
$P_m[i]$; otherwise it remains a valid upper bound, witnessed by an
exactly evaluated neighbor. A finite value below the exact profile
value cannot occur, because every written value is the exact distance
of some valid pair, and the exact value is the minimum over all of
them.

\paragraph{Problem statement.}
\looseness=-1
The problem tackled in this paper is to process the samples once, in
stream order, over the full length interval. The method must
maintain a sound upper pan-profile and emit top-$k$ pan-motifs whose
reported distances are exact distances to valid neighbors. It must
find, within its single scan, the valid pairs whose exact distances
lower the motif cells, without executing the $L$ self-joins that
define the exact PMP. Updates at stream time $\tau$ may use only
$t_0,\ldots,t_\tau$. Space linear in $n$ is inherent to the task,
since a materialized pan-profile alone has $\Theta(nL)$ cells. The
memory accounting is given in \S\ref{sec:cost}.

\section{The \panache{} Algorithm}
\label{sec:algorithm}

Every PMP construction, exact or approximate, is characterized by the
set of subsequence pairs whose distances it evaluates. Let
$\mathcal U=\{(m,i,j):\ m_{\min}\le m\le m_{\max},\
0\le i<j\le n-m,\ j-i\ge\mathit{excl}\}$
denote the admissible comparison universe. The exact PMP evaluates
all of it, which is what the $L$ self-joins compute. For an
evaluated subset $\mathcal S\subseteq\mathcal U$, writing
$\langle i,j\rangle=(\min\{i,j\},\max\{i,j\})$, the profile
\emph{induced} by $\mathcal S$ is
\[
P_m^{\mathcal S}[i]=\min_{j:\ (m,\langle i,j\rangle)\in\mathcal S}
\dist_m(i,j),
\]
with the empty minimum equal to $\infty$ and $I_m^{\mathcal S}[i]$ a
witness of the minimum.

\begin{proposition}[Soundness]
\label{prop:sound}
For every $\mathcal S\subseteq\mathcal U$, the induced profile is a
sound upper pan-profile: every finite cell equals the exact
z-normalized distance to the valid neighbor it stores, and
$P_m^{\mathcal S}[i]\ge P_m[i]$ for every cell.
\end{proposition}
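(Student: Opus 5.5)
The argument unfolds the definition of the induced profile and compares it with the definition of $P_m$, one cell $(m,i)$ at a time. Fix a length $m$ and a start $i$, and write
\[
J^{\mathcal S}_{m}(i)=\{j:\ (m,\langle i,j\rangle)\in\mathcal S\}
\]
for the set of partners of $i$ that the subset $\mathcal S$ evaluates at length $m$. The whole proof rests on one inclusion: every such $j$ is a valid partner in the sense of the exact profile. The definition of $\mathcal U$ enforces $0\le \min\{i,j\}<\max\{i,j\}\le n-m$ and $\max\{i,j\}-\min\{i,j\}\ge\mathit{excl}$. So $(m,\langle i,j\rangle)\in\mathcal S\subseteq\mathcal U$ implies that $j$ is a legal start at length $m$ with $|i-j|\ge\mathit{excl}$. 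Hence $J^{\mathcal S}_m(i)$ is contained in the index set over which $P_m[i]$ minimizes. The ordered-pair notation $\langle i,j\rangle$ only symmetrizes membership, and $\dist_m(i,j)=\dist_m(j,i)$, so the order of $i$ and $j$ does not matter.

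\textbf{Case 1: $J^{\mathcal S}_m(i)=\emptyset$.} Then $P_m^{\mathcal S}[i]=\infty$ by the empty-minimum convention. The inequality $\infty\ge P_m[i]$ holds trivially, and the first disjunct of the definition of a sound upper pan-profile is met.

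\textbf{Case 2: $J^{\mathcal S}_m(i)\neq\emptyset$.} The set is finite, so the minimum is attained at some witness $j^\ast=I_m^{\mathcal S}[i]$. This gives $P_m^{\mathcal S}[i]=\dist_m(i,j^\ast)$, which is the exact z-normalized distance by the definition of $\dist_m$. Validity $|i-j^\ast|\ge\mathit{excl}$ follows from the inclusion above.

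\textbf{The inequality.} In both cases we need $P_m^{\mathcal S}[i]\ge P_m[i]$. This is the elementary fact that a minimum over a subset is at least the minimum over the superset:
\[
\min_{j\in J^{\mathcal S}_m(i)}\dist_m(i,j)\;\ge\;\min_{j:\,|i-j|\ge\mathit{excl}}\dist_m(i,j)=P_m[i].
\]

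\textbf{Main obstacle.} The only step needing care is the bookkeeping about which windows are admissible. Constant windows ($\sigma_{i,m}=0$) are ignored in the background definitions, so I would read both $\mathcal U$ and the exact minimum as ranging over the same non-constant windows. Under that reading the subset inclusion is literal and $\dist_m$ is defined on every evaluated pair. If $P_m[i]$ itself has an empty range, it is $\infty$; then $J^{\mathcal S}_m(i)$ is also empty and the inequality still holds.
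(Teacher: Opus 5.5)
Your proof is correct and follows the paper's argument: every triple of $\mathcal U$ is a valid pair, so a finite cell is an exact distance to a valid neighbor, and a minimum over the subset $J^{\mathcal S}_m(i)$ cannot fall below the minimum that defines $P_m[i]$. The explicit empty-set case and the constant-window remark are bookkeeping the paper leaves implicit, and they do not change the argument.
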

\begin{proof}
A finite cell equals $\dist_m(i,j)$ for some evaluated triple
$(m,i,j)\in\mathcal S$. Every triple of $\mathcal U$ satisfies
$j-i\ge\mathit{excl}$, so the stored neighbor is valid.
$P_m^{\mathcal S}[i]$ is a minimum over a subset of the neighbors
defining $P_m[i]$, so it cannot be smaller than $P_m[i]$.
\end{proof}

\panache{} computes $(\hat P,\hat I)=(P^{\mathcal S},I^{\mathcal S})$
for an evaluated set assembled in one pass,
\[
\mathcal S=\mathcal S_{\mathrm{stream}}
\cup\mathcal S_{\mathrm{hot}}
\cup\mathcal S_{\mathrm{local}},
\]
where $\mathcal S_{\mathrm{stream}}$ is generated during the
streaming pass (\S\ref{sec:eligibility}--\ref{sec:batching}) and
$\mathcal S_{\mathrm{hot}}$ and $\mathcal S_{\mathrm{local}}$ are
added by the emit stage (\S\ref{sec:emit}).
Proposition~\ref{prop:sound} separates correctness from coverage.
Written values are correct for any choice of $\mathcal S$; the
remaining design question is which pairs to evaluate.

\looseness=-1
Every such choice must fit within a budget fixed before the stream
starts. Per arriving sample and indexed length (\S\ref{sec:anchors}),
that budget is $O(K)$ for the spectral state, $O(BK)$ for the
signature, and at most $R$ candidate inspections ($K$: retained
spectral width, \S\ref{sec:spectra}; $B$: signature bits,
\S\ref{sec:hashing}; $R$: verification budget, \S\ref{sec:hashing}). The sections below describe the mechanisms that keep this
budget effective as $n$ grows. Anchor lengths cover the length
interval (\S\ref{sec:anchors}), occupancy control keeps the hash
directory bounded (\S\ref{sec:hashing}), and exclusion-zone batching
parallelizes the stream while keeping the output identical to serial
processing (\S\ref{sec:batching}). Emit-time
completion evaluates the oversized hash classes and the motif
neighborhoods at non-anchor lengths, completing the evaluated set
$\mathcal S$ (\S\ref{sec:emit}).
Algorithms~\ref{alg:insert} and \ref{alg:emit} (appendix) list the
update and emit procedures. Figure~\ref{fig:hero} illustrates each
stage on an ECG5000 motif pair ($i=3925$, $j=7564$, $m=20$), and its
panels are referenced throughout the section.

\subsection{Windows on a Stream}
\label{sec:eligibility}

An offline PMP computation holds the full series, so a window is
identified by where it starts. The algorithm picks a start position
and reads the $m$ samples to its right. A stream allows no such
choice. Samples arrive one at a time, and a window can be used only
once its last sample has arrived, so the latest published sample
decides which windows exist. Windows on a stream are therefore
indexed by their right endpoints. Let $\tau$ be the index of the
most recently published sample. A length-$m$ window with start $q$
completes at $\tau=q+m-1$, the moment its last sample arrives.
Until then its mean, its variance, its non-DC spectrum, and its
distance to any other window do not exist. When sample $\tau$ is
published, exactly one new window completes at every length
$m\le\tau+1$, namely $T_{\tau-m+1,m}$.

For each \emph{anchor length} $a\in A\subseteq[m_{\min},m_{\max}]$,
the subset chosen in \S\ref{sec:anchors}, \panache{} maintains one
sliding window. The length-$a$ window completes for the first time
at $\tau=a-1$, and every later sample slides it to the start
$q=\tau-a+1$ (Figure~\ref{fig:hero}A). Recomputing the window at
every slide would read all $a$ samples for a one-position change.
Instead, \panache{} maintains an online running state
(\S\ref{sec:spectra}). At each slide the oldest sample leaves on
the left and the newest enters on the right, the state is corrected
for exactly those two samples at cost $O(K)$, and the same state is
carried through the entire series. The raw samples are retained, so
the exact distance of any pair of completed windows can be computed
at any later time.

A sample does not belong to one length. Take lengths 2 and 3 on a
stream $t_0,t_1,t_2,\ldots$. When $t_1$ arrives, the first length-2
window $(t_0,t_1)$ completes. When $t_2$ arrives, that window
slides to $(t_1,t_2)$, and the first length-3 window
$(t_0,t_1,t_2)$ completes. When $t_3$ arrives, both windows slide
at once, to $(t_2,t_3)$ and $(t_1,t_2,t_3)$. The arriving sample
$t_3$ enters both windows, while each window drops a different
sample. Every later arrival behaves the same way. \panache{} holds
one running state per anchor length, each published sample advances
all of them together, at $O(K)$ per length and $O(|A|K)$ in total.

\subsection{Normalized Spectral Features}
\label{sec:spectra}

\looseness=-1
Each sliding window's state has three jobs. It must
be updated at every slide, provide the signature under which similar
windows collide, and yield the Parseval bound that rejects colliding
pairs. The normalized non-DC Fourier coefficients do all three.

For a window $T_{i,m}$, define the DFT coefficients
\[
  X_{i,m,k}=\sum_{r=0}^{m-1}t_{i+r}e^{-2\pi\mathrm{i} kr/m}.
\]
The coefficient $X_{i,m,0}$ is proportional to the window mean; the
coefficients with $k>0$ describe variation around that mean level. At
an indexed length, \panache{} retains the scaled prefix
$F_{i,m}=(X_{i,m,1},\ldots,X_{i,m,K'})/\sigma_{i,m}$, zero-padded to
the shared width $K-1$, where
\[
  K'=\min\!\bigl(K-1,\ \lceil m/2\rceil-1\bigr)
\]
is the number of independent non-DC coefficients at length $m$: by
conjugate symmetry bin $k$ and bin $m-k$ are mirror images, and for
$k\ge m$ the index aliases back onto the mean, so only bins
$1\le k\le K'$ carry new shape information (for even $m$ the Nyquist
bin $m/2$ is omitted). Real and imaginary parts are kept; the DC
coefficient is not stored.

\begin{lemma}[AC invariance]
\label{lem:ac}
Let $\hat X_{i,m,k}$ be the DFT coefficient of the z-normalized
window $Z_{i,m}$. For every $1\le k\le m-1$,
$\hat X_{i,m,k}=X_{i,m,k}/\sigma_{i,m}$.
\end{lemma}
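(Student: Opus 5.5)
The plan is to compute the DFT of $Z_{i,m}$ directly from its definition and exploit linearity of the transform. Writing $Z_{i,m}[r]=(t_{i+r}-\mu_{i,m})/\sigma_{i,m}$ for $0\le r\le m-1$ (the lemma tacitly assumes $\sigma_{i,m}>0$, since constant windows are excluded in \S\ref{sec:background}), we have
\[
\hat X_{i,m,k}=\frac{1}{\sigma_{i,m}}\sum_{r=0}^{m-1}t_{i+r}e^{-2\pi\mathrm{i}kr/m}
-\frac{\mu_{i,m}}{\sigma_{i,m}}\sum_{r=0}^{m-1}e^{-2\pi\mathrm{i}kr/m}.
\]
The first sum is $X_{i,m,k}$ by definition, so the whole claim reduces to showing that the second sum vanishes for $1\le k\le m-1$.

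That vanishing is the only substantive step. I would treat the sum as a finite geometric series with ratio $\omega=e^{-2\pi\mathrm{i}k/m}$. Because $1\le k\le m-1$, $k$ is not a multiple of $m$, so $\omega\neq 1$. Then
\[
\sum_{r=0}^{m-1}\omega^r=\frac{1-\omega^m}{1-\omega}=0,
\]
since $\omega^m=e^{-2\pi\mathrm{i}k}=1$. Equivalently, the constant vector $\mathbf 1$ is orthogonal to every non-DC Fourier basis vector. This gives $\hat X_{i,m,k}=X_{i,m,k}/\sigma_{i,m}$.

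There is no real obstacle beyond stating the range of $k$ carefully. For $k=0$, $\omega=1$ and the subtracted sum equals $m\mu_{i,m}$, so the DC coefficient of $Z_{i,m}$ is zero rather than $X_{i,m,0}/\sigma_{i,m}$. This is exactly why the DC bin is excluded from the statement and dropped from $F_{i,m}$.
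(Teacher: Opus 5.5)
Your proposal is correct and matches the paper's argument: split the DFT of $Z_{i,m}$ by linearity into $X_{i,m,k}/\sigma_{i,m}$ minus $\mu_{i,m}/\sigma_{i,m}$ times the geometric sum $G_{k,m}$, then show $G_{k,m}=0$ because its ratio $e^{-2\pi\mathrm{i}k/m}\neq 1$ satisfies $q^m=1$. This is exactly the paper's proof, including its appendix completion of the geometric sum; your explicit assumption $\sigma_{i,m}>0$ and your remark on the $k=0$ case are correct additions.
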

\begin{proof}
Centering subtracts $\mu_{i,m}G_{k,m}$ from coefficient $k$, where
$G_{k,m}=\sum_{r=0}^{m-1}e^{-2\pi\mathrm{i} kr/m}$. This geometric sum is
zero for $k\not\equiv 0\pmod m$. Scaling by $\sigma_{i,m}^{-1}$
scales the remaining coefficients.
\end{proof}

The stored vector $F_{i,m}$ therefore equals the leading $K'$
independent bins of the non-DC spectrum of the z-normalized window
(Figure~\ref{fig:hero}A), the remaining entries zero. Mean subtraction
affects only the DC bin, and the division by the standard deviation
scales all bins uniformly.

\paragraph{Online update.}
A slide changes the window only at its ends. The departing sample
leaves, the arriving sample enters, and every retained sample keeps
its value while its position inside the window drops by one. This
is the fact the sliding DFT exploits. A DFT coefficient is a sum of
samples, each weighted by a phase factor $e^{-2\pi\mathrm{i}kr/m}$
that depends on its position $r$. When every position drops by one,
every phase factor is multiplied by the same fixed rotation
$e^{2\pi\mathrm{i}k/m}$. One multiplication of the coefficient
therefore re-phases all retained samples at once, and nothing is
recomputed for them. The only new value entering the sum is the
arriving sample.

Formally, let $S=\sum_r t_{i+r}$ and $Q=\sum_r t_{i+r}^2$ be the
running sum and sum of squares of the current window, with moments
$\mu_{i,m}=S/m$ and $\sigma_{i,m}^2=Q/m-\mu_{i,m}^2$. When the
window advances by one sample,
\[
\begin{aligned}
S' &= S-t_i+t_{i+m},\qquad
Q' = Q-t_i^2+t_{i+m}^2,\\
X_{i+1,m,k} &= e^{2\pi\mathrm{i} k/m}\bigl(X_{i,m,k}-t_i+t_{i+m}\bigr),\\
F_{i+1,m} &= (X_{i+1,m,1},\ldots,X_{i+1,m,K'},0,\ldots,0)/\sigma_{i+1,m},
\end{aligned}
\]
so advancing one indexed length by one sample costs $O(K)$. The
update runs in four steps. (i)~The running sums advance to $S'$ and
$Q'$, giving the new moments. (ii)~The remove--insert correction
$(X_{i,m,k}-t_i+t_{i+m})$ subtracts the departing sample and adds
the arriving one to each retained coefficient. (iii)~One rotation
by the fixed factor $e^{2\pi\mathrm{i}k/m}$ re-phases all retained
samples, as above. Steps ii--iii are the sliding
DFT~\cite{jacobsen2003sliding,jacobsen2004update}. (iv)~The rotated
coefficients are divided by the new $\sigma_{i+1,m}$ to form
$F_{i+1,m}$. Incremental updates accumulate floating-point rounding
error, so the implementation recomputes the moments and
coefficients from the raw samples every 4{,}096 slides per anchor.

\paragraph{Hash key.}
$F_{i,m}$ is the state that is hashed to a signature and committed
to a bucket of a hash directory after every slide. Windows with
similar normalized shapes subtend small angles in this truncated
spectral space (Figure~\ref{fig:hero}B), and SimHash converts small
angles into high collision probability, so similar windows tend to
share buckets (the mechanics are the subject of
\S\ref{sec:hashing}).

\paragraph{Rejection bound.}
A collision only proposes a candidate pair. The bound that filters
these pairs before any exact computation is built from the same
stored vector $F_{i,m}$. With $K'$ as defined above, the rejection
bound is
\[
  \pbound_{K,m}^2(i,j)
  =\frac{2}{m}\sum_{k=1}^{K'}|F_{i,m,k}-F_{j,m,k}|^2 .
\]

\begin{lemma}[Parseval lower bound]
\label{lem:parseval}
For equal-length windows $i,j$:\;
$\pbound_{K,m}(i,j)\le \dist_m(i,j)$.
\end{lemma}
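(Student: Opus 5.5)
The plan is to write $\dist_m(i,j)^2$ as a sum of spectral energies and then keep only part of that sum.

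\textbf{Step 1: Parseval.} Apply Parseval's theorem to the difference vector $D=Z_{i,m}-Z_{j,m}\in\mathbb R^m$. By linearity its DFT is $\hat X_{i,m,k}-\hat X_{j,m,k}$, and with the unnormalized DFT convention used in the paper,
\[
\dist_m(i,j)^2=\lVert D\rVert_2^2=\frac1m\sum_{k=0}^{m-1}\bigl|\hat X_{i,m,k}-\hat X_{j,m,k}\bigr|^2 .
\]

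\textbf{Step 2: remove DC and normalize.} Both z-normalized windows have zero mean, so the $k=0$ term vanishes; the remaining sum therefore runs over $1\le k\le m-1$. On that range, Lemma~\ref{lem:ac} gives $\hat X_{i,m,k}=X_{i,m,k}/\sigma_{i,m}=F_{i,m,k}$ for $k\le K'$, and similarly for $j$. Constant windows are excluded, so both $\sigma$'s are positive.

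\textbf{Step 3: conjugate symmetry.} Since $D$ is real, $\hat D_{m-k}=\overline{\hat D_k}$. For $1\le k\le K'$ we have $K'\le\lceil m/2\rceil-1$, which gives $k<m/2$ and hence $k\neq m-k$. The index pairs $\{k,m-k\}$ for $k=1,\dots,K'$ are therefore pairwise disjoint subsets of $\{1,\dots,m-1\}$, and each contributes $2|\hat D_k|^2$. Dropping every other term (all are nonnegative, including any Nyquist bin and any unretained bins) gives
\[
\dist_m(i,j)^2\ \ge\ \frac{2}{m}\sum_{k=1}^{K'}|F_{i,m,k}-F_{j,m,k}|^2=\pbound_{K,m}^2(i,j).
\]
Taking square roots completes the argument. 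The zero-padded entries beyond $K'$ are identical for both windows, so they contribute nothing.

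\textbf{Main obstacle.} The only delicate part is the bookkeeping in Step 3. The factor $2$ is legitimate only because the bins $k$ and $m-k$ are distinct and do not overlap across different $k$. This is exactly what the bound $K'\le\lceil m/2\rceil-1$ guarantees, and I would check it separately for odd $m$ and even $m$. I would also confirm that the $1/m$ normalization in Parseval matches the paper's DFT convention.
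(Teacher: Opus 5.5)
Your proof is correct and follows the paper's argument essentially step for step. Both apply Parseval to the z-normalized windows, drop the vanishing DC term, and use conjugate symmetry $|\hat D_{m-k}|=|\hat D_k|$. Both then use the cap $K'\le\lceil m/2\rceil-1$ (equivalently $2K'<m$) to make the bins $\{1,\dots,K'\}$ and $\{m-K',\dots,m-1\}$ disjoint, discard the remaining nonnegative terms, and identify the retained coefficients with $F_{i,m}$ via Lemma~\ref{lem:ac}.
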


By Parseval's theorem, the squared z-normalized distance equals
$\frac{1}{m}\sum_{k}|\hat X_{i,m,k}-\hat X_{j,m,k}|^2$ over all $m$
bins, and the DC term vanishes after z-normalization. Any subset of
the remaining terms is a lower bound on the sum. The factor $2$
accounts for conjugate-symmetric bin pairs of real signals, and the
cap on $K'$ prevents a bin and its alias from being counted twice
when $m<2K$ (proof in Appendix~\ref{app:proofs}). A candidate pair
$(i,j)$ is evaluated exactly only if $\pbound_{K,m}(i,j)$ is smaller
than $\hat P_m[i]$ or $\hat P_m[j]$, the two cells that the pair can
lower. Otherwise, the exact distance cannot lower either cell,
and the pair is skipped. On the ECG5000 example of
Figure~\ref{fig:hero}C, this test rejects 214 of the 215 candidates
in the probed buckets. Features are cached in single precision. The
bound is therefore multiplied by $1-\lambda$ before the comparison,
and Lemma~\ref{lem:f32} shows that $\lambda=10^{-4}$ preserves
validity at every magnitude where the bound can prune.

\paragraph{Spectral width selection.}
The spectral width $K$ sets how many leading Fourier coefficients
the state retains. The prefix $F_{i,m}$ above holds bins $1$
through $K'(m)$, zero-padded to the shared width $K-1$, and the
signature and the bound are both built from it, so their precision
depends on $K$. Signatures are compared bit
for bit, so every window in the directory must be hashed at one
shared width, and $K$ cannot change once insertion has begun. The
width is therefore fixed before insertion begins by a calibration.
The calibration step reads sampled windows at each of $m_{\min}$,
the midpoint length, and $m_{\max}$, in the implementation at most
256 windows for each length, and selects the smallest
$K$ satisfying
\[
  \mathbb E\!\left[
    \frac{2}{m^2}\sum_{k=1}^{K'}|\hat X_{i,m,k}|^2
  \right]\ge \eta
  \quad\text{and}\quad
  n\,\mathbb E\!\left[(1-\theta_K/\pi)^B\right]\le 2\gamma\alpha ,
\]
where $\theta_K$ is the sampled angle between $K$-dimensional
features and $R=\gamma\alpha$ is the verification budget of
\S\ref{sec:hashing}. The first condition requires the retained
coefficients to capture a fraction $\eta$ of the normalized spectral
energy, so that features separate windows. A z-normalized window
carries total spectral energy $\sum_k|\hat X_{i,m,k}|^2=m^2$, so the
criterion leaves, in expectation over the sampled windows, at most
$(1-\eta)m^2$ outside the retained bins. Since
$|a-b|^2\le 2|a|^2+2|b|^2$, the expected squared-distance mass
invisible to $\pbound$ is at most $4(1-\eta)m$, a correlation slack
of $2(1-\eta)$. Larger $\eta$ therefore tightens pruning at the cost
of a larger $K$. Retaining most of the spectral energy in a small
prefix of coefficients is the established practice of the
Fourier-indexing
tradition~\cite{agrawal1993efficient,faloutsos1994fast,rafiei1998efficient},
so we set $\eta=0.85$.
The second condition bounds the expected number of accidental
signature collisions at the target stream length $n$ by twice the
verification budget. Near-duplicate pairs are excluded from the
estimate, since collisions between near-duplicates are the intended
behavior. Classes that exceed the budget are handled by the emit
stage (\S\ref{sec:emit}). It is important to notice that the
calibration keeps no state and returns only the integer $K$, so any
sample of windows serves, including the earliest windows of a live
stream. The pass itself is unchanged. Every sample is read once, in
stream order. No choice of $K$ can invalidate a reported motif. The
width decides the cost of the search, never the validity of its
output, and recovery is measured unchanged at every admissible
width (Appendix~\ref{app:params}).

\begin{proposition}[Soundness is width-independent]
\label{prop:widthfree}
For every width $K\ge 2$: (i) the bound stays valid,
$\pbound_{K,m}(i,j)\le\dist_m(i,j)$ for every equal-length pair,
and is non-decreasing in $K$; (ii) the profile computed at width
$K$ is a sound upper pan-profile (the proof is in
Appendix~\ref{app:proofs}).
\end{proposition}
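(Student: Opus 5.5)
The plan is to prove (i) by a direct Parseval computation and then obtain (ii) from Proposition~\ref{prop:sound}, since width only affects which pairs are evaluated.

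For (i), fix $m$ and an equal-length pair $i,j$. Write $D_k=\hat X_{i,m,k}-\hat X_{j,m,k}$, the DFT of the real vector $Z_{i,m}-Z_{j,m}$. Parseval gives
\[
\dist_m(i,j)^2=\tfrac{1}{m}\sum_{k=0}^{m-1}|D_k|^2 .
\]
The $k=0$ term vanishes because both z-normalized windows have zero sum.

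Realness gives conjugate symmetry, $|D_k|=|D_{m-k}|$. I would partition the non-DC bins into the mirror pairs $\{k,m-k\}$ for $1\le k<m/2$, plus the Nyquist bin $m/2$ when $m$ is even. The cap $K'\le\lceil m/2\rceil-1$ guarantees that every retained index satisfies $k<m/2$. Hence the bins $k$ and $m-k$ for $1\le k\le K'$ are $2K'$ distinct indices, all different from $0$ and from $m/2$.

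Dropping all other bins, which contribute nonnegative terms, yields
\[
\dist_m(i,j)^2\ge\tfrac{2}{m}\sum_{k=1}^{K'}|D_k|^2 .
\]
By Lemma~\ref{lem:ac}, $D_k=F_{i,m,k}-F_{j,m,k}$ for these $k$, because $k\le K'\le m-1$. The right-hand side is therefore exactly $\pbound_{K,m}^2(i,j)$.

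Monotonicity in $K$ is immediate. $K'(K)=\min(K-1,\lceil m/2\rceil-1)$ is non-decreasing in $K$, and $\pbound^2$ is a sum of nonnegative terms over $k\le K'$. A larger $K$ therefore adds terms or leaves the sum unchanged, and the zero padding contributes nothing. The same computation also covers the degenerate case $K'=0$ (small $m$, or $K=2$ with $K'=1$): the bound is then $0$ or a single term, and validity still holds.

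For (ii), the key observation is that every written cell at any width is the exact distance of a pair in $\mathcal U$. I would argue this by inspecting the algorithm's write path. Cells are only updated with $\dist_m(i,j)$ computed from the raw samples and moments, and only for pairs with $|i-j|\ge\mathit{excl}$. The width $K$ influences only which pairs enter $\mathcal S$: through the signatures, the buckets, and whether the Parseval test passes. The profile is thus $P^{\mathcal S_K}$ for some $\mathcal S_K\subseteq\mathcal U$, and Proposition~\ref{prop:sound} applies verbatim.

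I expect the main obstacle to be stating this reduction cleanly, that is, confirming that no step writes a bound or estimate into a cell. Part (i) is what makes this work, since it shows the pruning test never wrongly discards anything. Strictly, though, soundness does not even need pruning validity: a skipped pair only shrinks $\mathcal S$. I would note this explicitly. I would also remark that the single-precision factor $1-\lambda$ (Lemma~\ref{lem:f32}) can only affect coverage, never soundness.
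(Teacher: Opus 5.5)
Your proof is correct and takes essentially the same approach as the paper. Part (i) is the same Parseval sub-sum argument: the paper cites the proof of Lemma~\ref{lem:parseval}, notes that no step depends on $K$, and gets monotonicity from $K'$ being non-decreasing in $K$. Part (ii) is the same reduction to Proposition~\ref{prop:sound}, since features only propose, filter, and order pairs while every write is the exact distance of a valid pair, so, as you note, soundness does not even need the pruning bound to be valid.
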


\begin{figure*}[t]
  \centering
  \makebox[\textwidth][c]{%
    \includegraphics[width=1.06\textwidth]{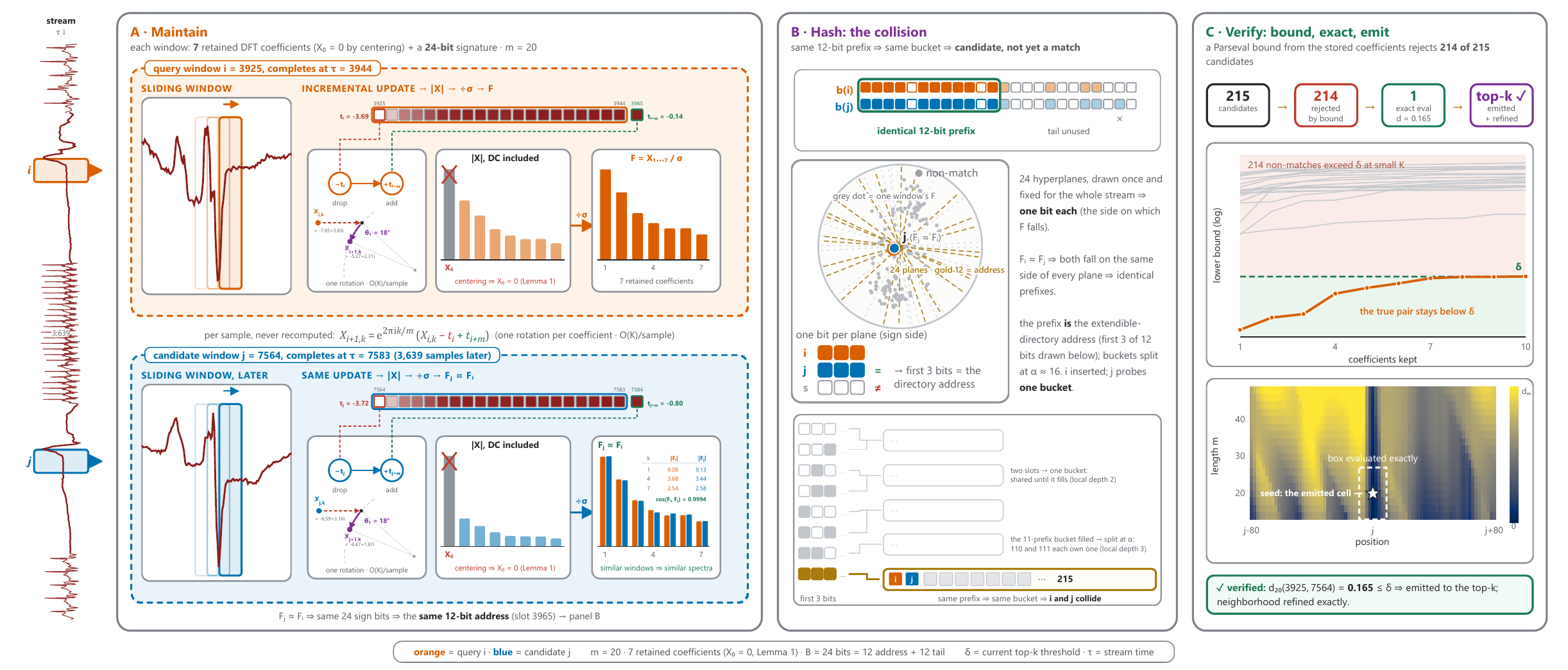}}
  \caption{\panache{} end to end on an ECG5000 motif pair ($i=3925$,
  $j=7564$, $m=20$). All values are computed from the data, and the
  left spine is the stream itself. (A)~Each sample updates the
  retained coefficients in $O(K)$: drop $t_i$, add $t_{i+m}$, one
  fixed rotation (worked example in each lane); a completed window is
  normalized to $F$ and hashed into a 12-bit directory address
  (\S\ref{sec:eligibility}--\ref{sec:hashing}). (B)~$F_j\approx
  F_i$: same prefix, same bucket. (C)~The Parseval bound rejects 214
  of 215 candidates (Lemma~\ref{lem:parseval}); the one exact
  evaluation ($\dist=0.165$) is emitted and its neighborhood refined
  (\S\ref{sec:emit}).}
  \label{fig:hero}
\end{figure*}

\subsection{Anchor Lengths}
\label{sec:anchors}

Maintaining an online index at every length in $[m_{\min},m_{\max}]$
would still be one-pass and linear in the stream length, but each
arriving sample would pay for $L$ spectral states, $L$ directories,
and $L$ probes. \panache{} optimizes further. It maintains output
rows for every length but builds online indexes only at the anchor
lengths
$A=\{m_{\min}+rs:\ m_{\min}+rs\le m_{\max}\}\cup\{m_{\max}\}$,
for a stride $s$ and integers $r\ge 0$, so the online search cost
scales with $|A|=\lceil L/s\rceil$ rather than $L$. Every experiment
in this paper uses $s=2$ (\S\ref{sec:setup}).

\looseness=-1
The assumption behind anchors is not that motifs occur at anchor
durations, but that low PMP cells occupy contiguous regions of the
length--position plane. When two subsequences match at length $m$,
their z-normalized distance after re-cutting both one sample longer
or shorter is bounded above and below by explicit functions of the
distance at length $m$~\cite{mueen2013moen}. Nearby start positions
usually remain in the same low-distance region. A region that
spans at least $s$ consecutive lengths intersects at least one
anchor. Each anchor draws its own projection vectors, so the anchors
that a region intersects provide multiple, nearly independent
opportunities for its discovery (Figure~\ref{fig:persist}).

\begin{figure}[t]
  \centering
  \includegraphics[width=\columnwidth]{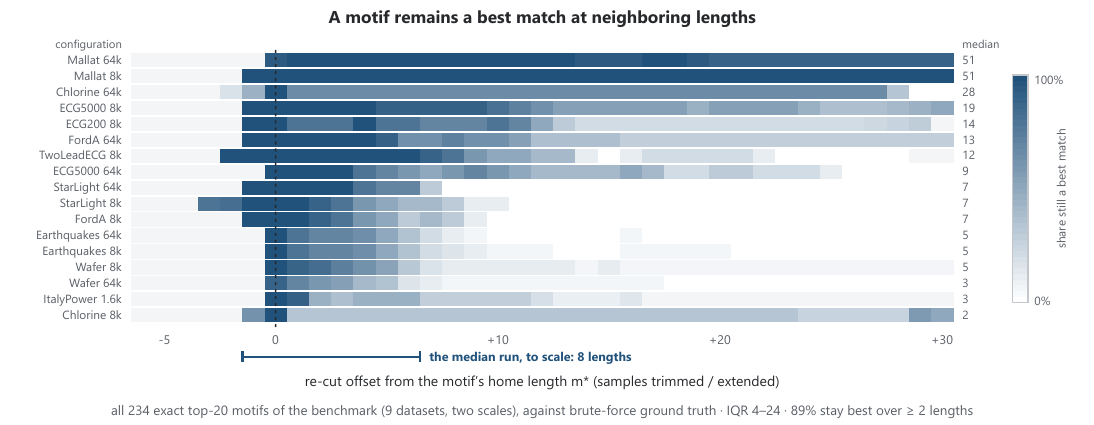}
  \caption{Empirical validation of the persistence assumption: a true
  motif remained a best match for a median of 8 consecutive lengths.}
  \label{fig:persist}
\end{figure}

\begin{proposition}[Cross-length redundancy]
\label{prop:recall}
Call a pair matchable at anchor $a$ if it is valid and would lower at
least one profile cell at length $a$, and let $M_A$ be the set of
anchors at which it is matchable. Let $p_a$ be the probability that
signatures and probes collide at anchor $a$, and $c_a$ the
probability that the budgeted scan retains the pair after collision.
Since the per-anchor projections are drawn independently
(\S\ref{sec:hashing}), the probability that no anchor discovers the
pair is at most $\prod_{a\in M_A}(1-p_ac_a)$ (proof in
Appendix~\ref{app:proofs}).
\end{proposition}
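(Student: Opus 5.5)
The plan is to express the failure event as an intersection of per-anchor failure events and then use independence across anchors. For each $a\in M_A$, let $D_a$ be the event that the pair is discovered at anchor $a$. That means two things happen. First, the signatures and probes of the two windows collide, which is event $C_a$. Second, the budgeted scan keeps the pair after that collision, which is event $V_a$ given $C_a$. The pair is undiscovered by the anchors exactly on $\bigcap_{a\in M_A}\overline{D_a}$. Anchors outside $M_A$ can only help, so discarding them can only enlarge this event and keeps the inequality valid. From the definitions of $p_a$ and $c_a$ we have $\Pr[D_a]=\Pr[C_a]\Pr[V_a\mid C_a]=p_ac_a$. If the definitions are read as lower bounds instead, we get $\Pr[D_a]\ge p_ac_a$, and the argument below still goes through.

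The key step is to show that the events $\overline{D_a}$, $a\in M_A$, are independent, or at least negatively dependent in the sense that the probability of their intersection is at most the product of their probabilities. I would condition on the series $T$, which is fixed, so that the only randomness is in the projection vectors. By \S\ref{sec:hashing}, each anchor draws its own vectors independently. Collision at anchor $a$ depends only on $F_{i,a}$, $F_{j,a}$ and anchor $a$'s projections, so the collision events are independent across anchors. Then $\Pr[\bigcap\overline{D_a}]=\prod(1-\Pr[D_a])\le\prod(1-p_ac_a)$.

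The main obstacle is retention. Whether the budgeted scan keeps the pair depends on the current profile values $\hat P$ through the Parseval test, and on bucket occupancy. Both of these are shared across anchors: an exact evaluation at one anchor writes cells at that length, and the emit stage also writes cells. My first approach is to define $c_a$ as a worst-case (infimum) retention probability over all histories of the other anchors' randomness, conditional on collision at $a$. Under that reading, a chain-rule argument works. I would order the anchors in $M_A$ and write
\[
\Pr\Bigl[\textstyle\bigcap_{a}\overline{D_a}\Bigr]=\prod_{a}\Pr\Bigl[\overline{D_a}\,\Big|\,\textstyle\bigcap_{b<a}\overline{D_b}\Bigr].
\]
Each factor is at most $1-p_ac_a$, because collision at $a$ stays independent of the conditioning, and retention given collision has probability at least $c_a$ uniformly. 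One more point is needed for this to hold: if an earlier anchor has already lowered the cells, then "matchable at $a$" might fail. That case does not hurt the bound, because it only happens when the pair (or a better neighbor) was already found. I would therefore define the failure event as "no anchor discovers a pair achieving the lowering." In that event the conditioning excludes the earlier discovery, and the bound per factor holds.

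To finish, I would take the product over $a\in M_A$ and note two edge cases. If $M_A=\emptyset$, the bound is the empty product $1$, which is trivial. The factor-$c_a$ deterioration caused by the single-precision slack $\lambda$ is already absorbed into $c_a$, because Lemma~\ref{lem:f32} ensures pruning never discards a pair that could lower a cell.
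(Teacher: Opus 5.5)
There is a genuine gap, and it sits in how you handle retention. Your second paragraph proves independence only for the collision events $C_a$, yet it concludes $\Pr[\bigcap_a\overline{D_a}]=\prod_a(1-\Pr[D_a])$. That step needs independence of the full discovery events $D_a=C_a\cap V_a$.

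You then call retention the obstacle because you take it to be coupled across anchors, but that premise is false. During the streaming pass each anchor owns its directory, so the bucket contents, the depth, the margin-ordered probes and the budgeted selection depend only on anchor $a$'s signatures. Likewise, an exact evaluation at anchor $b$ writes only the row $\hat P_b$. The Parseval test at anchor $a$ therefore compares against a row $\hat P_a$ that only anchor $a$'s own evaluations have touched. Emit-stage writes happen after the pass and are not part of this discovery event. Hence $D_a$ is a function of the data and of $g_{a,1},\ldots,g_{a,B}$ alone. Since the projection families are independent across anchors, the $D_a$ are mutually independent, and the product bound holds (with equality) under the statement's own definition of $c_a$. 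This per-anchor measurability is the entire content of the paper's proof, and it is the idea your argument is missing.

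Your workaround does not fill the gap. First, it replaces the statement's $c_a$ by a worst-case infimum and the failure event by ``no anchor discovers a pair achieving the lowering,'' so it proves a different claim. Second, its key step, that $C_a$ stays independent of $\bigcap_{b<a}\overline{D_b}$, holds only if the earlier anchors' discovery events do not depend on anchor $a$'s projections. That is exactly the per-anchor measurability you declined to assume. In the coupled world you posit, conditioning on earlier failures could bias $C_a$; in the actual algorithm it cannot, but then the chain rule is unnecessary.

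Two smaller points. Your worry that an earlier anchor may already have lowered the cells that make the pair matchable at $a$ is vacuous, because anchor $b$ never writes length-$a$ cells. Also, Lemma~\ref{lem:f32} protects the bound only above its stated threshold, not ``never.''
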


The proposition quantifies the redundancy that anchors provide.
The probabilities $p_a$ and $c_a$ are data dependent, so recall is
evaluated empirically (\S\ref{sec:eval}), while soundness
(Prop.~\ref{prop:sound}) and determinism (Prop.~\ref{prop:det})
hold unconditionally. Anchoring skips lengths on purpose, and the
skipped lengths carry no online state. They are still searched.
Every motif emitted at an anchor length starts a local completion
(\S\ref{sec:emit}), which evaluates exact distances at the
neighboring lengths and positions around the emitted pair, and the
contiguity of low regions is exactly what makes these neighborhoods
the right place to look. A low region that spans fewer than $s$
consecutive lengths can still escape every anchor. A smaller stride
or a wider completion neighborhood moves the method toward dense
coverage at higher cost.

\subsection{Streaming Candidate Generation}
\label{sec:hashing}

For every anchor length, an individual hash directory is
maintained, so a window is only ever compared against earlier
windows of the same length.
Earlier start positions are stored under a SimHash signature of
their feature vectors. Bit $\ell$ of the signature of a start $i$ at
anchor $a$ is
$b_a(i)_\ell=\mathbf 1\{\langle g_{a,\ell},F_{i,a}\rangle\ge 0\}$
for $\ell=1,\ldots,B$, where $g_{a,\ell}$
are seeded Gaussian projections drawn independently per anchor.
The per-anchor draw is what makes the anchors' discovery attempts
nearly independent (Prop.~\ref{prop:recall}).

\begin{fact}[\cite{charikar2002estimating}]
\label{fact:simhash}
For a standard Gaussian projection $g$ and vectors $u,v$ at angle
$\theta$,
$\Pr[\mathrm{sign}\langle g,u\rangle\ne
\mathrm{sign}\langle g,v\rangle]=\theta/\pi$.
\end{fact}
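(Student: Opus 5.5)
The fact is the classical random-hyperplane computation of \cite{charikar2002estimating}. I would reduce it to a two-dimensional problem by rotation invariance and then measure an angle. Assume $u,v\neq 0$, since otherwise the angle $\theta$ is undefined. Write $P$ for the plane spanned by $u$ and $v$; if $u$ and $v$ are parallel, take any plane containing them. Decompose $g=g_P+g_\perp$ into its orthogonal projection onto $P$ and the orthogonal remainder. Then $\langle g,u\rangle=\langle g_P,u\rangle$ and $\langle g,v\rangle=\langle g_P,v\rangle$, so only $g_P$ matters. Because the standard Gaussian is invariant under orthogonal transformations, $g_P$ is a standard Gaussian in $P\cong\mathbb R^2$. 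Its direction $g_P/\lVert g_P\rVert$ is therefore uniform on the unit circle, and $g_P=0$ has probability zero.

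Next I would identify the event in which the signs differ. In coordinates on $P$, place $u$ at angle $0$ and $v$ at angle $\theta\in[0,\pi]$. Let $\varphi$ be the uniform direction of $g_P$. Then $\mathrm{sign}\langle g,u\rangle$ is determined by $\cos\varphi$ and $\mathrm{sign}\langle g,v\rangle$ by $\cos(\varphi-\theta)$. The signs differ exactly when the line orthogonal to $g_P$ separates $u$ from $v$. Equivalently, $\varphi$ lies in one of two antipodal arcs of length $\theta$: one where $\varphi\in(\pi/2,\pi/2+\theta)$ and its mirror image $\varphi\in(-\pi/2,-\pi/2+\theta)$. Each arc has measure $\theta$, so the total is $2\theta$ out of $2\pi$, which gives probability $\theta/\pi$.

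The only delicate step is the measure-zero boundary events, where a sign is exactly zero. These occur when $\varphi$ hits one of finitely many values, so they have probability zero. This means the convention used for the sign at zero does not matter; the paper's signature bits use $\ge 0$. I would also note that the identity holds for $u,v$ in the truncated spectral space of any dimension $K$, as used in \S\ref{sec:hashing}. Apart from this, the argument is a direct computation, and the main point to check is the rotation-invariance reduction.
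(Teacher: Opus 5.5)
Your argument is correct. It is the standard Goemans--Williamson random-hyperplane computation that underlies the cited result of \cite{charikar2002estimating}: project $g$ onto $\mathrm{span}\{u,v\}$, use rotation invariance to get a uniform direction, and measure the two antipodal arcs of total length $2\theta$. The paper gives no proof of its own and imports the fact by citation, so your write-up supplies the missing justification, including the correct remark that the paper's $\ge 0$ bit convention changes only a null event.
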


Windows whose normalized shapes are close therefore agree on each
signature bit with high probability (Figure~\ref{fig:hero}B). A hash
table with a fixed number of buckets, however, degrades as the stream
grows. Bucket populations increase linearly with $n$ while the
per-query budget is constant, so either the budget is exceeded or
queries compare only against recent arrivals.
\panache{} controls bucket size with a \emph{global-depth} prefix
directory, a variant of extendible hashing~\cite{fagin1979extendible}
in which every bucket shares one directory-wide depth $d$ rather than
carrying a per-bucket local depth. The bucket address of a window is
the $d$-bit prefix of its signature. When the number of stored
windows exceeds $\alpha 2^{d}$ for an occupancy target $\alpha$, and
unused signature bits remain ($d<B$), the \emph{entire} directory
doubles: the depth increases to $d+1$ and every entry is rehomed by
its next signature bit, available because full signatures are stored.
No individual bucket splits on its own and no local depths are kept;
a single depth $d$ governs the whole anchor's directory.

\begin{proposition}[Occupancy control]
\label{prop:occupancy}
Splitting whenever the population exceeds $\alpha 2^d$ keeps mean
bucket occupancy at most $\alpha$, and within $(\alpha/2,\alpha]$
between splits after the initial fill; the total rehoming work
amortizes to $O(1)$ per insertion until the signature or
implementation depth limit is reached (proof in
Appendix~\ref{app:proofs}).
\end{proposition}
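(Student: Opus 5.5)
Let $N$ be the number of stored windows and $d$ the global depth, so the mean bucket occupancy is $N/2^d$. The proof tracks the pair $(N,d)$ through the insertion sequence. Two events change it: an insertion increases $N$ by one, and a split increases $d$ by one when $N>\alpha 2^d$ and $d<B$. The plan is to prove three things in order: an invariant giving the upper bound $\alpha$, the lower bound $\alpha/2$ after the initial fill, and the amortized $O(1)$ rehoming cost.

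\emph{Upper bound.} The invariant is $N\le\alpha 2^d$, checked after each insertion has been processed. Take the processing rule to be: after incrementing $N$, repeatedly split while $N>\alpha 2^d$ and $d<B$.
\begin{itemize}
\item When the loop stops with $d<B$, the invariant holds directly. So, away from the depth limit, mean occupancy is at most $\alpha$.
\item If a single split is performed per insertion, the invariant is still maintained once $\alpha\ge 1$. Before the insertion, $N\le\alpha 2^d$. After it, $N+1\le\alpha 2^d+1\le\alpha 2^{d+1}$.
\end{itemize}

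\emph{Lower bound.} Define the initial fill as the phase before the first split. After that, consider the moment a split from $d$ to $d+1$ fires. At that moment $N>\alpha 2^d=\tfrac{\alpha}{2}2^{d+1}$, so the new occupancy exceeds $\alpha/2$. Between splits, $N$ only grows while $d$ is fixed, so the occupancy rises monotonically from above $\alpha/2$ up to at most $\alpha$. This places it in $(\alpha/2,\alpha]$.

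\emph{Amortized rehoming cost.} A split at depth $d\to d+1$ rehomes every stored entry. It reads one stored signature bit per entry, costing $N=\Theta(\alpha 2^d)$, plus $O(2^{d+1})$ to allocate the directory.
\begin{itemize}
\item The split to depth $d+1$ happens at population about $\alpha 2^d$.
\item Summing the cost over all splits up to the final depth $D$ gives a geometric series: $\sum_{d<D}O(\alpha 2^d+2^{d+1})=O(\alpha 2^D)$.
\item The split to depth $D$ required $N>\alpha 2^{D-1}$, so this total is $O(N)$.
\end{itemize}
Dividing by $N$ gives $O(1)$ amortized work per insertion, with the constant depending on $\alpha$ only through the $2^{d+1}/(\alpha 2^d)$ allocation term. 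This holds as long as $d<B$ and $d$ stays below the implementation limit. Past that limit, no further splits occur, so rehoming cost stops accruing; only the occupancy bound is lost, consistent with the stated caveat.

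The main obstacle is the boundary conventions: whether occupancy is measured before or after the split check, how "initial fill" is defined, and whether $\alpha\ge 1$ is needed for one split per insertion to suffice. I would state the processing rule precisely first, and fall back to the loop form if $\alpha<1$.
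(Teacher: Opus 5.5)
Your proposal is correct and follows the same route as the paper's proof. Both use the invariant $P\le\alpha 2^d$ enforced by the split rule, obtain the $(\alpha/2,\alpha]$ window from the trigger condition of the most recent split together with the fact that the population never decreases, and bound the geometric sum of rehoming costs through $\alpha 2^{D-1}<P_{\mathrm{final}}$. Your explicit check that one split per insertion suffices when $\alpha 2^d\ge 1$, and your accounting of the directory-allocation term, are small refinements that the paper leaves implicit.
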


\looseness=-1
Deeper prefixes reduce bucket size but also reduce the probability
that two similar windows share a full prefix. The query side
compensates by multi-probing~\cite{lv2007multiprobe}. A query
inspects its exact prefix bucket and then buckets at Hamming
distance one, in increasing order of the projection margin
$|\langle g_{a,\ell},F_{q,a}\rangle|$. A small margin indicates a bit
that was close to the decision boundary, so the corresponding
one-bit variants are the most probable alternative buckets. The
number of Hamming-1 probes grows with depth as $1+\max(0,\,d-d_0)$,
for a reference depth $d_0$, capped by the depth and an
implementation limit. Candidates from the probed buckets are
deduplicated, filtered by the exclusion rule, and inspected up to the
budget $R=\gamma\alpha$, a fixed multiple $\gamma$ of the occupancy
target $\alpha$. The target $\alpha$ is the only parameter exposed
to the analyst. When a probed bucket contains more entries than the
remaining budget, half of that budget inspects the most recent
entries and the other half inspects entries at even intervals
across the whole bucket. This avoids bias toward recent arrivals.

Let $Q_a(q)$ denote the candidates retained for query start $q$,
with $|Q_a(q)|\le R$. The streaming pass evaluates exactly the pairs
\[
\mathcal S_{\mathrm{stream}}
=\{(a,\langle i,q\rangle): i\in Q_a(q),
\pbound_{K,a}(i,q)<\max(\hat P_a[i],\hat P_a[q])\}.
\]
Each computed distance updates both cells, $\hat P_a[i]$ and
$\hat P_a[q]$. A cell and its neighbor index are overwritten
whenever the new distance is smaller.

\subsection{Batched Online Updates}
\label{sec:batching}

\looseness=-1
Batching lets \panache{} run on any number of threads without
changing its output. A batch is a group of $\beta\le\mathit{excl}$
consecutive arriving samples processed together. At a fixed length,
the windows completed by one batch have consecutive start
positions, all closer than $\mathit{excl}$, so no two of them can
form a valid pair. Windows of the same batch therefore never search
each other. The batch is published in full before any of its
searches begins. The only windows a search sees early are its own
batch-mates, every one of them invalid by the exclusion rule, and
skipping an invalid candidate costs none of the verification
budget.

\looseness=-1
A batch runs in three phases. Phase~A advances the running state of
\S\ref{sec:spectra} for all anchors in parallel and inserts the
completed windows into the directories. Each anchor owns its
moments, coefficients, and directory, so the threads share nothing.
Phase~B searches in parallel, one work unit per completed window.
Each unit probes its directory (\S\ref{sec:hashing}), rejects
candidates with the bound, computes exact distances for the
survivors, and stores the cell improvements in a private buffer.
Phase~C merges the buffers into the profile in a fixed order of
anchor and buffer index, so equal distances resolve to the same
neighbor on every run. No batch pair is valid, no state is shared,
and the merge order is fixed. Nothing in the schedule can therefore
change a result, and a window is searched at most $\beta-1$ samples
after it completes.

\begin{proposition}[Determinism]
\label{prop:det}
For a fixed input stream, configuration, and seed, the profiles and
motifs emitted by \panache{} are identical for every thread count and
schedule (proof in Appendix~\ref{app:proofs}).
\end{proposition}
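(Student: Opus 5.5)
The plan is an induction over batches: the full state after each batch (running spectral states, directories, and profile rows $(\hat P_a,\hat I_a)$) will be shown to be a function of the input stream, configuration, and seed only. The thread count and schedule will be shown not to enter. Emit operates on this final state, so the result then carries over to the emitted output.

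The base case is the initialization. Calibration of $K$, the seeded Gaussian projections $g_{a,\ell}$, the anchor set $A$, and the empty directories are all computed serially from the configuration and seed, so they are schedule-free.

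For the inductive step, assume the state before batch $b$ is schedule-independent and treat the three phases in turn.

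\textbf{Phase A.} Each anchor's moments, coefficients, and directory are owned by one work unit and touched by no other. The update is a fixed sequence of floating-point operations: the recurrence of \S\ref{sec:spectra}, the periodic recomputation every 4{,}096 slides, the signature bits, and the global-depth insertions and doublings of Proposition~\ref{prop:occupancy}. Hence the post-A state of each anchor is a function of its pre-state and the batch samples. Here I must fix the insertion order within an anchor, namely increasing start position, so that bucket contents and the "recent"/"evenly spaced" sampling of \S\ref{sec:hashing} are well defined.

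\textbf{Phase B.} This is the main obstacle. Each search reads only the post-A directories, the raw samples, and the pre-batch profile cells used in the pruning test $\pbound<\max(\hat P_a[i],\hat P_a[q])$. The profile must not be read while another unit writes, and writes go only to private buffers, so the profile is frozen during B. The directories are fully published after A, so no search sees a partially inserted batch. The only windows visible that were not visible under serial processing are batch-mates. These satisfy $|i-q|<\beta\le\mathit{excl}$ and are discarded by the exclusion filter before consuming budget, so the candidate set $Q_a(q)$, the probe order (by projection margin, with a fixed tie-break on bit index), and the resulting buffer are deterministic functions of the pre-B state.

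I would argue explicitly that the pruning threshold uses the frozen pre-batch profile rather than a live one. Otherwise the schedule could change which pairs are evaluated, and hence $\mathcal S_{\mathrm{stream}}$, even though each written value is sound by Proposition~\ref{prop:sound}.

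\textbf{Phase C.} The buffers are merged in the fixed (anchor, buffer index) order with strict-improvement overwriting. The resulting minima and witnesses, including tie resolution, are therefore a deterministic fold over a deterministic sequence. Exact distances are computed by the same code path from the same operands, so they are bitwise equal across runs.

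Finally, the emit stage (Algorithm~\ref{alg:emit}) is applied to the final state. I would assume, or verify from its description, that emit either runs serially or follows the same private-buffer, fixed-merge discipline, with a deterministic top-$k$ sort (ties broken by $(m,i)$). Under that discipline the emitted $\mathcal S_{\mathrm{hot}}\cup\mathcal S_{\mathrm{local}}$ and the motif list are identical for every thread count and schedule.

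The delicate points are the Phase B freezing argument and the reliance on deterministic floating-point evaluation order within each unit; I expect the second to need a stated implementation assumption, such as no parallel reductions inside a distance computation.
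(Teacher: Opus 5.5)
Your proposal is correct and follows essentially the same route as the paper's proof: an induction over batches showing that the post-batch state depends only on the pre-batch state and the batch contents. The three steps match the paper's: disjoint per-anchor ownership in phase~A, a barrier plus private buffers in phase~B, and the fixed-order, strict-inequality merge in phase~C that makes ties resolve the same way every time. The paper settles your emit-stage hedge directly: the hot-class sieve parallelizes only across anchors, each writing only its own rows, so the phase-A disjoint-memory argument applies, and local completion, zero-snap and top-$k$ selection run sequentially. Your extra points, the floating-point evaluation order within a unit and the frozen pruning thresholds, are implementation assumptions that the paper leaves implicit in ``deterministic functions of the post-A state and the unit's identity.''
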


\begin{table*}[!t]
\caption{\textbf{Complete benchmark table.} \panache{} columns:
time (s) and memory measure the streaming pass; top-20
(pos$+$val) is the recovered motifs split into position and value
matches; t.\ mat.\ and mem.\ mat.\ are time and memory charged
with full materialization (protocol in \S\ref{sec:setup}). Unmarked
recovery uses exact \stumpy{} fixed-exclusion ground truth,
$^\dagger$ the exact \scamp{}-GPU reference. Baselines: wall time
for the identical multi-length task. Dashes: unscored or
cost-bounded, not failed.}
\label{tab:master}
\centering
\scriptsize
\setlength{\tabcolsep}{4.8pt}
\renewcommand{\arraystretch}{0.85}
\begin{tabular}{@{}l r | r c r rr | rrrrrrr@{}}
\toprule
 & & \multicolumn{5}{c|}{\textbf{\panache}} & \multicolumn{7}{c}{baselines: wall time (s); (recovered/20) where scored} \\
\cmidrule(lr){3-7}\cmidrule(l){8-14}
Dataset & \multicolumn{1}{c}{$n$} & \multicolumn{1}{c}{time (s)} & top-20 (pos+val) & \multicolumn{1}{c}{memory} & \multicolumn{1}{c}{t.\ mat.} & \multicolumn{1}{c|}{mem.\ mat.} & \multicolumn{1}{c}{\scamp} & \multicolumn{1}{c}{\scamp{}-GPU} & \multicolumn{1}{c}{\stumpy} & \multicolumn{1}{c}{SCRUMP} & \multicolumn{1}{c}{STIMP} & \multicolumn{1}{c}{L-\textsc{Stumpi}} & \multicolumn{1}{c@{}}{\attimo} \\
\midrule
\multicolumn{14}{@{}l}{\emph{Recovery suites --- UCR (per-dataset length ranges)}} \\
ECG200 & 8k & 0.06 & 20/20 (16+4) & 37\,MB & 0.14 & 32\,MB & 0.17 & 0.85 & 4.80 (20) & 8.32 (20) & 8.29 & 50.1 & 20.8 (18) \\
ItalyPowerDemand & 1,600 & 0.01 & 20/20 (20+0) & 37\,MB & 0.02 & 37\,MB & 0.04 & 0.21 & 0.52 (20) & 0.91 (20) & 0.90 & 4.83 & 8.48 (20) \\
Wafer & 8k & 0.08 & 20/20 (19+1) & 37\,MB & 0.25 & 51\,MB & 0.29 & 1.20 & 7.45 (20) & 12.8 (20) & 13.0 & 81.6 & 117 (20) \\
Earthquakes & 8k & 0.07 & 20/20 (20+0) & 37\,MB & 0.25 & 64\,MB & 0.30 & 1.16 & 7.13 (20) & 12.8 (20) & 13.0 & 81.9 & 1,052 (20) \\
StarLightCurves & 8k & 0.08 & 20/20 (20+0) & 100\,MB & 0.23 & 133\,MB & 0.29 & 1.22 & 7.76 (20) & 12.7 (20) & 12.9 & 83.6 & 20.5 (20) \\
ChlorineConcentration & 8k & 0.08 & 20/20 (20+0) & 37\,MB & 0.22 & 65\,MB & 0.25 & 1.18 & 7.18 (20) & 12.8 (20) & 13.0 & 84.8 & 10.4 (20) \\
TwoLeadECG & 8k & 0.07 & 20/20 (20+0) & 37\,MB & \textbf{0.15} & 42\,MB & 0.14 & 0.64 & 4.62 (20) & 7.73 (20) & 7.87 & 49.8 & 7.38 (20) \\
ECG5000 & 8k & 0.06 & 20/20 (20+0) & 37\,MB & 0.14 & 38\,MB & 0.15 & 0.62 & 4.61 (20) & 7.93 (20) & 8.02 & 49.1 & 9.29 (20) \\
FordA & 8k & 0.06 & 20/20 (20+0) & 37\,MB & 0.20 & 55\,MB & 0.25 & 1.16 & 7.54 (20) & 13.2 (20) & 13.4 & 81.6 & 166 (20) \\
Mallat & 8k & 0.07 & 20/20 (15+5) & 44\,MB & 0.21 & 70\,MB & 0.26 & 1.29 & 7.79 (20) & 13.0 (20) & 13.0 & 85.3 & 295 (20) \\
\midrule
\multicolumn{14}{@{}l}{\emph{Recovery suite --- 64k, lengths 30--80}} \\
ChlorineConcentration & 64k & 1.05 & 20/20 (20+0) & 305\,MB & 2.27 & 451\,MB & 6.04 & 6.04 & -- & -- & -- & -- & -- \\
Earthquakes & 64k & 1.01 & 20/20 (20+0) & 305\,MB & 2.68 & 458\,MB & 6.07 & 5.29 & -- & -- & -- & -- & -- \\
ECG5000 & 64k & 0.80 & 20/20 (12+8) & 202\,MB & 2.08 & 364\,MB & 6.19 & 5.93 & -- & -- & -- & -- & -- \\
FordA & 64k & 0.70 & 20/20 (15+5) & 203\,MB & 1.97 & 363\,MB & 6.13 & 5.74 & -- & -- & -- & -- & -- \\
Mallat & 64k & 0.96 & 20/20 (17+3) & 317\,MB & 4.03 & 490\,MB & 6.29 & 6.18 & -- & -- & -- & -- & -- \\
StarLightCurves & 64k & 0.92 & 20/20 (20+0) & 365\,MB & 2.20 & 530\,MB & 6.30 & 5.99 & -- & -- & -- & -- & -- \\
Wafer & 64k & 0.78 & 20/20 (16+4) & 198\,MB & 1.99 & 359\,MB & 6.34 & 5.36 & -- & -- & -- & -- & -- \\
\midrule
\multicolumn{14}{@{}l}{\emph{Full natural lengths (capped at 1M)}} \\
ItalyPowerDemand & 1,608 & 0.01 & 20/20$^\dagger$ (20+0) & 37\,MB & 0.02 & 37\,MB & 0.04 & 0.15 & -- & -- & -- & -- & -- \\
ECG200 & 9,600 & 0.07 & 20/20$^\dagger$ (16+4) & 37\,MB & 0.17 & 38\,MB & 0.25 & 0.81 & -- & -- & -- & -- & -- \\
TwoLeadECG & 93,398 & 1.20 & 20/20$^\dagger$ (12+8) & 273\,MB & 2.38 & 407\,MB & 7.34 & 4.55 & -- & -- & -- & -- & -- \\
Wafer & 152k & 2.88 & 20/20$^\dagger$ (0+20) & 634\,MB & 5.98 & 1.0\,GB & 31.7 & 11.7 & -- & -- & -- & -- & -- \\
Earthquakes & 164,864 & 3.65 & 20/20$^\dagger$ (19+1) & 793\,MB & 7.06 & 1.2\,GB & 35.0 & 11.4 & -- & -- & -- & -- & -- \\
ECG5000 & 630k & 10.7 & 20/20$^\dagger$ (8+12) & 1.8\,GB & 19.6 & 2.8\,GB & 299 & 39.5 & -- & -- & -- & -- & -- \\
ChlorineConcentration & 637,440 & 17.0 & 20/20$^\dagger$ (3+17) & 2.7\,GB & 31.8 & 4.2\,GB & 473 & 67.5 & -- & -- & -- & -- & -- \\
FordA & 660k & 18.1 & 20/20$^\dagger$ (13+7) & 3.1\,GB & 33.6 & 4.7\,GB & 501 & 79.2 & -- & -- & -- & -- & -- \\
StarLightCurves & 1M & 20.7 & 20/20$^\dagger$ (1+19) & 4.6\,GB & 53.1 & 7.2\,GB & 1,213 & 144 & -- & -- & -- & -- & -- \\
Mallat & 1M & 23.7 & 20/20$^\dagger$ (0+20) & 4.6\,GB & 35.9 & 7.1\,GB & 1,213 & 143 & -- & -- & -- & -- & -- \\
\midrule
\multicolumn{14}{@{}l}{\emph{Length scaling --- Wafer (tiled beyond 152k), lengths 30--80}} \\
Wafer & 8k & 0.07 & 20/20$^\dagger$ (17+3) & 65\,MB & 0.24 & 94\,MB & 0.31 & 1.11 & -- & -- & -- & -- & -- \\
Wafer & 16k & 0.14 & 20/20$^\dagger$ (17+3) & 91\,MB & 0.41 & 139\,MB & 0.73 & 1.97 & 16.3 & 27.1 & 27.9 & 167 & 30.9 \\
Wafer & 32k & 0.32 & 20/20$^\dagger$ (19+1) & 140\,MB & 0.89 & 227\,MB & 2.04 & 3.23 & 34.0 & 57.5 & 60.5 & 335 & 106 \\
Wafer & 64k & 0.82 & 20/20$^\dagger$ (10+10) & 240\,MB & 2.03 & 400\,MB & 6.44 & 5.77 & 75.3 & 122 & 134 & 667 & 186 \\
Wafer & 128k & 2.17 & 20/20$^\dagger$ (0+20) & 567\,MB & 4.75 & 889\,MB & 22.3 & 10.2 & -- & -- & -- & -- & -- \\
Wafer & 256k & 5.80 & 20/20$^\dagger$ (0+20) & 1.2\,GB & 15.0 & 1.8\,GB & 83.9 & 24.1 & -- & -- & -- & -- & -- \\
Wafer & 512k & 13.4 & 20/20$^\dagger$ (0+20) & 2.4\,GB & 30.4 & 3.6\,GB & 316 & 59.7 & -- & -- & -- & -- & -- \\
Wafer & 1M & 28.5 & 20/20$^\dagger$ (0+20) & 4.6\,GB & 62.7 & 7.1\,GB & 1,204 & 148 & -- & -- & -- & -- & -- \\
Wafer & 2M & 62.2 & 20/20$^\dagger$ (0+20) & 9.3\,GB & 133 & 14.1\,GB & 4,744 & 451 & -- & -- & -- & -- & -- \\
Wafer & 5M & 172 & 20/20$^\dagger$ (0+20) & 23.3\,GB & 363 & 35.5\,GB & 28,638 & 2,299 & -- & -- & -- & -- & -- \\
\bottomrule
\end{tabular}

\end{table*}

\subsection{Emit-Time Completion}
\label{sec:emit}

\looseness=-1
After the streaming pass and before the motifs are reported, the
emit stage runs once to close the two gaps the budgeted pass left.
A bucket holding more entries than the budget $R$ was never scanned
in full. The anchor stride left
the lengths between anchors without any online comparison. Emit
closes the first gap with a hot-class sieve and the second with
local completion, both through exact comparisons only.

\paragraph{Hot classes ($\mathcal S_{\mathrm{hot}}$).}
\looseness=-1
In motif discovery, common patterns repeat, such as repeated
heartbeats. \panache{} piles similar windows of the same length
into the same bucket, so the bucket spills past the verification
budget, and because these windows share the same signature on every
bit, no prefix split can separate them. Somewhere in that bucket
may sit a motif that went unexamined, since every arriving window
scanned at most $R$ of the bucket's entries during the stream. A
hot class is a bucket that remains larger than the verification
budget after every usable prefix split. To find the unexamined
motifs, the $r$ spilled members of the hot class are considered.
Each member's normalized spectral state is projected onto one
seeded Gaussian direction $g$, the inner product $g\cdot F_x$
collapsing every member $x$ to one scalar, and the hot class is
sorted by it. Every pair of members that sits within $w$ positions
of each other in this sorted line-up then receives a full exact
z-normalized distance evaluation:
\[
\mathcal S_{\mathrm{hot}}(C)
=\{(a,\langle\pi(u),\pi(v)\rangle): 0<|u-v|\le w,
|\pi(u)-\pi(v)|\ge\mathit{excl}\},
\]
where $\pi$ is the sorted order of the hot class $C$. The cost per
class is $O(r\log r+rwm)$ rather than $O(r^2m)$, and profile cells
are lowered only through exact distances.

\begin{lemma}[Sieve adjacency]
\label{lem:sieve}
Let $(u,v)$ be the closest pair of a class $C$ at feature distance
$\delta$, and let
$r_x=\min(\lVert x-u\rVert,\lVert x-v\rVert)$ for other members $x$.
Under a Gaussian projection, the expected number of members landing
between $u$ and $v$ in sorted order is at most
$\frac{\delta}{2}\sum_x 1/r_x$, and the probability that the pair is
separated by more than $w$ positions is at most this quantity
divided by $w$ (proof in Appendix~\ref{app:proofs}).
\end{lemma}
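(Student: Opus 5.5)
Let $g$ be a standard Gaussian vector and write $X_y=g\cdot F_y$ for the projected scalar of member $y$. A member $x$ lands between $u$ and $v$ in sorted order exactly when $X_x$ lies in the interval between $X_u$ and $X_v$. The plan is to bound this probability for each $x$, sum by linearity of expectation, and then apply Markov's inequality.

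\emph{Reduction to two Gaussians.} Assume without loss of generality that $\lVert x-u\rVert\le\lVert x-v\rVert$, so $r_x=\lVert x-u\rVert$; the other case is symmetric. Set $A=X_v-X_u$ and $C=X_x-X_u$. These are jointly Gaussian with mean zero and variances $\delta^2$ and $r_x^2$. The member $x$ lies between $u$ and $v$ iff $C$ lies between $0$ and $A$, that is, $0\le C\le A$ or $A\le C\le 0$.

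\emph{Bounding the event.} First condition on $A=s$. The conditional law of $C$ is Gaussian, but its mean depends on the correlation, so its density is not centered. I would therefore use a uniform density bound instead: the density of $C$ given $A$ is at most $1/(\sqrt{2\pi}\,\sigma_{C|A})$, but $\sigma_{C|A}$ can be tiny when $x$ is nearly collinear with $u,v$, so this does not directly work.

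A cleaner route uses rotational invariance. The event depends only on the angle and lengths in the plane spanned by $v-u$ and $x-u$, with $g$ projected onto that plane as an isotropic 2D Gaussian. Write the projection as $R\cos\phi$ along a random direction $\phi$. The event ``$C$ between $0$ and $A$'' then becomes a condition on $\phi$ alone: $(x-u)\cdot e_\phi$ must lie between $0$ and $(v-u)\cdot e_\phi$. For $x$ to be between, we need $|\langle x-u,e_\phi\rangle|\le|\langle v-u,e_\phi\rangle|\le\delta|\cos\psi|$, with $e_\phi$ nearly orthogonal to $x-u$. The set of such directions has angular measure $O(\delta/r_x)$. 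More precisely, $|\cos\angle(e_\phi,x-u)|\le\delta/r_x$ holds on an arc of total measure about $4\arcsin(\delta/r_x)$ out of $2\pi$, and the sign constraint halves this, giving probability at most $\frac{2}{\pi}\arcsin(\delta/r_x)$. To reach the constant $\frac{\delta}{2r_x}$, I would sharpen this by noting that both the sign condition and the magnitude condition must hold, and that $\delta\le r_x$ because $(u,v)$ is the closest pair. Tightening the constant from $2/\pi\cdot\pi/2$ is the expected obstacle. I would first try integrating the exact 2D angular condition $\langle x-u,e\rangle\in[0,\langle v-u,e\rangle]$, which yields an angle equal to the angle subtended at the origin-shifted geometry, bounded by $\arctan$ of $\delta/r_x$ over $\pi$. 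If needed, I would accept a constant of this form and use $\delta\le r_x$ together with $\theta\le\tan\theta$.

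\emph{Summing and Markov.} Linearity gives $\mathbb E[N]\le\frac{\delta}{2}\sum_x 1/r_x$, where $N$ is the number of members between $u$ and $v$. The pair is separated by more than $w$ positions only if $N\ge w$, so Markov's inequality gives $\Pr[N\ge w]\le\mathbb E[N]/w$, which is the second claim.
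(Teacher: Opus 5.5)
\textbf{Verdict: there is a genuine gap.} Your outer structure is the same as the paper's, and the Markov step is fine: a per-member probability bound, then linearity of expectation, then Markov's inequality for the $w$-separation claim. What is missing is the per-member bound $\Pr[x \text{ lands between } u,v]\le\delta/(2r_x)$, which is the real content of the lemma. Your proposal never establishes it.
\begin{itemize}
\item \emph{The arc estimate is a factor of two short.} The computation you actually carry out gives $\frac{2}{\pi}\arcsin(\delta/r_x)\le\delta/r_x$. An arc of measure $4\arcsin(\delta/r_x)$ out of $2\pi$ already has probability $\frac{2}{\pi}\arcsin(\delta/r_x)$, so the value you report after ``halving'' is the unhalved one.
\item \emph{The halving is not justified, and the sign condition alone does not give it.} Take a nearly equilateral triple with $\delta=0.95\,r_x$ and $v-u$ at $62^\circ$ from $x-u$. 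There, the directions on which $\langle e,x-u\rangle$ and $\langle e,v-u\rangle$ share a sign cover more than half of each arc.
\item \emph{The $\arctan$ fallback is false.} The isosceles configuration $\lVert x-u\rVert=\lVert x-v\rVert=r_x$ subtends $2\arcsin(\delta/2r_x)$. This exceeds $\arctan(\delta/r_x)$ for every $0<\delta\le r_x$; at $\delta=r_x$ it is $\pi/3$ versus $\pi/4$. So the bound $\delta/(\pi r_x)$ you would derive from it fails there: $1/3>1/\pi$.
\item Settling for ``a constant of this form'' would prove a weaker lemma than the one stated.
\end{itemize}

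\textbf{How to close it.} The missing idea makes the arc bookkeeping unnecessary.
\begin{enumerate}
\item The member $x$ lies strictly between $u$ and $v$ exactly when $\langle g,x-u\rangle$ and $\langle g,x-v\rangle$ have opposite signs. Fact~\ref{fact:simhash} then gives the probability exactly as $\theta/\pi$, with $\theta=\angle(x-u,x-v)$, the angle at which $x$ sees the segment $uv$. You glimpsed this (``the angle subtended'') but did not use it.
\item Bound $\theta$. Set $a=\lVert x-u\rVert=r_x\le b=\lVert x-v\rVert$. By the law of cosines, $\cos\theta=(a^2+b^2-\delta^2)/(2ab)$, and $\partial_b\cos\theta=(b^2-a^2+\delta^2)/(2ab^2)>0$ for $b\ge a$. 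Hence $\theta$ is largest at $b=r_x$, where $\sin(\theta/2)=\delta/(2r_x)$.
\item Convexity of $\arcsin$ on $[0,1]$ gives $\arcsin z\le\frac{\pi}{2}z$. Therefore $\theta/\pi\le\frac{2}{\pi}\arcsin(\delta/2r_x)\le\delta/(2r_x)$.
\end{enumerate}
Summing over $x$ and applying Markov, as you planned, completes the proof.
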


The lemma bounds how many members can land between the closest
pair in the sorted order, so a small width $w$ suffices, and the
pair that went unexamined surfaces at its exact distance in the
final top-$k$ selection.

\paragraph{Local completion ($\mathcal S_{\mathrm{local}}$).}
To evaluate the non-anchor lengths, local completion uses seeds.
Each round takes as seeds the $M$ current best motif candidates of
the induced profile stack. A seed has a length $m_0$, a start $i_0$,
and a stored neighbor $j_0$.
Re-cutting this pair at nearby lengths
and starts usually keeps it close (\S\ref{sec:anchors}). In one
round,
\panache{} computes $\dist_m(i,j)$ exactly for every length $m$
within the radius $\Delta_m$ of $m_0$ and all starts $i$ and $j$
within $\Delta_p$ of $i_0$ and $j_0$. These are the admissible triples
\[
\{(m,i,j): |m-m_0|\le\Delta_m,\ |i-i_0|\le\Delta_p,\
|j-j_0|\le\Delta_p,\ |i-j|\ge\mathit{excl}\},
\]
\looseness=-1
clipped to valid ranges. Every cell $(m,i)$ or $(m,j)$ that
improves is overwritten with the new exact value and stores its
improving partner as a \emph{hint}. Because a non-anchor length
carries no index neighbor of its own, a cell draws the neighbor $j$
it re-cuts from the hash index when its length is anchored and from
this stored hint otherwise, the ``index or hint'' of the emit
refinement loop. The seeds are re-selected after every round, so a
discovery spreads one radius at a time. No round rescans the series or any length row. A seed costs at
most $(2\Delta_m{+}1)(2\Delta_p{+}1)^2$ exact distance computations
per round. At most $P_{\max}$ rounds run, stopping earlier once no
cell improves. Refinement therefore performs a fixed constant of
work per seed, independent of both $n$ and $L$
(Figure~\ref{fig:hero}C). Distances below $\varepsilon_0$ are
reported as exactly zero. Otherwise, values identical up to rounding
noise would order arbitrarily in the final top-$k$ sort. No distance
is interpolated. Every written cell comes from the raw windows at
its own length, so Proposition~\ref{prop:sound} applies to the final
output.

\looseness=-1
This completes the algorithm. The evaluated set
$\mathcal S=\mathcal S_{\mathrm{stream}}\cup\mathcal S_{\mathrm{hot}}
\cup\mathcal S_{\mathrm{local}}$ is now assembled, and \panache{}
reports the top-$k$ cells of the induced profile
$(P^{\mathcal S},I^{\mathcal S})$ under the $m/2$ overlap rule of
\S\ref{sec:background} as the discovered motifs.

\subsection{Complexity and Memory}
\label{sec:cost}

Per sample, each anchor spends $O(K)$ on the sliding state, $O(BK)$
on the signature, and at most $R$ candidate inspections. If a
fraction $\varphi$ of inspected candidates reaches exact evaluation,
the online work is
$O\!\left(n|A|\bigl(BK+R(K+\varphi m)\bigr)\right)$
for $|A|=\lceil L/s\rceil$.
The emit stage adds
\[
  \sum_{C\in\mathcal H}O(|C|\log|C|+|C|wm)
  +O\!\left(P_{\max}M(2\Delta_m+1)(2\Delta_p+1)^2m\right)
\]
over the hot classes $\mathcal H$ and the $M$ seeds refined per
round.
\looseness=-1
For a fixed length interval and configuration, the online work is
linear in $n$ and the emit stage is bounded by $O(n(\log n+wm))$, so
the total runtime is near-linear in the series length. Memory is
$O(n|A|(K+B)+nL+n)$,
covering the cached features, signatures and hash metadata, profile
rows, and retained raw samples. A materialized pan-profile has
$\Theta(nL)$ cells, so space linear in $n$ applies to any method that
produces this object; the feature cache dominates the constants. At
the largest reported run ($n=5\times10^6$, $L=51$, $|A|=26$, $K=16$),
the feature cache is approximately 15.6\,GB and peak memory is
23.3\,GB in the streaming pass and 35.5\,GB at full
materialization (Table~\ref{tab:master}, \S\ref{sec:conclusion}).

\section{Evaluation}
\label{sec:eval}
\label{sec:setup}

\looseness=-1
All CPU runs use one GCP \texttt{c3-standard-44} instance (44
vCPU, 176\,GB RAM). All methods use
their public implementations and all available threads. The data are
ten UCR datasets~\cite{dau2019ucr}, converted to float64. Wafer
scaling uses lengths 30--80 from 8k to 5M samples. Points beyond the
natural 152k length are tiled.
Recovery is scored against exact references. For ten natural-or-8k
and seven 64k configurations, exact fixed-exclusion ground truth is
computed with \stumpy{}. The full-length and Wafer scaling rows
(marked $\dagger$ in Table~\ref{tab:master}) are scored against an
exact reference computed with the official \scamp{}-GPU
implementation (NVIDIA H100 80GB, \texttt{pyscamp} 4.0.3) under the
same fixed-exclusion rule, after reconciling \scamp{}'s native
exclusion. \panache{} timings are always c3 CPU timings.

\looseness=-1
\panache{} uses the occupancy target $\alpha=16$. The verification
budget $R=\gamma\alpha=128$ ($\gamma=8$) is derived from it. The
target was left at its default in every reported experiment. Every
data-dependent quantity is computed by the algorithm itself. The
spectral width $K\in[8,16]$ comes from the calibration of
\S\ref{sec:spectra}, and the directory depth and probe count come
from the occupancy rules of \S\ref{sec:hashing}. The remaining
constants are fixed by design and were never changed in the study.
They are anchor stride $s=2$, batch size $\beta=\mathit{excl}=8$,
$B=24$ SimHash bits, probe reference depth $d_0=12$, energy threshold
$\eta=0.85$, sieve width $w=8$, refinement radii
$(\Delta_m,\Delta_p)=(6,4)$, zero-snap threshold
$\varepsilon_0=10^{-6}$, and at most eight refinement rounds, though
the sweeps show the reported \panache{} could run $7.9\times$ faster
at full recovery (Appendix~\ref{app:params}).

\looseness=-1
Reported timings are deliberately baseline-favorable. Each public
API is charged for its least expensive multi-length workflow.
Fixed-length methods are run once per length and their
times summed. Wrapper logic, JIT warmups, motif extraction and
materialization, and scoring are excluded from the measured time
of every baseline, \textbf{but not from \panache{}'s}. Its emit
stage, which materializes the profile rows, is charged in the t.\
mat.\ and mem.\ mat.\ columns of Table~\ref{tab:master}, and for a
like-for-like comparison with the baselines we also report the
time and memory columns without it, the streaming pass. When a
package lacks
documented fixed-exclusion control or comparable raw top-$k$
output, timing alone is reported. The exact fixed-length
baselines are \scamp{}~\cite{zimmerman2019scamp} and
\stumpy{}~\cite{law2019stumpy}. SCRUMP is the anytime
baseline implementing SCRIMP++~\cite{zhu2018scrimp}. STIMP is the
native pan-profile baseline~\cite{madrid2019skimp}. L-\textsc{Stumpi}
streams one length at a time.
\attimo{}~\cite{ceccarello2022attimo} runs once per length with
$k=20$. Recovery compares the exact and emitted top-20 cells at the same
length. A recovered motif is a \emph{position match} if the emitted
start lies within $m/2$ of the exact start at the same length. A
recovered motif is a \emph{value match} if it is not a position match
but the best emitted distance at that length is within $5\%$ relative
or $0.05$ absolute of the exact motif's distance.
Table~\ref{tab:recovery-stats}
in Appendix~\ref{app:recovery} reports the measured deviations on
every reference-scored row.

\subsection{Main Results}
\label{sec:mainresults}

\looseness=-1
Table~\ref{tab:master} gives the complete measured result. With one
configuration, \panache{} recovers every scored top-20 motif,
340/340 against exact \stumpy{} ground truth (310 at the
exact position) and 740/740 across the full benchmark matrix. The
largest near-duplicate
hash class, StarLightCurves at 64k, is
recovered by the hot-class sieve with 20/20 positions.
In the streaming pass, \panache{} is the fastest on every benchmark
row against every baseline. Against the exact baselines its median
speedup is $6.6\times$ over \scamp{} CPU (range $2.0$--$166.5\times$)
and $7.3\times$ over \scamp{}-GPU ($3.1$--$21.0\times$), and it runs
$90$--$1360\times$ faster than the approximate and single-length
streaming baselines (per-baseline in Table~\ref{tab:master}).
Charging its emit stage too, the exact-baseline medians become
$2.9\times$ over both. The one exception is TwoLeadECG 8k, where
\panache{}'s pass is faster but full materialization takes 13\,ms
more than \scamp{} CPU's reported $0.14$\,s, which covers the scan
but not materialization (bold in Table~\ref{tab:master}). On the
seven full-length rows over 150k samples the median speedup over the
fastest CPU baseline is $27.8\times$ streaming and $14.9\times$
materialized, so the gap widens with sample size: \panache{} reports
the motifs of Wafer 5M in 6.0 minutes (pass in 2.9), against 7.95
hours for \scamp{} CPU and 38.3 minutes for \scamp{}-GPU.

\section{Related Work and Discussion}
\label{sec:discussion}

\looseness=-1
The matrix profile line~\cite{yeh2016matrix,yeh2018unifying,
zhu2016matrix2,zhu2018scrimp,zimmerman2019scamp,law2019stumpy} made
fixed-length motif discovery exact and fast: its recurrences reuse
dot products as equal-length windows slide, but restart when the
length changes. \panache{} keeps their guarantee, an exact distance
in every written cell, and replaces per-length enumeration with one
streaming selection over all lengths.
Streaming and multi-length coverage have not coexisted. A streaming
method fixes one length up front, because keeping a cell exact means
comparing each arriving window against all earlier ones at a cost
that grows with the stream; each concedes something: an exact online
tracker~\cite{mueen2010online} keeps only a recent window; an
unbounded-stream tracker~\cite{begum2014rare} keeps the full past but
probabilistically, so a pair can be missed; LAMP
replaces the exact profile with an offline-trained
model~\cite{zimmerman2019lamp}; DAMP streams discords, not
motifs~\cite{lu2022damp}; and the matrix-profile
index~\cite{shahcheraghi2021mpindex} serves one length at a time.
Covering an interval with any of them takes $L$ runs. Multi-length
methods make the opposite tradeoff, covering all lengths but only
offline: SKIMP and STIMP schedule one profile per
length~\cite{madrid2019skimp}, and VALMOD and MOEN prune that
schedule with cross-length
bounds~\cite{linardi2018valmod,linardi2020valmodsuite,mueen2013moen}.
Others change the problem itself: MADRID targets discords, not
motifs~\cite{lu2023madrid}; ULISSE indexes variable-length queries,
not self-joins~\cite{linardi2018ulisse}; a batch method interpolates
the pan-profile under unnormalized distance~\cite{zhang2022linkump};
and grammar-, motiflet-, and warping-based methods redefine the motif
model~\cite{senin2018grammarviz,gao2019hime,
schafer2022motiflets,vanwesenbeeck2024locomotif}. None produces a
one-pass z-normalized pan-profile with exact-distance cells.
Streaming spectral summaries with bucketed search predate the matrix
profile: StatStream maintains DFT coefficients incrementally and
reports correlated stream pairs from a
grid~\cite{zhu2002statstream,cole2005uncoop}, and hashed spectral
fingerprints locate repeating segments in
audio~\cite{wang2003shazam} and, via LSH, in seismic
records~\cite{yoon2015earthquake}. All monitor correlations or
near-duplicates at one window configuration, never z-normalized
nearest-neighbor profiles over a length interval. Among motif
methods, \attimo{}~\cite{ceccarello2022attimo} is closest: LSH-based
top-$k$ discovery at one length with probabilistic guarantees,
extended to multidimensional series by
MOMENTI~\cite{ceccarello2025momenti}, but a PMP needs one run per
length. \panache{} instead processes the whole interval in one pass,
its online AC spectrum serving at once as hash key and pruning
certificate, every written cell exact under a bit-deterministic
schedule. Its remaining ingredients are
classical: SimHash and multi-probe
search~\cite{charikar2002estimating,lv2007multiprobe}, extendible
hashing~\cite{fagin1979extendible}, random
projections~\cite{buhler2002finding,chiu2003probabilistic}, and
truncated Fourier features~\cite{agrawal1993efficient,faloutsos1994fast,
rafiei1998efficient}, maintained online per window across all
lengths.

\section{Conclusion}
\label{sec:conclusion}

\looseness=-1
\panache{} replaces the $L$ fixed-length self-joins of PMP motif
discovery with one streaming pass. Every written cell is an exact distance
to a valid neighbor, and the benchmark shows full top-20 recovery
at the lowest wall time. The limitation is memory. Features,
signatures, hash metadata, samples, and profile rows grow linearly
with the stream (35.5\,GB at 5M samples). Future work will address
memory as a binding resource and a GPU implementation.

\bibliographystyle{ACM-Reference-Format}
\bibliography{refs}

\clearpage
\appendix
\section{Update and Emit Pseudocode}
\label{app:pseudocode}

Algorithm~\ref{alg:insert} lists one batch of the streaming update
(\S\ref{sec:batching}). Phase~A advances every anchor's sliding
state and inserts the new signatures, splitting any directory whose
population exceeds its occupancy bound
(Prop.~\ref{prop:occupancy}). Phase~B forms candidates from the
exact-prefix bucket and its margin-ordered Hamming-1 neighbors,
rejects them with the Parseval bound (Lemma~\ref{lem:parseval}),
computes exact distances for the survivors, and buffers the cell
updates privately. Phase~C merges the buffers in a fixed order,
which is what makes the output independent of the thread schedule
(Prop.~\ref{prop:det}). Algorithm~\ref{alg:emit} lists the emit
stage (\S\ref{sec:emit}). It runs the hot-class sieve, then
refinement to a fixpoint, then the final zero-snap and top-$k$
selection. As
throughout, $R=\gamma\alpha$ is the per-query verification budget.

\begin{algorithm}[h]
\caption{\textsc{Update}: one batch at the anchor lengths}
\label{alg:insert}
\begin{algorithmic}[1]
\footnotesize
\Require batch endpoints $\tau_0,\ldots,\tau_1$,
         $\tau_1-\tau_0+1=\beta\le\mathit{excl}$; anchors $A$
\State publish the $\beta$ samples \Comment{batch pairs invalid (\S\ref{sec:batching})}
\ForAll{$a\in A$ \textbf{in parallel}} \Comment{phase A: state}
  \For{$\tau=\tau_0,\ldots,\tau_1$}
    \If{$\tau+1<a$} \textbf{continue} \EndIf
    \State $q\gets \tau-a+1$
    \State update moments and $X_{1..K-1}$; form $F_{q,a}$
      \Comment{Lemma~\ref{lem:ac}}
    \State compute $b_a(q)$ and margins; insert $q$ by prefix
    \If{population $>\alpha2^{d_a}$} split to depth $d_a+1$
      \Comment{Prop.~\ref{prop:occupancy}}
    \EndIf
  \EndFor
\EndFor
\ForAll{$(a,\tau)\in A\times\{\tau_0,\ldots,\tau_1\}$ \textbf{in parallel}}
  \Comment{phase B: candidates}
  \If{$\tau+1<a$} \textbf{continue} \EndIf
  \State $q\gets \tau-a+1$
  \State form $Q_a(q)$ from exact and margin-ordered Hamming-1 prefixes
  \ForAll{$c\in Q_a(q)$}
    \If{$\pbound_{K,a}(q,c)\ge \max(\mathit{best}_q,\hat P_a[c])$}
      \textbf{continue} \Comment{Lemma~\ref{lem:parseval}}
    \EndIf
    \State $\delta\gets\dist_a(q,c)$ exactly
    \State update $\mathit{best}_q$; buffer candidate-endpoint update
  \EndFor
\EndFor
\ForAll{$a\in A$, fixed order} \Comment{phase C: merge}
  \State apply the query and candidate endpoint minima
\EndFor
\end{algorithmic}
\end{algorithm}

\begin{algorithm}[h]
\caption{\textsc{Emit}: sieve, refine to fixpoint, report}
\label{alg:emit}
\begin{algorithmic}[1]
\footnotesize
\ForAll{anchor $a$, bucket $C$ with $|C| > R$}
  \Comment{sieve}
  \State sort $C$ by $\langle g, F_\cdot\rangle$; verify each member
    against $w$ sort neighbors; min-update $(\hat{P}_a, \hat{I}_a)$
\EndFor
\Repeat \Comment{refinement, $\le P_{\max}$ rounds}
  \State $\mathcal{A} \gets$ top cells of the current profiles
  \ForAll{$(m, i) \in \mathcal{A}$ with neighbor $j$ (index or hint)}
    \State recheck $\{m \pm \Delta_m\} \times \{i \pm \Delta_p\}
      \times \{j \pm \Delta_p\}$ exactly; on improvement write cell
      and hint
  \EndFor
\Until{no improvement}
\State snap $\dist < \varepsilon_0$ to 0; \Return top-$k$ under the
  $m/2$ overlap rule
\end{algorithmic}
\end{algorithm}

\section{Notation}
\label{app:notation}

Table~\ref{tab:notation} collects the symbols used throughout the
paper. Calligraphic letters denote sets of comparison triples, and
hatted symbols denote maintained quantities, as opposed to their
exact counterparts.

\begin{table}[h]
\caption{Symbols used throughout the paper.}
\label{tab:notation}
\footnotesize
\begin{tabular}{@{}ll@{}}
\toprule
$T$, $n$ & time series and its length \\
$m\in[m_{\min},m_{\max}]$, $L$ & window length; interval size \\
$T_{i,m}$, $Z_{i,m}$ & window at start $i$; its z-normalized form \\
$\dist_m(i,j)$ & z-normalized Euclidean distance \\
$\mathit{excl}$ & exclusion radius for valid pairs \\
$P_m$, $I_m$; $\hat P_m$, $\hat I_m$ & exact and maintained profile rows \\
$\mathcal U$; $\mathcal S$; $P^{\mathcal S}$ & comparison universe; evaluated set; induced profile \\
$\tau$; $q$ & stream time (latest sample); window start \\
$A$, $a$, $s$ & anchor set; an anchor length; anchor stride \\
$K$, $K'$ & retained spectral width; aliasing-safe cap \\
$F_{i,m}$ & normalized non-DC spectral feature vector \\
$\pbound_{K,m}(i,j)$ & Parseval lower bound on $\dist_m(i,j)$ \\
$B$; $b_a(i)$; $g_{a,\ell}$ & signature width; a signature; a projection \\
$d$, $d_0$ & directory depth; probe reference depth \\
$\alpha$; $\gamma$; $R=\gamma\alpha$ & occupancy target; multiplier; verification budget \\
$\beta$ & micro-batch size ($\beta\le\mathit{excl}$) \\
$w$ & sieve neighbor width \\
$\Delta_m$, $\Delta_p$, $P_{\max}$ & refinement radii and round cap \\
$\eta$; $\theta_K$ & energy-capture threshold; sampled feature angle \\
$\lambda$; $\varepsilon_0$ & bound shrink factor; zero-snap threshold \\
$\varphi$ & fraction of candidates reaching exact evaluation \\
\bottomrule
\end{tabular}
\end{table}

\section{Deferred Proofs}
\label{app:proofs}

This appendix completes every claim the paper states without
proof. Six statements of \S\ref{sec:algorithm} defer their proofs
here, one lemma appears here for the first time, and one step that
\S\ref{sec:spectra} uses without derivation is carried out in
full. Lemma~\ref{lem:parseval} states that the truncated spectral sum of
\S\ref{sec:spectra} never exceeds the exact z-normalized distance,
and that the cap $K'$ prevents a frequency bin and its mirror image
from being counted twice when the window is shorter than twice the
retained width. Proposition~\ref{prop:widthfree} states that no
spectral width can invalidate an output, because the bound stays a
lower bound at every width and a width change alters only the set
of evaluated pairs. Proposition~\ref{prop:occupancy} states that
splitting a directory whenever its population crosses the occupancy
threshold keeps the mean bucket size near the target $\alpha$, at
rehoming cost that amortizes to a constant per insertion.
Proposition~\ref{prop:recall} states that the probability that
every anchor misses a matchable pair is the product of the
per-anchor miss probabilities, because each anchor draws its own
projections. Lemma~\ref{lem:sieve} states that the closest pair of
a hot class lands within a few positions of each other when the
class is sorted by one random projection. Proposition~\ref{prop:det}
states that the emitted motifs are identical for every thread count
and schedule. The new statement is Lemma~\ref{lem:f32}. The
feature cache is the largest memory consumer
(\S\ref{sec:cost}), so \panache{} stores features in single rather
than double precision (\S\ref{sec:spectra}). Rounding perturbs
every stored coefficient, and a bound computed from perturbed
features can come out larger than the true bound. Larger is the
dangerous direction, since an overestimated lower bound could
discard a pair whose exact distance would have survived the test.
The implementation therefore shrinks every computed bound by
$1-\lambda$ before the comparison, and the lemma proves the shrink
covers the rounding error whenever the bound is large enough to
prune at all. Single precision halves the cache, and the bound
stays a true lower bound. The proofs below derive every
inequality they use. The uncovered step, the vanishing geometric
sum in the proof of Lemma~\ref{lem:ac}, is completed first.

\begin{proof}[Geometric sum in Lemma~\ref{lem:ac}]
The proof in \S\ref{sec:spectra} uses that $G_{k,m}$, the sum of
$e^{-2\pi\mathrm{i}kr/m}$ over $r=0,\ldots,m-1$, vanishes when
$k\not\equiv 0 \pmod m$. The sum is a geometric series with ratio
$q = e^{-2\pi\mathrm{i}k/m}$. For $k \not\equiv 0 \pmod m$ the
ratio satisfies $q \ne 1$, so the series sums to
$G_{k,m} = (1-q^m)/(1-q)$, and $q^m = e^{-2\pi\mathrm{i}k} = 1$
makes the numerator zero. Hence $G_{k,m} = 0$.
\end{proof}

\begin{lemma}[Quantized bound safety]
\label{lem:f32}
With features stored at relative precision $\varepsilon$, shrinking
the computed bound by $(1 - \lambda)$ preserves
Lemma~\ref{lem:parseval} whenever the bound exceeds
$\sqrt{2m} \cdot 2\sqrt{2}\,\varepsilon / \lambda$.
\end{lemma}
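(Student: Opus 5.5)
The plan is to bound the additive error that single-precision storage introduces into the computed bound, and then show that the multiplicative shrink $(1-\lambda)$ absorbs this error once the bound is large enough. Write $\tilde F_{i,m}=F_{i,m}+e_i$ for the stored features, where each real and imaginary component carries relative error at most $\varepsilon$. The computed bound is
\[
\tilde{\pbound}=\sqrt{2/m}\,\lVert\tilde F_i-\tilde F_j\rVert ,
\]
which runs over the $K'$ retained bins. By the triangle inequality,
\[
\tilde{\pbound}\le\pbound_{K,m}(i,j)+\sqrt{2/m}\,(\lVert e_i\rVert+\lVert e_j\rVert).
\]
So the first step is to bound $\lVert e_i\rVert$ absolutely. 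Componentwise relative error gives $\lVert e_i\rVert\le\varepsilon\lVert F_i\rVert$. By Lemma~\ref{lem:ac} and the energy identity used in the calibration paragraph, the full non-DC spectrum of $Z_{i,m}$ has total energy $m^2$. By conjugate symmetry the retained half-spectrum satisfies $\lVert F_i\rVert^2\le m^2/2$. Hence $\sqrt{2/m}\,\lVert e_i\rVert\le\varepsilon\sqrt{m}$.

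Summing over both windows, the additive error is at most $2\varepsilon\sqrt m$. I will write this as $\sqrt{2m}\cdot\sqrt2\,\varepsilon$ to match the constant in the statement. The statement has an extra factor of $2$, which I expect comes from a looser accounting: for example, a factor $\sqrt2$ for complex components stored as two reals, and another $\sqrt2$ for rounding in the squared-difference accumulation. I will try to land exactly on $E:=2\sqrt2\,\varepsilon\sqrt{2m}$ as a valid (possibly loose) bound on $|\tilde{\pbound}-\pbound|$, allowing slack where needed.

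Next comes the shrink argument. From the previous step,
\[
(1-\lambda)\tilde{\pbound}\le\tilde{\pbound}-\lambda\tilde{\pbound}\le\pbound+E-\lambda\tilde{\pbound}.
\]
Whenever $\tilde{\pbound}\ge E/\lambda$, which is exactly the hypothesis $\tilde{\pbound}>\sqrt{2m}\cdot2\sqrt2\,\varepsilon/\lambda$, the last two terms satisfy $E-\lambda\tilde{\pbound}\le0$. This gives $(1-\lambda)\tilde{\pbound}\le\pbound_{K,m}(i,j)\le\dist_m(i,j)$ by Lemma~\ref{lem:parseval}, so the shrunk bound is still a true lower bound.

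For smaller computed bounds there are two cases. If the bound is used to prune at all, the comparison target $\hat P$ is at least the bound, so the regime below threshold only matters when $\hat P$ is tiny. I will state the lemma's conclusion as conditional, exactly as written, and remark on this.

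The main obstacle is the norm step: justifying a clean absolute error from relative precision. One issue is that $\varepsilon$ bounds per-component relative error, not error relative to $\lVert F\rVert$. Another is that the subtraction $F_i-F_j$ and the accumulation of the sum may themselves be done in single precision. I will first try to assume the arithmetic on the stored values is done in double precision, so only storage error matters. If that fails, I will absorb one extra rounding term into the spare constant factor.
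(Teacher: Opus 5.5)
Your proof is correct, but it argues differently from the paper. The paper works with the squared gap $D^2=\sum_k(a_k-b_k)^2$. It expands each term to first order and applies Cauchy--Schwarz with $S^2=\sum_k(|a_k|+|b_k|)^2\le 2m^2$, obtaining $|\tilde D^2-D^2|\le 2\varepsilon SD+O(\varepsilon^2)$. The shrink then absorbs this relative error of the squared bound $\frac{2}{m}\tilde D^2$, which gives the condition $D\ge 2\varepsilon S/\lambda$ on the true gap. You instead bound the unsquared computed bound additively through the triangle inequality, $|\tilde{\pbound}-\pbound_{K,m}(i,j)|\le\sqrt{2/m}\,\varepsilon(\lVert F_i\rVert+\lVert F_j\rVert)\le 2\varepsilon\sqrt m$.

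Your route has three advantages. It is non-asymptotic, since nothing of order $\varepsilon^2$ is dropped. It halves the paper's inflation bound of $4\sqrt m\,\varepsilon$. And it places the hypothesis on the computed value, which the algorithm can actually check. The paper's relative form, in turn, is tailored to a shrink applied to the squared quantity.

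Two of your hedges are unnecessary.
\begin{itemize}
\item The step you flag as the main obstacle is immediate. Summing $|e_{i,c}|^2\le\varepsilon^2|F_{i,c}|^2$ over real and imaginary components gives $\lVert e_i\rVert\le\varepsilon\lVert F_i\rVert$. Like you, the paper models storage error only, not rounding in the subtraction or accumulation.
\item The statement's extra factor $2$ comes neither from complex storage nor from accumulation rounding. It is the first-order factor between the relative errors of $\tilde D^2$ and $\tilde D$. Your final step is monotone in the additive error, so $2\varepsilon\sqrt m\le 4\varepsilon\sqrt m$ yields the stated threshold at once for a shrink of $\tilde{\pbound}$. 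If instead the shrink multiplies $\tilde{\pbound}^2$, as in the paper's accounting, it acts on $\tilde{\pbound}$ as the factor $\sqrt{1-\lambda}\le 1-\lambda/2$. Your argument then needs $\tilde{\pbound}\ge 4\varepsilon\sqrt m/\lambda$, which is exactly the threshold $\sqrt{2m}\cdot 2\sqrt2\,\varepsilon/\lambda$.
\end{itemize}
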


\begin{proof}[Proof of Lemma~\ref{lem:parseval}]
Let $\hat{A}_k$ and $\hat{B}_k$ be the DFT coefficients of the
z-normalized windows $Z_{i,m}$ and $Z_{j,m}$, and write
$t_k = |\hat{A}_k - \hat{B}_k|^2$. Parseval's theorem for the DFT
states
\[
\dist_m(i,j)^2
 = \sum_{r=0}^{m-1} \bigl(Z_{i,m}[r]-Z_{j,m}[r]\bigr)^2
 = \frac{1}{m}\sum_{k=0}^{m-1} t_k .
\]
Both windows have mean zero, so $\hat{A}_0 = \hat{B}_0 = 0$ and
$t_0 = 0$. Both windows are real, so
$\hat{A}_{m-k} = \overline{\hat{A}_k}$ and
$\hat{B}_{m-k} = \overline{\hat{B}_k}$, and taking magnitudes gives
$t_{m-k} = t_k$ for $1 \le k \le m-1$. Hence
\[
2\sum_{k=1}^{K'} t_k = \sum_{k=1}^{K'} \bigl(t_k + t_{m-k}\bigr).
\]
The cap $K' \le \lceil m/2 \rceil - 1$ is equivalent to $2K' < m$.
Under it, the index sets $\{1,\ldots,K'\}$ and
$\{m{-}K',\ldots,m{-}1\}$ are disjoint subsets of
$\{1,\ldots,m{-}1\}$, so the right side above is a sum of $2K'$
distinct terms of the full Parseval sum. Every $t_k$ is
nonnegative, so
\[
2\sum_{k=1}^{K'} t_k
 \;\le\; \sum_{k=0}^{m-1} t_k
 \;=\; m\,\dist_m(i,j)^2 .
\]
By Lemma~\ref{lem:ac}, the stored features equal the coefficients
$\hat{A}_k$ and $\hat{B}_k$, so the left side equals
$m\,\pbound_{K,m}^2(i,j)$ by the definition of the bound in
\S\ref{sec:spectra}. Dividing by $m$ and taking square roots gives
$\pbound_{K,m}(i,j) \le \dist_m(i,j)$.
\end{proof}

\begin{proof}[Proof of Proposition~\ref{prop:widthfree}]
(i) In the proof of Lemma~\ref{lem:parseval} above,
$m\,\pbound_{K,m}^2(i,j)=2\sum_{k=1}^{K'}t_k$ is a sum of $2K'$
distinct nonnegative terms of the full Parseval sum
$\sum_{k=0}^{m-1}t_k=m\,\dist_m(i,j)^2$, and no step of that
argument constrains $K$. This proves the inequality for every
$K\ge 2$. For $K_1\le K_2$ the caps satisfy $K_1'\le K_2'$, and
the sub-sum at $K_2$ contains the sub-sum at $K_1$ plus
nonnegative terms, which proves the monotonicity.
(ii) An evaluated pair's exact distance is computed from raw
samples and window moments in the closed form of
\S\ref{sec:background}. No write reads the features, and the
features are read only to propose, filter, and order candidate
pairs (\S\ref{sec:hashing}, \S\ref{sec:emit}). A different width
therefore changes only the evaluated set $\mathcal S$, and
Proposition~\ref{prop:sound} gives soundness for every
$\mathcal S\subseteq\mathcal U$.
\end{proof}

\begin{proof}[Proof of Proposition~\ref{prop:occupancy}]
Let $P$ be the number of windows in one anchor's directory and let
$d$ be its prefix depth. A further split is possible while
$d$ is smaller than the signature width $B$, since deepening the
prefix uses one signature bit beyond the current depth. While a
split is possible, the split rule fires as soon as an insertion
pushes $P$ above $\alpha 2^d$, so $P \le \alpha 2^d$ holds after
every insertion, and the mean bucket occupancy $P/2^d$ is at most
$\alpha$. Before the first split the directory may be far below
this bound. After the first split, the split that brought the
directory to its current depth $d$ fired because $P$ exceeded
$\alpha 2^{d-1}$, and $P$ never decreases, so
$\alpha 2^{d-1} < P \le \alpha 2^d$ and the mean occupancy lies in
$(\alpha/2,\alpha]$. A doubling at depth $d$ rehomes at most
$\alpha 2^d + 1$ entries. The split that reached the final depth
$D$ fired because the population then exceeded $\alpha 2^{D-1}$,
which is smaller than $P_{\mathrm{final}}$, so the total rehoming
work over all doublings is at most
\[
\sum_{j=1}^{D} \bigl(\alpha 2^j + 1\bigr)
 \;\le\; 2\alpha 2^{D} + D
 \;<\; 4\,P_{\mathrm{final}} + D .
\]
This is $O(P_{\mathrm{final}})$ moves over $P_{\mathrm{final}}$
insertions, which is $O(1)$ amortized per insertion. Once $d$ reaches the signature width or the
implementation cap, no bit remains to split on, and no further
occupancy guarantee is possible without raising that limit.
\end{proof}

\begin{proof}[Proof of Proposition~\ref{prop:recall}]
Consider a matchable pair. For each anchor $a \in M_A$, let $E_a$
be the event that anchor $a$ discovers the pair, meaning that the
two windows collide in anchor $a$'s directory and that the budgeted
scan then retains the pair. By the definitions of $p_a$ and $c_a$
in the statement, $\Pr[E_a] = p_a c_a$. Treat the series as given,
so that the only remaining randomness is in the projections
$g_{a,\ell}$. Every quantity that determines $E_a$ is a function of
the data and of anchor $a$'s own projections
$g_{a,1},\ldots,g_{a,B}$ and of nothing else: the signatures of the
two windows, the bucket each occupies, the other occupants of that
bucket, and the scan's deterministic selection within it all depend
only on these (\S\ref{sec:hashing}). The projection families of
different anchors are drawn independently, so the events
$\{E_a\}_{a\in M_A}$ are functions of mutually independent random
variables and are therefore mutually independent, and so are their
complements. Hence
\[
\Pr\Bigl[\,\bigcap_{a\in M_A} E_a^{\mathrm{c}}\Bigr]
 \;=\; \prod_{a\in M_A}\Pr\bigl[E_a^{\mathrm{c}}\bigr]
 \;=\; \prod_{a\in M_A}\bigl(1-p_a c_a\bigr),
\]
which establishes the claimed bound. \qedhere
\end{proof}

\begin{proof}[Proof of Lemma~\ref{lem:sieve}]
Write $p(y) = \langle g, y\rangle$ for the projection of a feature
vector $y$, so the sieve sorts the class by $p$. A member $x$ lands
strictly between $u$ and $v$ in the sorted order exactly when
$p(x)$ lies strictly between $p(u)$ and $p(v)$, which happens
exactly when $p(x)-p(u) = \langle g, x-u\rangle$ and
$p(x)-p(v) = \langle g, x-v\rangle$ differ in sign. By
Fact~\ref{fact:simhash}, this event has probability
$\angle(x{-}u,\,x{-}v)/\pi$ over the Gaussian draw of $g$. That
angle is the angle at which $x$ sees the segment from $u$ to $v$.
The segment has length $\delta$, and $x$ lies at distance at least
$r_x$ from both of its endpoints, so the angle is largest when both
distances equal $r_x$ exactly, the isosceles configuration with
$\sin(\theta/2) = \delta/2r_x$. The function $\arcsin$ is convex on
$[0,1]$ with $\arcsin(0)=0$ and $\arcsin(1)=\pi/2$, so
$\arcsin(z) \le (\pi/2)\,z$ there. The probability that $x$ lands
between the pair is therefore at most
\[
\frac{\theta}{\pi}
 \;\le\; \frac{2\arcsin(\delta/2r_x)}{\pi}
 \;\le\; \frac{2}{\pi}\cdot\frac{\pi}{2}\cdot\frac{\delta}{2r_x}
 \;=\; \frac{\delta}{2r_x} .
\]
Let $N$ be the number of members landing strictly between $u$ and
$v$. $N$ is a sum of indicator variables, one per other member, so
linearity of expectation gives the first claim,
\[
\mathbb{E}[N] \;\le\; \frac{\delta}{2}\sum_x \frac{1}{r_x} .
\]
If $u$ and $v$ are separated by more than $w$ positions in the
sorted order, then at least $w$ members lie strictly between them,
so that event implies $N \ge w$. Outcomes with $N \ge w$ contribute
at least $w$ to the expectation, and all other outcomes contribute
at least zero, so $\mathbb{E}[N] \ge w\Pr[N \ge w]$. Dividing by
$w$ gives the second claim, $\Pr[N \ge w] \le \mathbb{E}[N]/w$.
\end{proof}

\begin{proof}[Proof of Lemma~\ref{lem:f32}]
Let $a_k$ and $b_k$ be the feature components of the two windows,
let $D^2 = \sum_k (a_k - b_k)^2$ be the true squared feature gap,
and let $\tilde{D}^2$ be the value computed from components stored
with relative error at most $\varepsilon$. The computed bound of
\S\ref{sec:spectra} is $\frac{2}{m}\tilde{D}^2$. A stored component
equals its true value times $(1+\theta)$ with
$|\theta| \le \varepsilon$, so
\[
\bigl|(\tilde{a}_k - \tilde{b}_k) - (a_k - b_k)\bigr|
 \;\le\; \varepsilon\bigl(|a_k| + |b_k|\bigr).
\]
Squaring, each term of $\tilde{D}^2$ differs from its true value by
at most $2\varepsilon (|a_k| + |b_k|) |a_k - b_k| +
O(\varepsilon^2)$. Summing over $k$ and applying the
Cauchy--Schwarz inequality with $S^2 = \sum_k (|a_k| + |b_k|)^2$
gives
\[
\bigl|\tilde{D}^2 - D^2\bigr|
 \;\le\; 2\varepsilon \sum_k (|a_k|+|b_k|)\,|a_k - b_k|
   + O(\varepsilon^2)
 \;\le\; 2\varepsilon S D + O(\varepsilon^2),
\]
so the relative error of $\tilde{D}^2$ is at most
$2\varepsilon S / D$, and the shrink $(1-\lambda)$ absorbs it
whenever $D \ge 2\varepsilon S / \lambda$. The features are stored
in single precision, so $\varepsilon = 2^{-24}$, and the shrink in
use is $\lambda = 10^{-4}$ (\S\ref{sec:spectra}), which makes
the condition $D \ge S/839$ up to rounding. A z-normalized window
carries total spectral energy
\[
\sum_{k=0}^{m-1} |\hat{X}_k|^2 \;=\; m \sum_r Z[r]^2 \;=\; m^2,
\]
and the stored features cover at most one bin of each conjugate
pair, so over the stored bins $\sum_k |\hat{X}_k|^2 \le m^2/2$.
Since $(|a_k|+|b_k|)^2 \le 2(a_k^2 + b_k^2)$, this gives
$S^2 \le 2\,(m^2/2 + m^2/2) = 2m^2$, so $S \le \sqrt{2}\,m$ and the
condition becomes
\[
\frac{2}{m} D^2 \;\ge\; \bigl(\sqrt{2m}/600\bigr)^2,
\]
the threshold of the lemma. Below the threshold the shrink no
longer covers the relative error, and the guarantee degrades
gracefully instead of failing. Dividing
$|\tilde{D}^2 - D^2| \le 2\varepsilon S D$ by $\tilde{D}+D \ge D$
and scaling by $\sqrt{2/m}$ bounds the absolute inflation of the
computed bound by $2\varepsilon S\sqrt{2/m} \le 4\sqrt{m}\,
\varepsilon$, which is below $2.2\cdot10^{-6}$ at every benchmark
length. A comparison misjudged below the threshold therefore
leaves the affected cell within $2.2\cdot10^{-6}$ of its exact
value, an error of the same order as the zero-snap threshold
$\varepsilon_0=10^{-6}$ (\S\ref{sec:emit}) and below every
precision at which distances are reported in this paper.
\end{proof}

\begin{proof}[Proof of Proposition~\ref{prop:det}]
Fix the input stream, the configuration, and the seed. It suffices
to show that the memory state after each batch is a function of the
state before it and the batch contents alone, with no dependence on
thread count or schedule, and that the emit stage is deterministic
given that state.

Within one batch (Algorithm~\ref{alg:insert}), the phase-A workers
advance the online state of \S\ref{sec:spectra}, and each worker
owns one anchor's moments, coefficients, and directory. No two
workers touch the same memory, and writes to disjoint memory
commute, so the state after phase~A is the same set of per-anchor
updates under every interleaving. The barrier between the phases
guarantees that no phase-B unit starts before every phase-A write
has completed, so each phase-B unit reads the same post-A state
under every schedule. Phase~B runs one work unit per new window,
and the set of units is fixed by the batch contents, not by the
schedule. Each unit probes the directory, rejects candidates with
the Parseval bound, and evaluates the survivors exactly, all
deterministic functions of the post-A state and the unit's
identity, and it records the resulting cell improvements only in
its own private buffer. The buffer contents after phase~B are
therefore identical for every schedule. Phase~C merges the buffers
into the shared profile sequentially, in an order fixed by anchor
and buffer index. Its inputs are schedule-independent and its order
is fixed, so the resulting profile is as well. The min-updates use
strict inequality, so a cell changes only when the new distance is
strictly smaller. When two buffered improvements tie, the one
merged first is kept, and the fixed merge order determines which
one that is, so ties resolve identically on every run.

The state after a batch is therefore a function only of the state
before it and the batch contents. The initial state is fixed by the
configuration and the seed, and induction over the batches extends
the identity to the state after the whole stream. In the emit
stage, the hot-class sweep parallelizes only across anchors, and
each anchor's sweep writes only that anchor's own profile rows, so
the disjoint-memory argument of phase~A applies unchanged. Local
completion, the zero-snap of \S\ref{sec:emit}, and the final
top-$k$ selection run sequentially in a fixed order on this
schedule-independent state. The emitted motifs are therefore
identical for every thread count and schedule.
\end{proof}

\section{Per-Row Recovery Composition}
\label{app:recovery}

Table~\ref{tab:recovery-stats} decomposes the top-20 recovery of
Table~\ref{tab:master} for the twenty rows scored against the exact
\scamp{}-GPU reference, the $\dagger$-rows of
Table~\ref{tab:master}. For each row, the \emph{reference list} is
the top-20 motifs of that exact reference, and the \emph{emitted
list} is the top-20 that \panache{} reports; recovery scoring
(\S\ref{sec:eval}) pairs them. As defined there, a reference motif
is a \emph{position match} when \panache{} reports a start within
$m/2$ of the reference start at the same length, and a \emph{value
match} when it is not a position match but \panache{} reports that
length's best distance within the tolerance of \S\ref{sec:eval},
$5\%$ relative or $0.05$ absolute. The table gives, per row, the
count of each, the smallest distance in the reference list
(ref.\ best), and the largest gap between the two lists compared
rank by rank: for each rank $r$ in $1,\ldots,20$ the absolute
difference between the $r$-th smallest emitted distance and the
$r$-th smallest reference distance, reported over the twenty ranks
both in absolute form and relative to the reference distance at
that rank.

Two facts explain every value match in the table, that is, every
reference motif that \panache{} recovers by distance but not by
position. First, several reference lists consist partly or wholly
of distance-zero cells (ref.\ best $=0$). A distance-zero cell is a
motif whose two windows have z-normalized distance exactly zero
(\S\ref{sec:background}), so they have identical normalized shape
and are duplicates of each other. When a window has several
duplicates its nearest neighbor is not unique, more than one start
is an equally valid motif, and the reference and \panache{} may
report different ones while their distances agree to below
$10^{-3}$. Such a row also has no meaningful relative error, since
dividing by a zero reference distance is undefined, which is why
\S\ref{sec:eval} admits an absolute tolerance. The tiled Wafer rows
are the extreme case. Past its natural 152k length Wafer is
extended by repeating the series (\S\ref{sec:setup}), so every
window has an exact copy, every top-20 motif sits at distance zero,
and recovery is value-based by construction. Second, on rows whose
reference distances are nonzero, a value match is a near-miss:
\panache{} reports the motif a few positions from the reference
start, at a distance that matches within tolerance. Across the
twenty rows the largest rank-wise relative gap is 10.8\%
(TwoLeadECG) and the largest absolute gap is 0.0102 (ECG200), both
inside the \S\ref{sec:eval} tolerances. Every reference motif in
the table is therefore accounted for as a position match, a
duplicate-window tie counted by value, or a near-miss within
tolerance.

\begin{table}[H]
\caption{Per-row composition of top-20 recovery on the
$\dagger$-rows: position matches (pos), value matches (val), best
reference distance, and the largest rank-wise deviation between the
emitted and reference top-20 value lists, absolute and relative.
Relative deviation is reported over ranks with reference distance at
least $0.01$; ``--'' indicates that no rank meets this threshold.
Rows with zero position matches have reference top-20 lists
consisting entirely of distance-zero cells, where duplicate windows
make position identity non-unique; on all of them the largest value
deviation is below $10^{-3}$. The
six Wafer rows from 128k to 5M have identical statistics and are
shown as one line.}
\label{tab:recovery-stats}
\scriptsize
\setlength{\tabcolsep}{3.6pt}
\renewcommand{\arraystretch}{0.9}
\begin{tabular*}{\columnwidth}{@{\extracolsep{\fill}}lrrrrrr@{}}
\toprule
Dataset & $n$ & pos & val & ref.\ best &
$\max|\Delta\dist|$ & $\max$ rel. \\
\midrule
\multicolumn{7}{@{}l}{\emph{Full natural lengths (capped at 1M)}} \\
ItalyPowerDemand & 1{,}608 & 20 & 0 & 0.0432 & $3.3{\cdot}10^{-6}$ & $<0.1\%$ \\
ECG200 & 9{,}600 & 16 & 4 & 0.1846 & 0.0102 & 5.3\% \\
TwoLeadECG & 93{,}398 & 12 & 8 & 0.0413 & 0.0045 & 10.8\% \\
Wafer & 152{,}000 & 0 & 20 & 0 & 0 & -- \\
Earthquakes & 164{,}864 & 19 & 1 & 0 & 0 & -- \\
ECG5000 & 630{,}000 & 8 & 12 & 0.0889 & 0.0063 & 7.0\% \\
ChlorineConc. & 637{,}440 & 3 & 17 & 0 & $4.0{\cdot}10^{-4}$ & -- \\
FordA & 660{,}000 & 13 & 7 & 0.0508 & 0.0049 & 8.3\% \\
StarLightCurves & 1M & 1 & 19 & 0 & $9.3{\cdot}10^{-4}$ & -- \\
Mallat & 1M & 0 & 20 & 0 & 0 & -- \\
\midrule
\multicolumn{7}{@{}l}{\emph{Wafer scaling (tiled beyond 152k)}} \\
Wafer & 8k & 17 & 3 & 0 & 0 & -- \\
Wafer & 16k & 17 & 3 & 0 & 0 & -- \\
Wafer & 32k & 19 & 1 & 0 & 0 & -- \\
Wafer & 64k & 10 & 10 & 0 & 0 & -- \\
Wafer (6 rows) & 128k--5M & 0 & 20 & 0 & 0 & -- \\
\bottomrule
\end{tabular*}
\end{table}

\section{Parameter Study}
\label{app:params}

This study reruns the complete benchmark of Table~\ref{tab:master}
while changing one design constant at a time. Every configuration
covers all 37 reported rows, is scored against the same ground
truth as \S\ref{sec:eval}, and runs on a
\texttt{c3-standard-44} instance with the fixed seed of the
reference implementation, so every run is deterministic and every
difference between configurations is caused by the changed
constant. The control configuration is unmodified \panache{}; it
reproduces Table~\ref{tab:master} exactly, with 740 of 740 motifs
recovered and 310 of 340 exact positions on the seventeen rows with
exact \stumpy{} ground truth.

Every runtime in this study is a full-materialization time, the
t.\ mat.\ accounting of Table~\ref{tab:master}. Each run executes
the emit stage of \S\ref{sec:emit} and writes every profile row,
so that the complete output can be checked cell by cell against
ground truth, which recovery scoring requires. The
materialization work is identical in every configuration, so every
ratio below divides two such runs, the shared cost cancels, and
the ratio isolates the changed constant.

Tables~\ref{tab:abl-stride}--\ref{tab:abl-alpha} report every
benchmark row under every setting. Cells follow the format of
Table~\ref{tab:master}. Each cell gives the recovered top-20 motifs
with their split into position and value matches, and the runtime
in seconds.

\textbf{The most important takeaway of this section is that
changing a design constant does not affect recovery. Only runtime
responds.} All 22 single-constant settings recover 740 of 740
motifs, and their exact-position count, out of 340, never differs
from the control's by more than 15.
Figure~\ref{fig:param-sweeps} plots the runtime response.
Table~\ref{tab:sweeps} summarizes the study.

\paragraph{Anchor stride (Table~\ref{tab:abl-stride}).}
Anchors index every $s$-th length and leave the remaining lengths
to local completion (\S\ref{sec:anchors}).
Dense indexing ($s=1$) maintains a window and a directory at every
length and recovers the same 740 motifs at $1.92\times$ the total
runtime; at Wafer 5M it takes 714\,s against 363\,s for the
default. Coarser strides cut runtime further. At $s=8$ the
benchmark completes at $0.37\times$ the control (130\,s at 5M) and
still recovers every motif, with exact positions within 4 of the
control.
Recovery does not depend on the stride anywhere in the swept range;
only cost does.

\paragraph{Sieve width (Table~\ref{tab:abl-w}).}
The sieve evaluates each member of an over-budget bucket against
its $w$ nearest neighbors in the projection order
(\S\ref{sec:emit}). It costs at most 2\% of total runtime at any
width, so this sweep isolates quality. Every width recovers 740 of 740, and widths
above the default add up to 13 exact positions (323 of 340 at
$w=32$). The one visible per-row effect is StarLightCurves 64k,
where $w=2$ yields 18 exact positions and $w\ge 8$ yields all 20.

\paragraph{Refinement radii (Table~\ref{tab:abl-radii}).}
Local completion evaluates a box of $\Delta_m$ lengths and
$\Delta_p$ starts around each seed (\S\ref{sec:emit}). Recovery and
runtime are unchanged from $(2,1)$ to $(12,8)$, and positions move
by at most two matches. Any radius that reaches the adjacent
non-anchor lengths suffices. The neighborhood size is not a
sensitive quantity.

\paragraph{Refinement rounds (Table~\ref{tab:abl-passes}).}
Refinement repeats until no cell improves, capped at $P_{\max}$
rounds (\S\ref{sec:emit}).
Exact positions are identical for every cap $P_{\max}\ge 1$.
Refinement converges in one round on every benchmark row, and later
rounds only re-verify. A cap of one round completes the benchmark
at $0.58\times$ the control (200\,s against 362\,s at Wafer 5M).
The default cap of eight is a safety margin for streams where one
round does not suffice, not a tuned value.

\paragraph{Occupancy target (Table~\ref{tab:abl-alpha}).}
The target $\alpha$ sets the bucket size the directory maintains,
and the verification budget is $R=8\alpha$ (\S\ref{sec:hashing}).
$\alpha$ is the one parameter exposed to the analyst, and this
sweep bounds its risk in both directions. A quarter of the default
verification budget ($\alpha=4$, $R=32$) still recovers 740 of 740
at $0.74\times$ runtime, and four times the default ($\alpha=64$)
buys 10 additional exact positions at $1.94\times$. Mis-setting the
only exposed knob by a factor of four in either direction costs no
motif.

\paragraph{Spectral width (Table~\ref{tab:abl-kwidth}).}
The width $K$ is the one constant the calibration of
\S\ref{sec:spectra} sets from data, so this sweep overrides it
with fixed widths across the admissible range, $K\in\{4,8,12,16\}$,
in a dedicated session on the same machine type.
Proposition~\ref{prop:widthfree} states that no width can
invalidate an output, and the sweep is the matching measurement
for coverage. Recovery is 740 of 740 at every sampled width from the
three-bin bound at $K=4$ to the full sixteen, exact positions stay
within 6 of the control, and runtime falls
as the width narrows, to $0.81\times$ at $K=4$, where the looser bound
performs $1.41\times$ the exact evaluations but every slide,
signature, and stored feature is cheaper. The width moves speed,
never discovery. The implementation floors the width at 4, below
which the $B$ signature bits would span too few feature dimensions
to control bucket occupancy (\S\ref{sec:hashing}).

\paragraph{Maximum performance (Table~\ref{tab:abl-maxperf}).}
Each sweep above moves one constant and holds the rest at their
defaults, so the fastest settings never meet in a single run. The
maximum-performance configuration combines them, taking the fastest
settings that jointly hold full recovery: anchor stride $s=6$,
occupancy target $\alpha=4$, and round cap $P_{\max}=1$. The radii
stay at $(\Delta_m,\Delta_p)=(6,4)$ because a stride of six places
non-anchor lengths up to three positions from the nearest anchor,
and any $\Delta_m$ below three would leave those lengths outside
every completion box. The sieve width stays at $w=8$ because the
width sweep shows no runtime response. Table~\ref{tab:abl-maxperf}
reports this configuration against the control, both measured in one
session on a fresh instance of the benchmark machine type. The
session is directly comparable to the sweeps. The control rerun
reproduced every recovery, position, and value count of the sweep
session from a byte-identical binary, and its total runtime moved by
half a percent. The combined configuration recovers all 740 motifs
at $5.9\times$ lower total runtime, 139\,s against 823\,s across the
benchmark. Wafer 5M completes in 52.1\,s against 362\,s, and
StarLightCurves 1M in 10.1\,s against 53.1\,s. Exact positions are
308 of 340 against the control's 310. The cost of the speed is
two positions and no motif.

A final configuration pushes the combination to the floor of the
width range. Taking $\alpha=4$ and $P_{\max}=1$ at the more
aggressive stride $s=8$ and fixing $K=4$ trades the calibration's
collision headroom for speed on top of the quartered budget, the
two thinnest margins in the design. The
configuration recovers 740 of 740 motifs, at 104.3\,s total
against the control's 823\,s, a $7.9\times$ reduction. Wafer 5M
completes in 38.2\,s against 362\,s. Exact positions are 292 of
340. This configuration was measured in its own session on a fresh
instance of the same machine type from a byte-identical binary. Its outputs
are deterministic (Prop.~\ref{prop:det}), and the control totals
reproduced across sessions within half a percent.

Table~\ref{tab:sweeps} condenses the study. The sweeps were run
after every default was fixed and did not inform any default.
Tuning constants from prior analysis of the benchmark datasets is
not part of the study design. \textbf{Staying faithful to that
design, we report the conservative configuration in
Table~\ref{tab:master}, although the maximum-performance
configuration of Table~\ref{tab:abl-maxperf} delivers the same
recovery at a further $7.9\times$ lower runtime.}

\par\vspace{2pt}
\noindent\begin{minipage}{\columnwidth}
\centering
\includegraphics[width=\columnwidth]{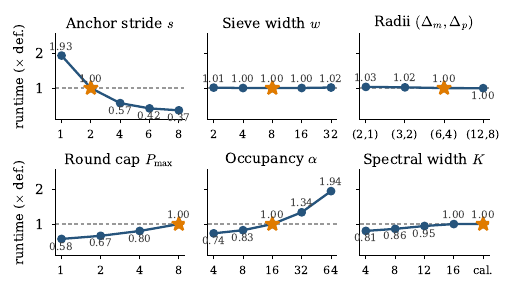}\par
\vspace{1pt}
\captionof{figure}{Runtime response to each design constant,
aggregated over the 37 benchmark rows and normalized to the default
configuration (star). Recovery is 740 of 740 at every plotted
setting; only runtime responds.}
\label{fig:param-sweeps}
\end{minipage}
\par\vspace{2pt}

\begin{table}[t]
\caption{Summary of the parameter study. Each row changes one
constant; all others stay at their defaults ($\star$). Rec.:
recovered motifs over all 37 rows (of 740). Pos.: exact-position
matches on the seventeen exact-ground-truth rows (of 340). Time:
summed runtime relative to the control.}
\label{tab:sweeps}
\footnotesize
\begin{tabular*}{\columnwidth}{@{\extracolsep{\fill}}llrrr@{}}
\toprule
Constant & Setting & Rec. & Pos. & Time \\
\midrule
Anchor stride $s$ & 1 (dense) & 740 & 313 & $1.92\times$ \\
(\S\ref{sec:anchors}) & $2^\star$ & 740 & 310 & $1.00\times$ \\
& 4 & 740 & 305 & $0.58\times$ \\
& 6 & 740 & 306 & $0.43\times$ \\
& 8 & 740 & 306 & $0.37\times$ \\
\midrule
Sieve width $w$ & 2 & 740 & 306 & $1.02\times$ \\
(\S\ref{sec:emit}) & 4 & 740 & 310 & $1.00\times$ \\
& $8^\star$ & 740 & 310 & $1.00\times$ \\
& 16 & 740 & 314 & $1.01\times$ \\
& 32 & 740 & 323 & $1.02\times$ \\
\midrule
Radii $(\Delta_m,\Delta_p)$ & $(2,1)$ & 740 & 308 & $1.05\times$ \\
(\S\ref{sec:emit}) & $(3,2)$ & 740 & 308 & $1.02\times$ \\
& $(6,4)^\star$ & 740 & 310 & $1.00\times$ \\
& $(12,8)$ & 740 & 310 & $1.00\times$ \\
\midrule
Round cap $P_{\max}$ & 1 & 740 & 310 & $0.58\times$ \\
(\S\ref{sec:emit}) & 2 & 740 & 310 & $0.67\times$ \\
& 4 & 740 & 310 & $0.81\times$ \\
& $8^\star$ & 740 & 310 & $1.00\times$ \\
\midrule
Occupancy $\alpha$ & 4 & 740 & 306 & $0.74\times$ \\
(\S\ref{sec:hashing}) & 8 & 740 & 306 & $0.83\times$ \\
& $16^\star$ & 740 & 310 & $1.00\times$ \\
& 32 & 740 & 319 & $1.34\times$ \\
& 64 & 740 & 320 & $1.93\times$ \\
\midrule
Spectral width $K$ & 4 & 740 & 315 & $0.81\times$ \\
(\S\ref{sec:spectra}) & 8 & 740 & 310 & $0.87\times$ \\
& 12 & 740 & 315 & $0.96\times$ \\
& 16 & 740 & 316 & $1.00\times$ \\
& calibrated$^\star$ & 740 & 310 & $1.00\times$ \\
\bottomrule
\end{tabular*}
\end{table}

\begin{table*}[tp]
\caption{Anchor stride sweep, complete per-row results. Each column
group is one setting of the stride $s$ with all other constants at
their defaults ($\star$ marks the default). rec (p+v): recovered
top-20 motifs and their split into position and value matches (of
20). t(s): runtime in seconds.}
\label{tab:abl-stride}
\scriptsize
\renewcommand{\arraystretch}{0.80}
\begin{tabular*}{\textwidth}{@{\extracolsep{\fill}}llrrrrrrrrrr@{}}
\toprule
Dataset & $n$ & \multicolumn{2}{c}{1} & \multicolumn{2}{c}{2$^\star$} & \multicolumn{2}{c}{4} & \multicolumn{2}{c}{6} & \multicolumn{2}{c@{}}{8} \\
 & & rec (p+v) & t(s) & rec (p+v) & t(s) & rec (p+v) & t(s) & rec (p+v) & t(s) & rec (p+v) & t(s) \\
\midrule
\multicolumn{12}{@{}l}{\emph{Recovery suites --- UCR (per-dataset length ranges)}} \\
ECG200 & 8k & 20 (16+4) & 0.25 & 20 (16+4) & 0.14 & 20 (16+4) & 0.10 & 20 (16+4) & 0.08 & 20 (16+4) & 0.07 \\
ItalyPowerDemand & 1,600 & 20 (20+0) & 0.03 & 20 (20+0) & 0.02 & 20 (20+0) & 0.02 & 20 (20+0) & 0.02 & 20 (20+0) & 0.02 \\
Wafer & 8k & 20 (19+1) & 0.36 & 20 (19+1) & 0.25 & 20 (19+1) & 0.20 & 20 (19+1) & 0.14 & 20 (19+1) & 0.14 \\
Earthquakes & 8k & 20 (20+0) & 0.47 & 20 (20+0) & 0.25 & 20 (20+0) & 0.16 & 20 (20+0) & 0.13 & 20 (20+0) & 0.11 \\
StarLightCurves & 8k & 20 (20+0) & 0.41 & 20 (20+0) & 0.23 & 20 (20+0) & 0.15 & 20 (20+0) & 0.12 & 20 (20+0) & 0.12 \\
ChlorineConc. & 8k & 20 (20+0) & 0.29 & 20 (20+0) & 0.22 & 20 (20+0) & 0.14 & 20 (20+0) & 0.11 & 20 (20+0) & 0.12 \\
TwoLeadECG & 8k & 20 (20+0) & 0.26 & 20 (20+0) & 0.15 & 20 (20+0) & 0.11 & 20 (20+0) & 0.09 & 20 (20+0) & 0.08 \\
ECG5000 & 8k & 20 (20+0) & 0.30 & 20 (20+0) & 0.14 & 20 (17+3) & 0.11 & 20 (17+3) & 0.08 & 20 (17+3) & 0.08 \\
FordA & 8k & 20 (20+0) & 0.37 & 20 (20+0) & 0.20 & 20 (20+0) & 0.13 & 20 (20+0) & 0.11 & 20 (20+0) & 0.11 \\
Mallat & 8k & 20 (13+7) & 0.37 & 20 (15+5) & 0.21 & 20 (15+5) & 0.19 & 20 (15+5) & 0.13 & 20 (15+5) & 0.10 \\
\multicolumn{12}{@{}l}{\emph{Recovery suite --- 64k, lengths 30--80}} \\
ChlorineConc. & 64k & 20 (20+0) & 3.54 & 20 (20+0) & 2.27 & 20 (20+0) & 1.33 & 20 (20+0) & 0.99 & 20 (20+0) & 0.85 \\
Earthquakes & 64k & 20 (20+0) & 5.11 & 20 (20+0) & 2.68 & 20 (20+0) & 1.50 & 20 (20+0) & 1.15 & 20 (20+0) & 1.42 \\
ECG5000 & 64k & 20 (12+8) & 4.07 & 20 (12+8) & 2.08 & 20 (6+14) & 1.19 & 20 (6+14) & 0.91 & 20 (6+14) & 1.39 \\
FordA & 64k & 20 (20+0) & 3.85 & 20 (15+5) & 1.97 & 20 (19+1) & 1.16 & 20 (20+0) & 1.03 & 20 (20+0) & 0.90 \\
Mallat & 64k & 20 (17+3) & 8.02 & 20 (17+3) & 4.03 & 20 (17+3) & 2.21 & 20 (17+3) & 1.64 & 20 (17+3) & 1.39 \\
StarLightCurves & 64k & 20 (20+0) & 4.31 & 20 (20+0) & 2.20 & 20 (20+0) & 1.29 & 20 (20+0) & 0.99 & 20 (20+0) & 0.97 \\
Wafer & 64k & 20 (16+4) & 3.89 & 20 (16+4) & 1.99 & 20 (16+4) & 1.16 & 20 (16+4) & 0.89 & 20 (16+4) & 0.76 \\
\multicolumn{12}{@{}l}{\emph{Full natural lengths (capped at 1M)}} \\
ItalyPowerDemand & 1,608 & 20 (20+0) & 0.03 & 20 (20+0) & 0.02 & 20 (20+0) & 0.02 & 20 (20+0) & 0.02 & 20 (20+0) & 0.02 \\
ECG200 & 9,600 & 20 (16+4) & 0.30 & 20 (16+4) & 0.17 & 20 (16+4) & 0.12 & 20 (16+4) & 0.10 & 20 (16+4) & 0.09 \\
TwoLeadECG & 93,398 & 20 (12+8) & 4.42 & 20 (12+8) & 2.38 & 20 (2+18) & 1.47 & 20 (8+12) & 1.09 & 20 (8+12) & 1.05 \\
Wafer & 152k & 20 (0+20) & 11.5 & 20 (0+20) & 5.98 & 20 (0+20) & 3.41 & 20 (0+20) & 2.58 & 20 (0+20) & 2.20 \\
Earthquakes & 164,864 & 20 (19+1) & 13.5 & 20 (19+1) & 7.06 & 20 (19+1) & 4.18 & 20 (19+1) & 3.56 & 20 (19+1) & 2.66 \\
ECG5000 & 630k & 20 (8+12) & 36.8 & 20 (8+12) & 19.6 & 20 (8+12) & 12.5 & 20 (6+14) & 9.02 & 20 (6+14) & 8.81 \\
ChlorineConc. & 637,440 & 20 (3+17) & 52.7 & 20 (3+17) & 31.8 & 20 (3+17) & 18.6 & 20 (3+17) & 13.8 & 20 (3+17) & 11.9 \\
FordA & 660k & 20 (13+7) & 65.1 & 20 (13+7) & 33.6 & 20 (13+7) & 19.5 & 20 (13+7) & 16.3 & 20 (13+7) & 13.9 \\
StarLightCurves & 1M & 20 (1+19) & 88.3 & 20 (1+19) & 53.1 & 20 (1+19) & 30.7 & 20 (1+19) & 22.6 & 20 (1+19) & 19.6 \\
Mallat & 1M & 20 (0+20) & 68.6 & 20 (0+20) & 35.9 & 20 (0+20) & 21.3 & 20 (0+20) & 16.6 & 20 (0+20) & 14.6 \\
\multicolumn{12}{@{}l}{\emph{Length scaling --- Wafer (tiled beyond 152k), lengths 30--80}} \\
Wafer & 8k & 20 (17+3) & 0.36 & 20 (17+3) & 0.24 & 20 (17+3) & 0.20 & 20 (17+3) & 0.14 & 20 (17+3) & 0.14 \\
Wafer & 16k & 20 (17+3) & 0.60 & 20 (17+3) & 0.41 & 20 (17+3) & 0.25 & 20 (17+3) & 0.20 & 20 (17+3) & 0.18 \\
Wafer & 32k & 20 (19+1) & 1.73 & 20 (19+1) & 0.89 & 20 (19+1) & 0.53 & 20 (19+1) & 0.42 & 20 (19+1) & 0.38 \\
Wafer & 64k & 20 (10+10) & 3.85 & 20 (10+10) & 2.03 & 20 (10+10) & 1.15 & 20 (10+10) & 0.89 & 20 (10+10) & 0.75 \\
Wafer & 128k & 20 (0+20) & 7.56 & 20 (0+20) & 4.75 & 20 (0+20) & 2.72 & 20 (0+20) & 2.05 & 20 (0+20) & 1.79 \\
Wafer & 256k & 20 (0+20) & 29.4 & 20 (0+20) & 15.0 & 20 (0+20) & 8.47 & 20 (0+20) & 6.26 & 20 (0+20) & 5.27 \\
Wafer & 512k & 20 (0+20) & 59.2 & 20 (0+20) & 30.4 & 20 (0+20) & 17.4 & 20 (0+20) & 12.7 & 20 (0+20) & 11.2 \\
Wafer & 1M & 20 (0+20) & 123 & 20 (0+20) & 62.7 & 20 (0+20) & 36.3 & 20 (0+20) & 27.0 & 20 (0+20) & 22.4 \\
Wafer & 2M & 20 (0+20) & 264 & 20 (0+20) & 133 & 20 (0+20) & 77.7 & 20 (0+20) & 56.2 & 20 (0+20) & 48.1 \\
Wafer & 5M & 20 (0+20) & 714 & 20 (0+20) & 363 & 20 (0+20) & 207 & 20 (0+20) & 152 & 20 (0+20) & 130 \\
\bottomrule
\end{tabular*}

\end{table*}

\begin{table*}[tp]
\caption{Sieve width sweep, complete per-row results. Columns as in
Table~\ref{tab:abl-stride}.}
\label{tab:abl-w}
\scriptsize
\renewcommand{\arraystretch}{0.80}
\begin{tabular*}{\textwidth}{@{\extracolsep{\fill}}llrrrrrrrrrr@{}}
\toprule
Dataset & $n$ & \multicolumn{2}{c}{2} & \multicolumn{2}{c}{4} & \multicolumn{2}{c}{8$^\star$} & \multicolumn{2}{c}{16} & \multicolumn{2}{c@{}}{32} \\
 & & rec (p+v) & t(s) & rec (p+v) & t(s) & rec (p+v) & t(s) & rec (p+v) & t(s) & rec (p+v) & t(s) \\
\midrule
\multicolumn{12}{@{}l}{\emph{Recovery suites --- UCR (per-dataset length ranges)}} \\
ECG200 & 8k & 20 (16+4) & 0.14 & 20 (16+4) & 0.14 & 20 (16+4) & 0.14 & 20 (16+4) & 0.14 & 20 (20+0) & 0.15 \\
ItalyPowerDemand & 1,600 & 20 (18+2) & 0.02 & 20 (20+0) & 0.02 & 20 (20+0) & 0.02 & 20 (20+0) & 0.02 & 20 (20+0) & 0.02 \\
Wafer & 8k & 20 (19+1) & 0.24 & 20 (19+1) & 0.24 & 20 (19+1) & 0.25 & 20 (19+1) & 0.25 & 20 (19+1) & 0.24 \\
Earthquakes & 8k & 20 (20+0) & 0.25 & 20 (20+0) & 0.25 & 20 (20+0) & 0.25 & 20 (20+0) & 0.25 & 20 (20+0) & 0.25 \\
StarLightCurves & 8k & 20 (20+0) & 0.23 & 20 (20+0) & 0.23 & 20 (20+0) & 0.23 & 20 (20+0) & 0.23 & 20 (20+0) & 0.23 \\
ChlorineConc. & 8k & 20 (20+0) & 0.21 & 20 (20+0) & 0.21 & 20 (20+0) & 0.22 & 20 (20+0) & 0.22 & 20 (20+0) & 0.21 \\
TwoLeadECG & 8k & 20 (20+0) & 0.15 & 20 (20+0) & 0.15 & 20 (20+0) & 0.15 & 20 (20+0) & 0.15 & 20 (20+0) & 0.16 \\
ECG5000 & 8k & 20 (20+0) & 0.14 & 20 (20+0) & 0.14 & 20 (20+0) & 0.14 & 20 (20+0) & 0.14 & 20 (20+0) & 0.14 \\
FordA & 8k & 20 (20+0) & 0.22 & 20 (20+0) & 0.20 & 20 (20+0) & 0.20 & 20 (20+0) & 0.21 & 20 (20+0) & 0.21 \\
Mallat & 8k & 20 (15+5) & 0.21 & 20 (15+5) & 0.21 & 20 (15+5) & 0.21 & 20 (15+5) & 0.22 & 20 (15+5) & 0.21 \\
\multicolumn{12}{@{}l}{\emph{Recovery suite --- 64k, lengths 30--80}} \\
ChlorineConc. & 64k & 20 (20+0) & 2.23 & 20 (20+0) & 2.29 & 20 (20+0) & 2.27 & 20 (20+0) & 2.24 & 20 (20+0) & 2.27 \\
Earthquakes & 64k & 20 (20+0) & 2.67 & 20 (20+0) & 2.65 & 20 (20+0) & 2.68 & 20 (20+0) & 2.63 & 20 (20+0) & 2.63 \\
ECG5000 & 64k & 20 (10+10) & 2.11 & 20 (10+10) & 2.07 & 20 (12+8) & 2.08 & 20 (12+8) & 2.15 & 20 (16+4) & 2.13 \\
FordA & 64k & 20 (17+3) & 1.98 & 20 (17+3) & 2.41 & 20 (15+5) & 1.97 & 20 (19+1) & 2.02 & 20 (20+0) & 2.02 \\
Mallat & 64k & 20 (17+3) & 4.01 & 20 (17+3) & 4.01 & 20 (17+3) & 4.03 & 20 (17+3) & 4.00 & 20 (17+3) & 4.18 \\
StarLightCurves & 64k & 20 (18+2) & 2.59 & 20 (20+0) & 2.61 & 20 (20+0) & 2.20 & 20 (20+0) & 2.26 & 20 (20+0) & 2.26 \\
Wafer & 64k & 20 (16+4) & 1.98 & 20 (16+4) & 1.99 & 20 (16+4) & 1.99 & 20 (16+4) & 2.04 & 20 (16+4) & 2.01 \\
\multicolumn{12}{@{}l}{\emph{Full natural lengths (capped at 1M)}} \\
ItalyPowerDemand & 1,608 & 20 (18+2) & 0.02 & 20 (20+0) & 0.02 & 20 (20+0) & 0.02 & 20 (20+0) & 0.02 & 20 (20+0) & 0.02 \\
ECG200 & 9,600 & 20 (16+4) & 0.17 & 20 (16+4) & 0.17 & 20 (16+4) & 0.17 & 20 (16+4) & 0.18 & 20 (20+0) & 0.18 \\
TwoLeadECG & 93,398 & 20 (12+8) & 2.28 & 20 (12+8) & 2.30 & 20 (12+8) & 2.38 & 20 (12+8) & 2.33 & 20 (13+7) & 2.38 \\
Wafer & 152k & 20 (0+20) & 5.95 & 20 (0+20) & 5.96 & 20 (0+20) & 5.98 & 20 (0+20) & 6.05 & 20 (0+20) & 6.04 \\
Earthquakes & 164,864 & 20 (19+1) & 7.16 & 20 (19+1) & 7.06 & 20 (19+1) & 7.06 & 20 (19+1) & 7.03 & 20 (19+1) & 7.07 \\
ECG5000 & 630k & 20 (8+12) & 19.5 & 20 (8+12) & 19.7 & 20 (8+12) & 19.6 & 20 (12+8) & 20.0 & 20 (12+8) & 20.0 \\
ChlorineConc. & 637,440 & 20 (3+17) & 31.9 & 20 (3+17) & 31.8 & 20 (3+17) & 31.8 & 20 (3+17) & 32.6 & 20 (3+17) & 32.4 \\
FordA & 660k & 20 (17+3) & 35.4 & 20 (17+3) & 33.6 & 20 (13+7) & 33.6 & 20 (18+2) & 33.7 & 20 (18+2) & 34.1 \\
StarLightCurves & 1M & 20 (0+20) & 61.0 & 20 (0+20) & 53.1 & 20 (1+19) & 53.1 & 20 (1+19) & 53.6 & 20 (1+19) & 54.2 \\
Mallat & 1M & 20 (0+20) & 36.1 & 20 (0+20) & 35.9 & 20 (0+20) & 35.9 & 20 (0+20) & 36.5 & 20 (0+20) & 36.9 \\
\multicolumn{12}{@{}l}{\emph{Length scaling --- Wafer (tiled beyond 152k), lengths 30--80}} \\
Wafer & 8k & 20 (17+3) & 0.24 & 20 (17+3) & 0.24 & 20 (17+3) & 0.24 & 20 (17+3) & 0.24 & 20 (17+3) & 0.24 \\
Wafer & 16k & 20 (17+3) & 0.41 & 20 (17+3) & 0.41 & 20 (17+3) & 0.41 & 20 (17+3) & 0.41 & 20 (17+3) & 0.41 \\
Wafer & 32k & 20 (19+1) & 0.89 & 20 (19+1) & 0.89 & 20 (19+1) & 0.89 & 20 (19+1) & 0.75 & 20 (19+1) & 0.76 \\
Wafer & 64k & 20 (10+10) & 2.00 & 20 (10+10) & 1.98 & 20 (10+10) & 2.03 & 20 (10+10) & 2.00 & 20 (10+10) & 2.01 \\
Wafer & 128k & 20 (0+20) & 4.86 & 20 (0+20) & 4.78 & 20 (0+20) & 4.75 & 20 (0+20) & 4.83 & 20 (0+20) & 4.82 \\
Wafer & 256k & 20 (0+20) & 15.1 & 20 (0+20) & 14.9 & 20 (0+20) & 15.0 & 20 (0+20) & 15.1 & 20 (0+20) & 15.2 \\
Wafer & 512k & 20 (0+20) & 30.0 & 20 (0+20) & 30.1 & 20 (0+20) & 30.4 & 20 (0+20) & 32.2 & 20 (0+20) & 30.8 \\
Wafer & 1M & 20 (0+20) & 63.5 & 20 (0+20) & 62.8 & 20 (0+20) & 62.7 & 20 (0+20) & 62.9 & 20 (0+20) & 63.5 \\
Wafer & 2M & 20 (0+20) & 134 & 20 (0+20) & 134 & 20 (0+20) & 133 & 20 (0+20) & 134 & 20 (0+20) & 138 \\
Wafer & 5M & 20 (0+20) & 364 & 20 (0+20) & 363 & 20 (0+20) & 363 & 20 (0+20) & 363 & 20 (0+20) & 367 \\
\bottomrule
\end{tabular*}

\end{table*}

\begin{table*}[tp]
\caption{Refinement radii sweep, complete per-row results. Columns
as in Table~\ref{tab:abl-stride}.}
\label{tab:abl-radii}
\scriptsize
\renewcommand{\arraystretch}{0.80}
\begin{tabular*}{\textwidth}{@{\extracolsep{\fill}}llrrrrrrrr@{}}
\toprule
Dataset & $n$ & \multicolumn{2}{c}{(2,1)} & \multicolumn{2}{c}{(3,2)} & \multicolumn{2}{c}{(6,4)$^\star$} & \multicolumn{2}{c@{}}{(12,8)} \\
 & & rec (p+v) & t(s) & rec (p+v) & t(s) & rec (p+v) & t(s) & rec (p+v) & t(s) \\
\midrule
\multicolumn{10}{@{}l}{\emph{Recovery suites --- UCR (per-dataset length ranges)}} \\
ECG200 & 8k & 20 (16+4) & 0.13 & 20 (16+4) & 0.13 & 20 (16+4) & 0.14 & 20 (16+4) & 0.22 \\
ItalyPowerDemand & 1,600 & 20 (20+0) & 0.02 & 20 (20+0) & 0.02 & 20 (20+0) & 0.02 & 20 (20+0) & 0.06 \\
Wafer & 8k & 20 (19+1) & 0.19 & 20 (19+1) & 0.19 & 20 (19+1) & 0.25 & 20 (19+1) & 0.36 \\
Earthquakes & 8k & 20 (20+0) & 0.24 & 20 (20+0) & 0.20 & 20 (20+0) & 0.25 & 20 (20+0) & 0.36 \\
StarLightCurves & 8k & 20 (20+0) & 0.21 & 20 (20+0) & 0.21 & 20 (20+0) & 0.23 & 20 (20+0) & 0.33 \\
ChlorineConc. & 8k & 20 (20+0) & 0.19 & 20 (20+0) & 0.19 & 20 (20+0) & 0.22 & 20 (20+0) & 0.31 \\
TwoLeadECG & 8k & 20 (20+0) & 0.16 & 20 (20+0) & 0.14 & 20 (20+0) & 0.15 & 20 (20+0) & 0.23 \\
ECG5000 & 8k & 20 (18+2) & 0.13 & 20 (18+2) & 0.13 & 20 (20+0) & 0.14 & 20 (20+0) & 0.21 \\
FordA & 8k & 20 (20+0) & 0.20 & 20 (20+0) & 0.21 & 20 (20+0) & 0.20 & 20 (20+0) & 0.29 \\
Mallat & 8k & 20 (15+5) & 0.33 & 20 (15+5) & 0.27 & 20 (15+5) & 0.21 & 20 (15+5) & 0.32 \\
\multicolumn{10}{@{}l}{\emph{Recovery suite --- 64k, lengths 30--80}} \\
ChlorineConc. & 64k & 20 (20+0) & 2.23 & 20 (20+0) & 2.23 & 20 (20+0) & 2.27 & 20 (20+0) & 2.41 \\
Earthquakes & 64k & 20 (20+0) & 2.21 & 20 (20+0) & 3.07 & 20 (20+0) & 2.68 & 20 (20+0) & 2.29 \\
ECG5000 & 64k & 20 (12+8) & 2.47 & 20 (12+8) & 2.49 & 20 (12+8) & 2.08 & 20 (12+8) & 2.17 \\
FordA & 64k & 20 (15+5) & 2.78 & 20 (15+5) & 2.37 & 20 (15+5) & 1.97 & 20 (15+5) & 2.08 \\
Mallat & 64k & 20 (17+3) & 3.96 & 20 (17+3) & 3.97 & 20 (17+3) & 4.03 & 20 (17+3) & 4.19 \\
StarLightCurves & 64k & 20 (20+0) & 2.62 & 20 (20+0) & 2.59 & 20 (20+0) & 2.20 & 20 (20+0) & 2.29 \\
Wafer & 64k & 20 (16+4) & 2.39 & 20 (16+4) & 1.97 & 20 (16+4) & 1.99 & 20 (16+4) & 2.07 \\
\multicolumn{10}{@{}l}{\emph{Full natural lengths (capped at 1M)}} \\
ItalyPowerDemand & 1,608 & 20 (20+0) & 0.02 & 20 (20+0) & 0.02 & 20 (20+0) & 0.02 & 20 (20+0) & 0.06 \\
ECG200 & 9,600 & 20 (16+4) & 0.16 & 20 (16+4) & 0.19 & 20 (16+4) & 0.17 & 20 (16+4) & 0.25 \\
TwoLeadECG & 93,398 & 20 (12+8) & 2.72 & 20 (12+8) & 2.68 & 20 (12+8) & 2.38 & 20 (12+8) & 2.37 \\
Wafer & 152k & 20 (0+20) & 5.98 & 20 (0+20) & 5.95 & 20 (0+20) & 5.98 & 20 (0+20) & 6.21 \\
Earthquakes & 164,864 & 20 (19+1) & 7.04 & 20 (19+1) & 7.06 & 20 (19+1) & 7.06 & 20 (19+1) & 7.13 \\
ECG5000 & 630k & 20 (8+12) & 25.3 & 20 (8+12) & 22.4 & 20 (8+12) & 19.6 & 20 (8+12) & 19.8 \\
ChlorineConc. & 637,440 & 20 (3+17) & 31.8 & 20 (3+17) & 32.0 & 20 (3+17) & 31.8 & 20 (3+17) & 32.1 \\
FordA & 660k & 20 (13+7) & 38.7 & 20 (13+7) & 38.8 & 20 (13+7) & 33.6 & 20 (13+7) & 33.8 \\
StarLightCurves & 1M & 20 (1+19) & 77.0 & 20 (1+19) & 61.3 & 20 (1+19) & 53.1 & 20 (1+19) & 45.6 \\
Mallat & 1M & 20 (0+20) & 35.9 & 20 (0+20) & 35.9 & 20 (0+20) & 35.9 & 20 (0+20) & 35.9 \\
\multicolumn{10}{@{}l}{\emph{Length scaling --- Wafer (tiled beyond 152k), lengths 30--80}} \\
Wafer & 8k & 20 (17+3) & 0.20 & 20 (17+3) & 0.19 & 20 (17+3) & 0.24 & 20 (17+3) & 0.36 \\
Wafer & 16k & 20 (17+3) & 0.39 & 20 (17+3) & 0.33 & 20 (17+3) & 0.41 & 20 (17+3) & 0.49 \\
Wafer & 32k & 20 (19+1) & 0.91 & 20 (19+1) & 0.92 & 20 (19+1) & 0.89 & 20 (19+1) & 0.97 \\
Wafer & 64k & 20 (10+10) & 2.36 & 20 (10+10) & 2.14 & 20 (10+10) & 2.03 & 20 (10+10) & 2.06 \\
Wafer & 128k & 20 (0+20) & 4.74 & 20 (0+20) & 4.88 & 20 (0+20) & 4.75 & 20 (0+20) & 4.85 \\
Wafer & 256k & 20 (0+20) & 15.0 & 20 (0+20) & 15.0 & 20 (0+20) & 15.0 & 20 (0+20) & 15.2 \\
Wafer & 512k & 20 (0+20) & 30.2 & 20 (0+20) & 30.3 & 20 (0+20) & 30.4 & 20 (0+20) & 30.4 \\
Wafer & 1M & 20 (0+20) & 62.6 & 20 (0+20) & 63.3 & 20 (0+20) & 62.7 & 20 (0+20) & 63.0 \\
Wafer & 2M & 20 (0+20) & 134 & 20 (0+20) & 134 & 20 (0+20) & 133 & 20 (0+20) & 134 \\
Wafer & 5M & 20 (0+20) & 363 & 20 (0+20) & 362 & 20 (0+20) & 363 & 20 (0+20) & 364 \\
\bottomrule
\end{tabular*}

\end{table*}

\begin{table*}[tp]
\caption{Refinement round cap sweep, complete per-row results.
Columns as in Table~\ref{tab:abl-stride}.}
\label{tab:abl-passes}
\scriptsize
\renewcommand{\arraystretch}{0.80}
\begin{tabular*}{\textwidth}{@{\extracolsep{\fill}}llrrrrrrrr@{}}
\toprule
Dataset & $n$ & \multicolumn{2}{c}{1} & \multicolumn{2}{c}{2} & \multicolumn{2}{c}{4} & \multicolumn{2}{c@{}}{8$^\star$} \\
 & & rec (p+v) & t(s) & rec (p+v) & t(s) & rec (p+v) & t(s) & rec (p+v) & t(s) \\
\midrule
\multicolumn{10}{@{}l}{\emph{Recovery suites --- UCR (per-dataset length ranges)}} \\
ECG200 & 8k & 20 (16+4) & 0.09 & 20 (16+4) & 0.12 & 20 (16+4) & 0.15 & 20 (16+4) & 0.14 \\
ItalyPowerDemand & 1,600 & 20 (20+0) & 0.01 & 20 (20+0) & 0.02 & 20 (20+0) & 0.02 & 20 (20+0) & 0.02 \\
Wafer & 8k & 20 (19+1) & 0.12 & 20 (19+1) & 0.16 & 20 (19+1) & 0.24 & 20 (19+1) & 0.25 \\
Earthquakes & 8k & 20 (20+0) & 0.12 & 20 (20+0) & 0.17 & 20 (20+0) & 0.25 & 20 (20+0) & 0.25 \\
StarLightCurves & 8k & 20 (20+0) & 0.14 & 20 (20+0) & 0.18 & 20 (20+0) & 0.23 & 20 (20+0) & 0.23 \\
ChlorineConc. & 8k & 20 (20+0) & 0.12 & 20 (20+0) & 0.16 & 20 (20+0) & 0.21 & 20 (20+0) & 0.22 \\
TwoLeadECG & 8k & 20 (20+0) & 0.10 & 20 (20+0) & 0.13 & 20 (20+0) & 0.15 & 20 (20+0) & 0.15 \\
ECG5000 & 8k & 20 (20+0) & 0.09 & 20 (20+0) & 0.12 & 20 (20+0) & 0.14 & 20 (20+0) & 0.14 \\
FordA & 8k & 20 (20+0) & 0.11 & 20 (20+0) & 0.16 & 20 (20+0) & 0.21 & 20 (20+0) & 0.20 \\
Mallat & 8k & 20 (15+5) & 0.12 & 20 (15+5) & 0.16 & 20 (15+5) & 0.21 & 20 (15+5) & 0.21 \\
\multicolumn{10}{@{}l}{\emph{Recovery suite --- 64k, lengths 30--80}} \\
ChlorineConc. & 64k & 20 (20+0) & 1.43 & 20 (20+0) & 1.84 & 20 (20+0) & 2.27 & 20 (20+0) & 2.27 \\
Earthquakes & 64k & 20 (20+0) & 1.39 & 20 (20+0) & 1.81 & 20 (20+0) & 2.68 & 20 (20+0) & 2.68 \\
ECG5000 & 64k & 20 (12+8) & 1.30 & 20 (12+8) & 1.65 & 20 (12+8) & 2.09 & 20 (12+8) & 2.08 \\
FordA & 64k & 20 (15+5) & 1.20 & 20 (15+5) & 1.57 & 20 (15+5) & 1.99 & 20 (15+5) & 1.97 \\
Mallat & 64k & 20 (17+3) & 1.38 & 20 (17+3) & 1.74 & 20 (17+3) & 2.52 & 20 (17+3) & 4.03 \\
StarLightCurves & 64k & 20 (20+0) & 1.37 & 20 (20+0) & 1.78 & 20 (20+0) & 2.24 & 20 (20+0) & 2.20 \\
Wafer & 64k & 20 (16+4) & 1.18 & 20 (16+4) & 1.58 & 20 (16+4) & 2.04 & 20 (16+4) & 1.99 \\
\multicolumn{10}{@{}l}{\emph{Full natural lengths (capped at 1M)}} \\
ItalyPowerDemand & 1,608 & 20 (20+0) & 0.01 & 20 (20+0) & 0.02 & 20 (20+0) & 0.02 & 20 (20+0) & 0.02 \\
ECG200 & 9,600 & 20 (16+4) & 0.11 & 20 (16+4) & 0.15 & 20 (16+4) & 0.19 & 20 (16+4) & 0.17 \\
TwoLeadECG & 93,398 & 20 (12+8) & 1.55 & 20 (12+8) & 1.93 & 20 (12+8) & 2.30 & 20 (12+8) & 2.38 \\
Wafer & 152k & 20 (0+20) & 3.95 & 20 (0+20) & 4.96 & 20 (0+20) & 5.96 & 20 (0+20) & 5.98 \\
Earthquakes & 164,864 & 20 (19+1) & 4.91 & 20 (19+1) & 5.93 & 20 (19+1) & 7.04 & 20 (19+1) & 7.06 \\
ECG5000 & 630k & 20 (8+12) & 14.2 & 20 (8+12) & 16.8 & 20 (8+12) & 19.6 & 20 (8+12) & 19.6 \\
ChlorineConc. & 637,440 & 20 (3+17) & 22.4 & 20 (3+17) & 27.3 & 20 (3+17) & 32.0 & 20 (3+17) & 31.8 \\
FordA & 660k & 20 (13+7) & 23.4 & 20 (13+7) & 28.5 & 20 (13+7) & 33.6 & 20 (13+7) & 33.6 \\
StarLightCurves & 1M & 20 (1+19) & 29.5 & 20 (1+19) & 37.6 & 20 (1+19) & 53.2 & 20 (1+19) & 53.1 \\
Mallat & 1M & 20 (0+20) & 29.9 & 20 (0+20) & 35.8 & 20 (0+20) & 36.0 & 20 (0+20) & 35.9 \\
\multicolumn{10}{@{}l}{\emph{Length scaling --- Wafer (tiled beyond 152k), lengths 30--80}} \\
Wafer & 8k & 20 (17+3) & 0.12 & 20 (17+3) & 0.16 & 20 (17+3) & 0.25 & 20 (17+3) & 0.24 \\
Wafer & 16k & 20 (17+3) & 0.24 & 20 (17+3) & 0.33 & 20 (17+3) & 0.42 & 20 (17+3) & 0.41 \\
Wafer & 32k & 20 (19+1) & 0.52 & 20 (19+1) & 0.70 & 20 (19+1) & 0.89 & 20 (19+1) & 0.89 \\
Wafer & 64k & 20 (10+10) & 1.20 & 20 (10+10) & 1.58 & 20 (10+10) & 2.00 & 20 (10+10) & 2.03 \\
Wafer & 128k & 20 (0+20) & 3.11 & 20 (0+20) & 3.95 & 20 (0+20) & 4.89 & 20 (0+20) & 4.75 \\
Wafer & 256k & 20 (0+20) & 7.04 & 20 (0+20) & 8.29 & 20 (0+20) & 10.4 & 20 (0+20) & 15.0 \\
Wafer & 512k & 20 (0+20) & 15.4 & 20 (0+20) & 17.7 & 20 (0+20) & 21.7 & 20 (0+20) & 30.4 \\
Wafer & 1M & 20 (0+20) & 33.9 & 20 (0+20) & 37.5 & 20 (0+20) & 47.8 & 20 (0+20) & 62.7 \\
Wafer & 2M & 20 (0+20) & 73.6 & 20 (0+20) & 81.5 & 20 (0+20) & 98.6 & 20 (0+20) & 133 \\
Wafer & 5M & 20 (0+20) & 200 & 20 (0+20) & 223 & 20 (0+20) & 269 & 20 (0+20) & 363 \\
\bottomrule
\end{tabular*}

\end{table*}

\begin{table*}[tp]
\caption{Occupancy target sweep, complete per-row results. Columns
as in Table~\ref{tab:abl-stride}.}
\label{tab:abl-alpha}
\scriptsize
\renewcommand{\arraystretch}{0.80}
\begin{tabular*}{\textwidth}{@{\extracolsep{\fill}}llrrrrrrrrrr@{}}
\toprule
Dataset & $n$ & \multicolumn{2}{c}{4} & \multicolumn{2}{c}{8} & \multicolumn{2}{c}{16$^\star$} & \multicolumn{2}{c}{32} & \multicolumn{2}{c@{}}{64} \\
 & & rec (p+v) & t(s) & rec (p+v) & t(s) & rec (p+v) & t(s) & rec (p+v) & t(s) & rec (p+v) & t(s) \\
\midrule
\multicolumn{12}{@{}l}{\emph{Recovery suites --- UCR (per-dataset length ranges)}} \\
ECG200 & 8k & 20 (16+4) & 0.14 & 20 (16+4) & 0.14 & 20 (16+4) & 0.14 & 20 (20+0) & 0.15 & 20 (20+0) & 0.16 \\
ItalyPowerDemand & 1,600 & 20 (19+1) & 0.02 & 20 (20+0) & 0.02 & 20 (20+0) & 0.02 & 20 (20+0) & 0.02 & 20 (20+0) & 0.02 \\
Wafer & 8k & 20 (19+1) & 0.24 & 20 (19+1) & 0.24 & 20 (19+1) & 0.25 & 20 (19+1) & 0.25 & 20 (18+2) & 0.26 \\
Earthquakes & 8k & 20 (20+0) & 0.20 & 20 (20+0) & 0.20 & 20 (20+0) & 0.25 & 20 (20+0) & 0.35 & 20 (20+0) & 0.38 \\
StarLightCurves & 8k & 20 (20+0) & 0.21 & 20 (20+0) & 0.22 & 20 (20+0) & 0.23 & 20 (20+0) & 0.22 & 20 (20+0) & 0.24 \\
ChlorineConc. & 8k & 20 (20+0) & 0.21 & 20 (20+0) & 0.20 & 20 (20+0) & 0.22 & 20 (20+0) & 0.22 & 20 (20+0) & 0.23 \\
TwoLeadECG & 8k & 20 (20+0) & 0.14 & 20 (20+0) & 0.15 & 20 (20+0) & 0.15 & 20 (20+0) & 0.16 & 20 (20+0) & 0.16 \\
ECG5000 & 8k & 20 (20+0) & 0.14 & 20 (20+0) & 0.14 & 20 (20+0) & 0.14 & 20 (20+0) & 0.15 & 20 (20+0) & 0.16 \\
FordA & 8k & 20 (20+0) & 0.21 & 20 (20+0) & 0.20 & 20 (20+0) & 0.20 & 20 (20+0) & 0.21 & 20 (20+0) & 0.22 \\
Mallat & 8k & 20 (15+5) & 0.22 & 20 (15+5) & 0.21 & 20 (15+5) & 0.21 & 20 (15+5) & 0.22 & 20 (15+5) & 0.24 \\
\multicolumn{12}{@{}l}{\emph{Recovery suite --- 64k, lengths 30--80}} \\
ChlorineConc. & 64k & 20 (20+0) & 1.90 & 20 (20+0) & 2.02 & 20 (20+0) & 2.27 & 20 (20+0) & 2.65 & 20 (20+0) & 3.32 \\
Earthquakes & 64k & 20 (20+0) & 2.36 & 20 (20+0) & 2.47 & 20 (20+0) & 2.68 & 20 (20+0) & 3.04 & 20 (20+0) & 3.74 \\
ECG5000 & 64k & 20 (6+14) & 1.94 & 20 (8+12) & 2.08 & 20 (12+8) & 2.08 & 20 (12+8) & 2.37 & 20 (14+6) & 2.82 \\
FordA & 64k & 20 (18+2) & 1.83 & 20 (14+6) & 1.86 & 20 (15+5) & 1.97 & 20 (20+0) & 2.63 & 20 (20+0) & 2.68 \\
Mallat & 64k & 20 (17+3) & 3.74 & 20 (17+3) & 3.80 & 20 (17+3) & 4.03 & 20 (17+3) & 4.41 & 20 (17+3) & 4.33 \\
StarLightCurves & 64k & 20 (20+0) & 1.88 & 20 (20+0) & 1.98 & 20 (20+0) & 2.20 & 20 (20+0) & 2.62 & 20 (20+0) & 3.42 \\
Wafer & 64k & 20 (16+4) & 1.47 & 20 (17+3) & 2.07 & 20 (16+4) & 1.99 & 20 (16+4) & 1.85 & 20 (16+4) & 2.30 \\
\multicolumn{12}{@{}l}{\emph{Full natural lengths (capped at 1M)}} \\
ItalyPowerDemand & 1,608 & 20 (19+1) & 0.02 & 20 (20+0) & 0.02 & 20 (20+0) & 0.02 & 20 (20+0) & 0.02 & 20 (20+0) & 0.02 \\
ECG200 & 9,600 & 20 (16+4) & 0.17 & 20 (16+4) & 0.17 & 20 (16+4) & 0.17 & 20 (20+0) & 0.19 & 20 (20+0) & 0.21 \\
TwoLeadECG & 93,398 & 20 (12+8) & 1.89 & 20 (6+14) & 2.03 & 20 (12+8) & 2.38 & 20 (12+8) & 2.86 & 20 (20+0) & 2.86 \\
Wafer & 152k & 20 (0+20) & 4.83 & 20 (0+20) & 5.42 & 20 (0+20) & 5.98 & 20 (0+20) & 6.41 & 20 (0+20) & 8.01 \\
Earthquakes & 164,864 & 20 (20+0) & 5.55 & 20 (19+1) & 6.38 & 20 (19+1) & 7.06 & 20 (19+1) & 8.39 & 20 (19+1) & 10.8 \\
ECG5000 & 630k & 20 (2+18) & 17.5 & 20 (6+14) & 16.1 & 20 (8+12) & 19.6 & 20 (14+6) & 26.6 & 20 (12+8) & 39.1 \\
ChlorineConc. & 637,440 & 20 (3+17) & 23.6 & 20 (3+17) & 26.4 & 20 (3+17) & 31.8 & 20 (3+17) & 42.9 & 20 (3+17) & 61.2 \\
FordA & 660k & 20 (18+2) & 24.2 & 20 (15+5) & 27.5 & 20 (13+7) & 33.6 & 20 (15+5) & 37.4 & 20 (17+3) & 49.9 \\
StarLightCurves & 1M & 20 (1+19) & 43.1 & 20 (1+19) & 46.8 & 20 (1+19) & 53.1 & 20 (1+19) & 66.8 & 20 (1+19) & 96.1 \\
Mallat & 1M & 20 (0+20) & 23.5 & 20 (0+20) & 28.2 & 20 (0+20) & 35.9 & 20 (0+20) & 51.3 & 20 (0+20) & 80.4 \\
\multicolumn{12}{@{}l}{\emph{Length scaling --- Wafer (tiled beyond 152k), lengths 30--80}} \\
Wafer & 8k & 20 (17+3) & 0.23 & 20 (17+3) & 0.23 & 20 (17+3) & 0.24 & 20 (17+3) & 0.25 & 20 (17+3) & 0.26 \\
Wafer & 16k & 20 (17+3) & 0.38 & 20 (17+3) & 0.39 & 20 (17+3) & 0.41 & 20 (17+3) & 0.44 & 20 (17+3) & 0.39 \\
Wafer & 32k & 20 (19+1) & 0.85 & 20 (19+1) & 0.86 & 20 (19+1) & 0.89 & 20 (18+2) & 0.81 & 20 (18+2) & 0.95 \\
Wafer & 64k & 20 (10+10) & 1.45 & 20 (10+10) & 1.93 & 20 (10+10) & 2.03 & 20 (10+10) & 1.86 & 20 (10+10) & 2.28 \\
Wafer & 128k & 20 (0+20) & 4.00 & 20 (0+20) & 4.35 & 20 (0+20) & 4.75 & 20 (0+20) & 5.24 & 20 (0+20) & 6.47 \\
Wafer & 256k & 20 (0+20) & 12.1 & 20 (0+20) & 13.1 & 20 (0+20) & 15.0 & 20 (0+20) & 18.0 & 20 (0+20) & 19.1 \\
Wafer & 512k & 20 (0+20) & 23.0 & 20 (0+20) & 25.9 & 20 (0+20) & 30.4 & 20 (0+20) & 38.5 & 20 (0+20) & 48.0 \\
Wafer & 1M & 20 (0+20) & 46.7 & 20 (0+20) & 52.0 & 20 (0+20) & 62.7 & 20 (0+20) & 82.6 & 20 (0+20) & 118 \\
Wafer & 2M & 20 (0+20) & 97.4 & 20 (0+20) & 110 & 20 (0+20) & 133 & 20 (0+20) & 181 & 20 (0+20) & 264 \\
Wafer & 5M & 20 (0+20) & 261 & 20 (0+20) & 294 & 20 (0+20) & 363 & 20 (0+20) & 506 & 20 (0+20) & 754 \\
\bottomrule
\end{tabular*}

\end{table*}

\begin{table*}[tp]
\caption{Spectral width sweep, complete per-row results. Each column
group is one fixed spectral width $K$; the calibrated control
($\star$) sets $K$ per length by the rule of \S\ref{sec:spectra}.
Columns as in Table~\ref{tab:abl-stride}.}
\label{tab:abl-kwidth}
\scriptsize
\renewcommand{\arraystretch}{0.80}
\begin{tabular*}{\textwidth}{@{\extracolsep{\fill}}llrrrrrrrrrr@{}}
\toprule
Dataset & $n$ & \multicolumn{2}{c}{4} & \multicolumn{2}{c}{8} & \multicolumn{2}{c}{12} & \multicolumn{2}{c}{16} & \multicolumn{2}{c@{}}{calibrated$^\star$} \\
 & & rec (p+v) & t(s) & rec (p+v) & t(s) & rec (p+v) & t(s) & rec (p+v) & t(s) & rec (p+v) & t(s) \\
\midrule
\multicolumn{12}{@{}l}{\emph{Recovery suites --- UCR (per-dataset length ranges)}} \\
ECG200 & 8k & 20 (20+0) & 0.14 & 20 (16+4) & 0.15 & 20 (16+4) & 0.15 & 20 (16+4) & 0.15 & 20 (16+4) & 0.14 \\
ItalyPowerDemand & 1,600 & 20 (20+0) & 0.02 & 20 (20+0) & 0.02 & 20 (20+0) & 0.02 & 20 (20+0) & 0.02 & 20 (20+0) & 0.02 \\
Wafer & 8k & 20 (18+2) & 0.24 & 20 (19+1) & 0.24 & 20 (18+2) & 0.24 & 20 (17+3) & 0.25 & 20 (19+1) & 0.24 \\
Earthquakes & 8k & 20 (20+0) & 0.26 & 20 (20+0) & 0.25 & 20 (20+0) & 0.30 & 20 (20+0) & 0.25 & 20 (20+0) & 0.25 \\
StarLightCurves & 8k & 20 (20+0) & 0.20 & 20 (20+0) & 0.21 & 20 (20+0) & 0.22 & 20 (20+0) & 0.23 & 20 (20+0) & 0.22 \\
ChlorineConc. & 8k & 20 (20+0) & 0.28 & 20 (20+0) & 0.35 & 20 (20+0) & 0.21 & 20 (20+0) & 0.21 & 20 (20+0) & 0.21 \\
TwoLeadECG & 8k & 20 (20+0) & 0.14 & 20 (20+0) & 0.15 & 20 (20+0) & 0.15 & 20 (20+0) & 0.15 & 20 (20+0) & 0.15 \\
ECG5000 & 8k & 20 (19+1) & 0.14 & 20 (20+0) & 0.14 & 20 (18+2) & 0.16 & 20 (20+0) & 0.21 & 20 (20+0) & 0.14 \\
FordA & 8k & 20 (20+0) & 0.21 & 20 (20+0) & 0.21 & 20 (20+0) & 0.21 & 20 (20+0) & 0.21 & 20 (20+0) & 0.20 \\
Mallat & 8k & 20 (15+5) & 0.21 & 20 (15+5) & 0.20 & 20 (15+5) & 0.21 & 20 (15+5) & 0.22 & 20 (15+5) & 0.21 \\
\multicolumn{12}{@{}l}{\emph{Recovery suite --- 64k, lengths 30--80}} \\
ChlorineConc. & 64k & 20 (20+0) & 1.93 & 20 (20+0) & 2.01 & 20 (20+0) & 2.16 & 20 (20+0) & 2.23 & 20 (20+0) & 2.22 \\
Earthquakes & 64k & 20 (20+0) & 2.43 & 20 (20+0) & 2.16 & 20 (20+0) & 2.59 & 20 (20+0) & 2.63 & 20 (20+0) & 2.65 \\
ECG5000 & 64k & 20 (14+6) & 1.98 & 20 (12+8) & 2.06 & 20 (16+4) & 2.20 & 20 (16+4) & 2.48 & 20 (12+8) & 2.10 \\
FordA & 64k & 20 (20+0) & 1.97 & 20 (15+5) & 1.98 & 20 (18+2) & 2.13 & 20 (19+1) & 2.67 & 20 (15+5) & 1.96 \\
Mallat & 64k & 20 (17+3) & 3.67 & 20 (17+3) & 3.74 & 20 (17+3) & 3.94 & 20 (17+3) & 4.31 & 20 (17+3) & 3.99 \\
StarLightCurves & 64k & 20 (18+2) & 1.83 & 20 (20+0) & 1.95 & 20 (20+0) & 2.09 & 20 (20+0) & 2.20 & 20 (20+0) & 2.20 \\
Wafer & 64k & 20 (14+6) & 1.47 & 20 (16+4) & 1.97 & 20 (17+3) & 1.71 & 20 (16+4) & 1.82 & 20 (16+4) & 2.01 \\
\multicolumn{12}{@{}l}{\emph{Full natural lengths (capped at 1M)}} \\
ItalyPowerDemand & 1,608 & 20 (20+0) & 0.02 & 20 (20+0) & 0.02 & 20 (20+0) & 0.02 & 20 (20+0) & 0.02 & 20 (20+0) & 0.02 \\
ECG200 & 9,600 & 20 (20+0) & 0.17 & 20 (16+4) & 0.17 & 20 (16+4) & 0.18 & 20 (16+4) & 0.18 & 20 (16+4) & 0.17 \\
TwoLeadECG & 93,398 & 20 (12+8) & 1.86 & 20 (18+2) & 2.00 & 20 (9+11) & 2.18 & 20 (12+8) & 2.35 & 20 (12+8) & 2.27 \\
Wafer & 152k & 20 (0+20) & 5.10 & 20 (0+20) & 5.36 & 20 (0+20) & 5.82 & 20 (0+20) & 5.52 & 20 (0+20) & 6.01 \\
Earthquakes & 164,864 & 20 (19+1) & 6.11 & 20 (19+1) & 6.62 & 20 (19+1) & 6.97 & 20 (19+1) & 7.04 & 20 (19+1) & 7.10 \\
ECG5000 & 630k & 20 (6+14) & 16.3 & 20 (8+12) & 17.1 & 20 (10+10) & 21.4 & 20 (8+12) & 19.4 & 20 (8+12) & 19.9 \\
ChlorineConc. & 637,440 & 20 (3+17) & 26.0 & 20 (3+17) & 28.2 & 20 (3+17) & 30.7 & 20 (3+17) & 33.2 & 20 (3+17) & 32.0 \\
FordA & 660k & 20 (19+1) & 27.8 & 20 (17+3) & 33.4 & 20 (15+5) & 33.5 & 20 (13+7) & 33.4 & 20 (13+7) & 33.6 \\
StarLightCurves & 1M & 20 (0+20) & 44.4 & 20 (1+19) & 40.0 & 20 (1+19) & 50.9 & 20 (1+19) & 53.2 & 20 (1+19) & 53.2 \\
Mallat & 1M & 20 (0+20) & 26.0 & 20 (0+20) & 29.4 & 20 (0+20) & 35.1 & 20 (0+20) & 35.7 & 20 (0+20) & 35.8 \\
\multicolumn{12}{@{}l}{\emph{Length scaling --- Wafer (tiled beyond 152k), lengths 30--80}} \\
Wafer & 8k & 20 (17+3) & 0.26 & 20 (17+3) & 0.24 & 20 (17+3) & 0.24 & 20 (17+3) & 0.25 & 20 (17+3) & 0.24 \\
Wafer & 16k & 20 (17+3) & 0.33 & 20 (17+3) & 0.41 & 20 (17+3) & 0.43 & 20 (17+3) & 0.43 & 20 (17+3) & 0.41 \\
Wafer & 32k & 20 (18+2) & 0.67 & 20 (19+1) & 0.88 & 20 (19+1) & 0.75 & 20 (18+2) & 0.79 & 20 (19+1) & 0.90 \\
Wafer & 64k & 20 (9+11) & 1.48 & 20 (10+10) & 2.03 & 20 (10+10) & 1.71 & 20 (10+10) & 1.83 & 20 (10+10) & 2.00 \\
Wafer & 128k & 20 (0+20) & 4.12 & 20 (0+20) & 4.40 & 20 (0+20) & 4.96 & 20 (0+20) & 5.09 & 20 (0+20) & 4.79 \\
Wafer & 256k & 20 (0+20) & 12.7 & 20 (0+20) & 13.9 & 20 (0+20) & 14.3 & 20 (0+20) & 15.0 & 20 (0+20) & 15.0 \\
Wafer & 512k & 20 (0+20) & 24.7 & 20 (0+20) & 26.6 & 20 (0+20) & 29.0 & 20 (0+20) & 30.1 & 20 (0+20) & 30.3 \\
Wafer & 1M & 20 (0+20) & 50.5 & 20 (0+20) & 54.7 & 20 (0+20) & 59.1 & 20 (0+20) & 62.6 & 20 (0+20) & 62.7 \\
Wafer & 2M & 20 (0+20) & 107 & 20 (0+20) & 116 & 20 (0+20) & 126 & 20 (0+20) & 133 & 20 (0+20) & 133 \\
Wafer & 5M & 20 (0+20) & 288 & 20 (0+20) & 313 & 20 (0+20) & 341 & 20 (0+20) & 363 & 20 (0+20) & 361 \\
\bottomrule
\end{tabular*}

\end{table*}

\begin{table*}[tp]
\caption{Maximum-performance configurations, complete per-row
results against the control. \emph{Maximum performance}: the fastest
full-recovery combination ($s=6$, $\alpha=4$, $P_{\max}=1$).
\emph{$K=4$}: the same quartered budget at the more aggressive
$s=8$ with the spectral width fixed to $K=4$.
$\times$: per-row runtime ratio against the control.}
\label{tab:abl-maxperf}
{\scriptsize\renewcommand{\arraystretch}{0.80}%
\begin{tabular*}{\textwidth}{@{\extracolsep{\fill}}llrrrrrrrr@{}}
\toprule
Dataset & $n$ & \multicolumn{2}{c}{Control} & \multicolumn{3}{c}{Maximum performance} & \multicolumn{3}{c}{Maximum performance, $K=4$} \\
 & & rec (p+v) & t(s) & rec (p+v) & t(s) & $\times$ & rec (p+v) & t(s) & $\times$ \\
\midrule
\multicolumn{10}{@{}l}{\emph{Recovery suites --- UCR (per-dataset length ranges)}} \\
ECG200 & 8k & 20 (16+4) & 0.14 & 20 (16+4) & 0.05 & 2.8 & 20 (20+0) & 0.05 & 3.0 \\
ItalyPowerDemand & 1,600 & 20 (20+0) & 0.02 & 20 (19+1) & 0.01 & 2.0 & 20 (19+1) & 0.06 & 0.4 \\
Wafer & 8k & 20 (19+1) & 0.24 & 20 (19+1) & 0.07 & 3.6 & 20 (18+2) & 0.06 & 4.1 \\
Earthquakes & 8k & 20 (20+0) & 0.25 & 20 (20+0) & 0.06 & 3.9 & 20 (20+0) & 0.06 & 4.3 \\
StarLightCurves & 8k & 20 (20+0) & 0.23 & 20 (20+0) & 0.07 & 3.4 & 20 (20+0) & 0.06 & 3.9 \\
ChlorineConc. & 8k & 20 (20+0) & 0.21 & 20 (20+0) & 0.07 & 3.1 & 20 (20+0) & 0.06 & 3.5 \\
TwoLeadECG & 8k & 20 (20+0) & 0.15 & 20 (20+0) & 0.05 & 2.9 & 20 (20+0) & 0.05 & 3.3 \\
ECG5000 & 8k & 20 (20+0) & 0.14 & 20 (17+3) & 0.05 & 2.8 & 20 (7+13) & 0.05 & 3.0 \\
FordA & 8k & 20 (20+0) & 0.22 & 20 (20+0) & 0.06 & 3.4 & 20 (20+0) & 0.06 & 3.6 \\
Mallat & 8k & 20 (15+5) & 0.21 & 20 (15+5) & 0.07 & 2.9 & 20 (15+5) & 0.06 & 3.3 \\
\multicolumn{10}{@{}l}{\emph{Recovery suite --- 64k, lengths 30--80}} \\
ChlorineConc. & 64k & 20 (20+0) & 2.26 & 20 (20+0) & 0.58 & 3.9 & 20 (20+0) & 0.48 & 4.7 \\
Earthquakes & 64k & 20 (20+0) & 2.63 & 20 (20+0) & 0.58 & 4.6 & 20 (20+0) & 0.48 & 5.4 \\
ECG5000 & 64k & 20 (12+8) & 2.08 & 20 (10+10) & 0.59 & 3.5 & 20 (4+16) & 0.50 & 4.1 \\
FordA & 64k & 20 (15+5) & 2.00 & 20 (19+1) & 0.53 & 3.8 & 20 (19+1) & 0.46 & 4.4 \\
Mallat & 64k & 20 (17+3) & 4.05 & 20 (17+3) & 0.56 & 7.3 & 20 (17+3) & 0.44 & 9.2 \\
StarLightCurves & 64k & 20 (20+0) & 2.20 & 20 (20+0) & 0.58 & 3.8 & 20 (18+2) & 0.45 & 4.9 \\
Wafer & 64k & 20 (16+4) & 1.99 & 20 (16+4) & 0.71 & 2.8 & 20 (15+5) & 0.45 & 4.4 \\
\multicolumn{10}{@{}l}{\emph{Full natural lengths (capped at 1M)}} \\
ItalyPowerDemand & 1,608 & 20 (20+0) & 0.02 & 20 (19+1) & 0.01 & 2.0 & 20 (19+1) & 0.01 & 2.1 \\
ECG200 & 9,600 & 20 (16+4) & 0.17 & 20 (16+4) & 0.06 & 2.8 & 20 (20+0) & 0.06 & 2.9 \\
TwoLeadECG & 93,398 & 20 (12+8) & 2.31 & 20 (6+14) & 0.67 & 3.5 & 20 (2+18) & 0.59 & 3.9 \\
Wafer & 152k & 20 (0+20) & 5.96 & 20 (0+20) & 1.51 & 4.0 & 20 (0+20) & 1.16 & 5.1 \\
Earthquakes & 164,864 & 20 (19+1) & 7.32 & 20 (20+0) & 1.75 & 4.2 & 20 (20+0) & 1.30 & 5.6 \\
ECG5000 & 630k & 20 (8+12) & 19.9 & 20 (2+18) & 4.88 & 4.1 & 20 (2+18) & 4.04 & 4.9 \\
ChlorineConc. & 637,440 & 20 (3+17) & 31.9 & 20 (3+17) & 7.26 & 4.4 & 20 (3+17) & 5.29 & 6.0 \\
FordA & 660k & 20 (13+7) & 33.8 & 20 (18+2) & 7.14 & 4.7 & 20 (18+2) & 5.70 & 5.9 \\
StarLightCurves & 1M & 20 (1+19) & 53.1 & 20 (1+19) & 10.1 & 5.2 & 20 (0+20) & 7.65 & 6.9 \\
Mallat & 1M & 20 (0+20) & 36.1 & 20 (0+20) & 9.72 & 3.7 & 20 (0+20) & 7.25 & 5.0 \\
\multicolumn{10}{@{}l}{\emph{Length scaling --- Wafer (tiled beyond 152k), lengths 30--80}} \\
Wafer & 8k & 20 (17+3) & 0.24 & 20 (17+3) & 0.06 & 3.7 & 20 (17+3) & 0.06 & 4.0 \\
Wafer & 16k & 20 (17+3) & 0.41 & 20 (17+3) & 0.12 & 3.3 & 20 (17+3) & 0.11 & 3.7 \\
Wafer & 32k & 20 (19+1) & 0.92 & 20 (19+1) & 0.27 & 3.4 & 20 (19+1) & 0.24 & 3.9 \\
Wafer & 64k & 20 (10+10) & 1.98 & 20 (10+10) & 0.60 & 3.3 & 20 (10+10) & 0.46 & 4.3 \\
Wafer & 128k & 20 (0+20) & 4.77 & 20 (0+20) & 1.23 & 3.9 & 20 (0+20) & 0.95 & 5.0 \\
Wafer & 256k & 20 (0+20) & 15.0 & 20 (0+20) & 2.41 & 6.2 & 20 (0+20) & 1.76 & 8.5 \\
Wafer & 512k & 20 (0+20) & 30.3 & 20 (0+20) & 4.70 & 6.4 & 20 (0+20) & 3.71 & 8.2 \\
Wafer & 1M & 20 (0+20) & 62.6 & 20 (0+20) & 9.51 & 6.6 & 20 (0+20) & 7.07 & 8.8 \\
Wafer & 2M & 20 (0+20) & 135 & 20 (0+20) & 20.2 & 6.7 & 20 (0+20) & 14.9 & 9.0 \\
Wafer & 5M & 20 (0+20) & 362 & 20 (0+20) & 52.1 & 6.9 & 20 (0+20) & 38.2 & 9.5 \\
\bottomrule
\end{tabular*}
}
\end{table*}

\FloatBarrier
\section{Breaking the Algorithm}
\label{app:factorial}

Appendix~\ref{app:params} moved every design constant and failed
to break recovery. This section removes whole mechanisms instead,
and the question it answers is attribution: if a mechanism earns
its place, disabling it must lose motifs, the losses must be
predictable from what the mechanism does, and restoring it must
recover them. One guarantee frames the whole section. Every
configuration below is sound by Proposition~\ref{prop:sound}: each
reports only exact distances to valid neighbors, so disabling a
mechanism can lose a motif but can never fabricate one. The
factorial therefore isolates recall, and every loss it records is a
motif missed, never a motif corrupted.

\panache{} discovers motifs through three cooperating mechanisms,
anchor indexing (\S\ref{sec:anchors}), the hot-class sieve
(\S\ref{sec:emit}), and local completion (\S\ref{sec:emit}). The
factorial disables each independently. Dense indexing ($s=1$)
replaces anchoring and maintains online state at every length. A
sieve width of zero turns the sweep off. Completion, normally
governed by a run-time rule, is forced fully on or fully off so
that every configuration is unambiguous, and the control keeps the
published rule. The result is a full $2\times2\times2$ factorial
over the three mechanisms, run on all 37 benchmark rows with the
setup of Appendix~\ref{app:params}.

A run can miss a reference motif in exactly two ways, and
separating them is what lets us attribute each loss to a
mechanism. A \emph{coverage loss} is a reference motif whose length
the run never output at all. Anchoring is the only source of
these: it maintains online state at every second length and leaves
the lengths in between to completion, so an anchored run with
completion off emits nothing at those in-between lengths, and every
reference motif that lives there is lost before any distance is
computed. An \emph{evaluation loss} is different. The motif's
length was indexed and both its windows entered the directory, but
their pair was never compared, because it landed in a hash bucket
holding more entries than the per-query budget $R$ could scan. The
budget is the only source of these. This pairing is the claim the
section tests: completion exists to close coverage losses, the
sieve exists to close evaluation losses, so if the attribution is
right, removing completion should produce only coverage losses and
removing the sieve should expose only evaluation losses.
Table~\ref{tab:factorial} reports all eight configurations with
every loss split by the length at which it occurred, and
Tables~\ref{tab:fact-anchored} and~\ref{tab:fact-dense} give the
per-row detail.

\begin{table}[t]
\caption{The mechanism factorial. Each configuration disables a subset of
\{anchor indexing, hot-class sieve, local completion\}; $^\star$
marks the published configuration. Losses are counted against the
740 reference motifs and split by whether the lost motif's length
was indexed (anchor) or not (non-anchor). Dense indexing has no
non-anchor lengths. Time is total runtime relative to the control.}
\label{tab:factorial}
\scriptsize
\renewcommand{\arraystretch}{0.85}
\begin{tabular*}{\columnwidth}{@{\extracolsep{\fill}}lllrrrrrr@{}}
\toprule
Indexing & Sieve & Compl. & rec & pos & val & \multicolumn{2}{c}{lost at lengths} & time \\
 & & & of 740 & of 340 & & non-anc. & anchor & $\times$ \\
\midrule
dense & off & off & 714 & 236 & 349 & -- & 26 & 0.90 \\
dense & on & off & 714 & 268 & 313 & -- & 26 & 0.91 \\
dense & off & on & 736 & 292 & 288 & -- & 4 & 2.32 \\
dense & on & on & 740 & 313 & 272 & -- & 0 & 1.92 \\
anchored & off & off & 609 & 186 & 301 & 127 & 4 & 0.48 \\
anchored & on & off & 611 & 209 & 275 & 127 & 2 & 0.49 \\
anchored & off & on & 736 & 292 & 289 & 2 & 2 & 1.22 \\
anchored$^\star$ & on & on & 740 & 310 & 275 & 0 & 0 & 1.00 \\
\bottomrule
\end{tabular*}

\end{table}

The factorial has one headline. Both configurations that keep the full emit
stage recover all 740 motifs, on either indexing scheme, and every
configuration that loses a motif is missing an emit mechanism. The dense
full configuration matches 313 of 340 exact positions against the control's
310, so anchoring at stride two costs no motif and just three
exact positions of 340, inside the tie noise of this study
(Appendix~\ref{app:params}), while running at half the dense
runtime. The losses in the
remaining six configurations total 300 motif instances, and each one is
attributed below to the mechanism whose absence caused it, with the
evidence in Tables~\ref{tab:modea} and~\ref{tab:modeb}.

\paragraph{Dense pass, no emit stage (no anchoring, no sieve, no
completion).}
This configuration turns off all three mechanisms. Because it
indexes every length, it has no in-between lengths, so no coverage
loss is even possible, and any motif it misses must be an
evaluation loss. It recovers 714 of 740, and all 26 misses lie at
indexed lengths, as expected. The emitted output shows directly
that these are evaluation losses, because it contains pairs worse
than the ones that were missed: a motif can only be an evaluation
loss if a better pair than anything reported was available but
never compared. On TwoLeadECG 93k the eight lost reference motifs
at length $m=21$ have z-normalized distances between 0.0448 and
0.0483, yet the smallest distance this configuration reported at
any length is 0.0526, so eight pairs closer than everything in its
output existed and went uncompared. ECG5000 8k shows the same
signature: its two lost motifs at $m=23$ have distance 0.192,
inside the 0.182--0.206 range of distances the configuration did
report, so their quality was well within what it emitted
elsewhere, yet their pairs were never evaluated. What stopped the
comparisons is the verification budget. Table~\ref{tab:modeb}
counts, per losing row, the directory buckets whose population
exceeded the budget $R=128$: TwoLeadECG 93k has a single bucket of
9{,}994 windows, and across the six rows between 56\% and 93\% of
all directory entries sit in such over-budget buckets. These are
not hashing failures. Near-duplicate windows collide with
near-certainty (Prop.~\ref{prop:recall}), so the two windows of
each lost motif did reach the same bucket; the budget simply could
not scan far enough into it to reach them. This is precisely the
failure the hot-class sieve is built to repair.

\paragraph{Anchored pass, no emit stage (anchoring on, no sieve,
no completion).}
Anchoring is on and both emit mechanisms are off, so this
configuration should exhibit coverage losses, and it does. It
recovers 609 of 740, and the 131 misses split into 127 at
non-anchor lengths and 4 at anchor lengths. The 127 are coverage
losses, and they are not explained after the fact but predicted
before the run. With completion off, the configuration emits only
the anchor (even) lengths, so every reference motif at a
non-anchor (odd) length is lost by construction. Counting the
reference motifs at odd lengths from the ground-truth lists alone,
before running anything, gives 127, spread over the 17 of the 37
benchmark rows that carry any reference motif at a non-anchor
length (the remaining 20 place their entire top-20 at anchor
lengths, so anchoring alone loses nothing on them), and
Table~\ref{tab:modea} shows this count matching the observed losses
row by row and length by length. \textbf{The entire non-anchor loss
ledger is predicted by the reference lists alone.}
The remaining 4 losses fall at anchor lengths, so they are
evaluation losses, not coverage losses. All four are tie entries at
$m=34$ on StarLightCurves 8k, the row whose largest hash class
alone holds 46\% of its windows (Table~\ref{tab:modeb}), and the
sieve and completion recover them below.

\paragraph{Anchored pass with the sieve (anchoring and sieve on,
no completion).}
Turning the sieve on should touch only evaluation losses and never
coverage losses, because the sieve re-scans crowded buckets at
lengths that are already indexed and adds no new length to the
output. That is what happens. The non-anchor column of
Table~\ref{tab:modea} does not move: the 127 coverage losses
remain exactly as the census predicts. The change is at the anchor
lengths, where recovery rises from 609 to 611 and exact positions
from 186 to 209 of 340 as the sieve reaches pairs the budgeted
scan had skipped. The dense side isolates this effect further,
with no coverage losses to confound it: there the same sieve lifts
exact positions from 236 to 268 (Table~\ref{tab:factorial}). The
sieve recovers the four StarLightCurves 8k ties, and the top-20 of
StarLightCurves 64k shifts by two tie entries at $m=32$, the tie
behavior explained below.

\begin{table*}[tp]
\caption{Mechanism ablation under anchor indexing ($s=2$), complete
per-row results. bare = budgeted scans only, no emit stage;
+sieve and +completion enable one emit mechanism each; full is the
control. Columns as in Table~\ref{tab:abl-stride}.}
\label{tab:fact-anchored}
\scriptsize
\renewcommand{\arraystretch}{0.80}
\begin{tabular*}{\textwidth}{@{\extracolsep{\fill}}llrrrrrrrr@{}}
\toprule
Dataset & $n$ & \multicolumn{2}{c}{bare} & \multicolumn{2}{c}{+sieve} & \multicolumn{2}{c}{+completion} & \multicolumn{2}{c@{}}{full$^\star$} \\
 & & rec (p+v) & t(s) & rec (p+v) & t(s) & rec (p+v) & t(s) & rec (p+v) & t(s) \\
\midrule
\multicolumn{10}{@{}l}{\emph{Recovery suites --- UCR (per-dataset length ranges)}} \\
ECG200 & 8k & 12 (11+1) & 0.06 & 12 (11+1) & 0.06 & 20 (16+4) & 0.15 & 20 (16+4) & 0.14 \\
ItalyPowerDemand & 1,600 & 13 (12+1) & 0.01 & 13 (13+0) & 0.01 & 20 (18+2) & 0.02 & 20 (20+0) & 0.02 \\
Wafer & 8k & 20 (19+1) & 0.07 & 20 (19+1) & 0.07 & 20 (19+1) & 0.36 & 20 (19+1) & 0.25 \\
Earthquakes & 8k & 20 (19+1) & 0.07 & 20 (19+1) & 0.08 & 20 (20+0) & 0.26 & 20 (20+0) & 0.25 \\
StarLightCurves & 8k & 10 (8+2) & 0.09 & 14 (12+2) & 0.09 & 20 (20+0) & 0.27 & 20 (20+0) & 0.23 \\
ChlorineConc. & 8k & 10 (10+0) & 0.07 & 10 (10+0) & 0.07 & 20 (20+0) & 0.21 & 20 (20+0) & 0.22 \\
TwoLeadECG & 8k & 12 (10+2) & 0.06 & 12 (10+2) & 0.07 & 20 (20+0) & 0.15 & 20 (20+0) & 0.15 \\
ECG5000 & 8k & 13 (8+5) & 0.06 & 13 (11+2) & 0.06 & 20 (20+0) & 0.15 & 20 (20+0) & 0.14 \\
FordA & 8k & 14 (14+0) & 0.07 & 14 (14+0) & 0.07 & 20 (20+0) & 0.21 & 20 (20+0) & 0.20 \\
Mallat & 8k & 10 (5+5) & 0.07 & 10 (7+3) & 0.07 & 20 (15+5) & 0.30 & 20 (15+5) & 0.21 \\
\multicolumn{10}{@{}l}{\emph{Recovery suite --- 64k, lengths 30--80}} \\
ChlorineConc. & 64k & 10 (10+0) & 1.03 & 10 (10+0) & 1.04 & 20 (20+0) & 2.23 & 20 (20+0) & 2.27 \\
Earthquakes & 64k & 20 (20+0) & 0.96 & 20 (20+0) & 1.00 & 20 (20+0) & 2.61 & 20 (20+0) & 2.68 \\
ECG5000 & 64k & 12 (2+10) & 0.80 & 12 (4+8) & 0.86 & 20 (12+8) & 2.09 & 20 (12+8) & 2.08 \\
FordA & 64k & 14 (6+8) & 0.70 & 14 (6+8) & 0.77 & 20 (17+3) & 2.01 & 20 (15+5) & 1.97 \\
Mallat & 64k & 20 (16+4) & 1.09 & 20 (17+3) & 1.02 & 20 (17+3) & 3.01 & 20 (17+3) & 4.03 \\
StarLightCurves & 64k & 12 (0+12) & 0.97 & 10 (10+0) & 0.95 & 16 (2+14) & 2.58 & 20 (20+0) & 2.20 \\
Wafer & 64k & 20 (16+4) & 0.83 & 20 (16+4) & 0.80 & 20 (16+4) & 1.97 & 20 (16+4) & 1.99 \\
\multicolumn{10}{@{}l}{\emph{Full natural lengths (capped at 1M)}} \\
ItalyPowerDemand & 1,608 & 13 (12+1) & 0.01 & 13 (13+0) & 0.01 & 20 (18+2) & 0.02 & 20 (20+0) & 0.02 \\
ECG200 & 9,600 & 12 (11+1) & 0.07 & 12 (11+1) & 0.08 & 20 (16+4) & 0.17 & 20 (16+4) & 0.17 \\
TwoLeadECG & 93,398 & 12 (2+10) & 1.14 & 12 (4+8) & 1.24 & 20 (10+10) & 2.28 & 20 (12+8) & 2.38 \\
Wafer & 152k & 20 (0+20) & 3.00 & 20 (0+20) & 2.95 & 20 (0+20) & 6.93 & 20 (0+20) & 5.98 \\
Earthquakes & 164,864 & 20 (19+1) & 3.62 & 20 (19+1) & 3.65 & 20 (19+1) & 7.03 & 20 (19+1) & 7.06 \\
ECG5000 & 630k & 18 (4+14) & 10.7 & 18 (4+14) & 11.3 & 20 (8+12) & 19.6 & 20 (8+12) & 19.6 \\
ChlorineConc. & 637,440 & 20 (5+15) & 17.2 & 20 (5+15) & 17.4 & 20 (3+17) & 31.6 & 20 (3+17) & 31.8 \\
FordA & 660k & 12 (6+6) & 17.9 & 12 (6+6) & 18.4 & 20 (17+3) & 38.3 & 20 (13+7) & 33.6 \\
StarLightCurves & 1M & 20 (0+20) & 20.8 & 20 (2+18) & 21.7 & 20 (1+19) & 68.2 & 20 (1+19) & 53.1 \\
Mallat & 1M & 20 (0+20) & 23.5 & 20 (0+20) & 24.6 & 20 (0+20) & 83.2 & 20 (0+20) & 35.9 \\
\multicolumn{10}{@{}l}{\emph{Length scaling --- Wafer (tiled beyond 152k), lengths 30--80}} \\
Wafer & 8k & 20 (17+3) & 0.08 & 20 (17+3) & 0.07 & 20 (17+3) & 0.25 & 20 (17+3) & 0.24 \\
Wafer & 16k & 20 (17+3) & 0.14 & 20 (17+3) & 0.15 & 20 (17+3) & 0.41 & 20 (17+3) & 0.41 \\
Wafer & 32k & 20 (19+1) & 0.33 & 20 (19+1) & 0.34 & 20 (19+1) & 0.89 & 20 (19+1) & 0.89 \\
Wafer & 64k & 20 (10+10) & 0.81 & 20 (10+10) & 0.79 & 20 (10+10) & 1.97 & 20 (10+10) & 2.03 \\
Wafer & 128k & 20 (0+20) & 2.19 & 20 (0+20) & 2.24 & 20 (0+20) & 4.75 & 20 (0+20) & 4.75 \\
Wafer & 256k & 20 (0+20) & 5.78 & 20 (0+20) & 5.92 & 20 (0+20) & 17.1 & 20 (0+20) & 15.0 \\
Wafer & 512k & 20 (0+20) & 13.1 & 20 (0+20) & 13.5 & 20 (0+20) & 36.0 & 20 (0+20) & 30.4 \\
Wafer & 1M & 20 (0+20) & 30.6 & 20 (0+20) & 29.2 & 20 (0+20) & 73.2 & 20 (0+20) & 62.7 \\
Wafer & 2M & 20 (0+20) & 62.3 & 20 (0+20) & 63.7 & 20 (0+20) & 153 & 20 (0+20) & 133 \\
Wafer & 5M & 20 (0+20) & 174 & 20 (0+20) & 176 & 20 (0+20) & 437 & 20 (0+20) & 363 \\
\bottomrule
\end{tabular*}

\end{table*}

\begin{table*}[tp]
\caption{Mechanism ablation under dense indexing ($s=1$), complete
per-row results. Cells as in Table~\ref{tab:fact-anchored}.}
\label{tab:fact-dense}
\scriptsize
\renewcommand{\arraystretch}{0.80}
\begin{tabular*}{\textwidth}{@{\extracolsep{\fill}}llrrrrrrrr@{}}
\toprule
Dataset & $n$ & \multicolumn{2}{c}{bare} & \multicolumn{2}{c}{+sieve} & \multicolumn{2}{c}{+completion} & \multicolumn{2}{c@{}}{full} \\
 & & rec (p+v) & t(s) & rec (p+v) & t(s) & rec (p+v) & t(s) & rec (p+v) & t(s) \\
\midrule
\multicolumn{10}{@{}l}{\emph{Recovery suites --- UCR (per-dataset length ranges)}} \\
ECG200 & 8k & 20 (14+6) & 0.08 & 20 (14+6) & 0.08 & 20 (16+4) & 0.24 & 20 (16+4) & 0.25 \\
ItalyPowerDemand & 1,600 & 20 (14+6) & 0.01 & 20 (17+3) & 0.01 & 20 (20+0) & 0.03 & 20 (20+0) & 0.03 \\
Wafer & 8k & 20 (19+1) & 0.10 & 20 (19+1) & 0.10 & 20 (19+1) & 0.36 & 20 (19+1) & 0.36 \\
Earthquakes & 8k & 20 (19+1) & 0.11 & 20 (19+1) & 0.13 & 20 (20+0) & 0.47 & 20 (20+0) & 0.47 \\
StarLightCurves & 8k & 14 (8+6) & 0.13 & 20 (14+6) & 0.14 & 20 (20+0) & 0.41 & 20 (20+0) & 0.41 \\
ChlorineConc. & 8k & 20 (20+0) & 0.10 & 20 (20+0) & 0.10 & 20 (20+0) & 0.30 & 20 (20+0) & 0.29 \\
TwoLeadECG & 8k & 14 (12+2) & 0.09 & 18 (14+4) & 0.10 & 20 (20+0) & 0.26 & 20 (20+0) & 0.26 \\
ECG5000 & 8k & 18 (8+10) & 0.09 & 20 (15+5) & 0.10 & 20 (20+0) & 0.30 & 20 (20+0) & 0.30 \\
FordA & 8k & 20 (20+0) & 0.09 & 20 (20+0) & 0.10 & 20 (20+0) & 0.37 & 20 (20+0) & 0.37 \\
Mallat & 8k & 20 (14+6) & 0.10 & 20 (13+7) & 0.10 & 20 (13+7) & 0.37 & 20 (13+7) & 0.37 \\
\multicolumn{10}{@{}l}{\emph{Recovery suite --- 64k, lengths 30--80}} \\
ChlorineConc. & 64k & 20 (20+0) & 1.88 & 20 (20+0) & 1.92 & 20 (20+0) & 4.36 & 20 (20+0) & 3.54 \\
Earthquakes & 64k & 20 (20+0) & 1.70 & 20 (20+0) & 1.70 & 20 (20+0) & 5.11 & 20 (20+0) & 5.11 \\
ECG5000 & 64k & 20 (6+14) & 1.48 & 20 (8+12) & 1.50 & 20 (10+10) & 4.02 & 20 (12+8) & 4.07 \\
FordA & 64k & 20 (10+10) & 1.26 & 20 (10+10) & 1.31 & 20 (19+1) & 3.83 & 20 (20+0) & 3.85 \\
Mallat & 64k & 20 (16+4) & 1.72 & 20 (17+3) & 1.91 & 20 (17+3) & 5.86 & 20 (17+3) & 8.02 \\
StarLightCurves & 64k & 18 (0+18) & 1.65 & 12 (12+0) & 1.76 & 16 (2+14) & 4.23 & 20 (20+0) & 4.31 \\
Wafer & 64k & 20 (16+4) & 1.44 & 20 (16+4) & 1.42 & 20 (16+4) & 3.89 & 20 (16+4) & 3.89 \\
\multicolumn{10}{@{}l}{\emph{Full natural lengths (capped at 1M)}} \\
ItalyPowerDemand & 1,608 & 20 (14+6) & 0.01 & 20 (17+3) & 0.01 & 20 (20+0) & 0.03 & 20 (20+0) & 0.03 \\
ECG200 & 9,600 & 20 (14+6) & 0.10 & 20 (14+6) & 0.12 & 20 (16+4) & 0.29 & 20 (16+4) & 0.30 \\
TwoLeadECG & 93,398 & 12 (2+10) & 2.02 & 12 (4+8) & 2.12 & 20 (10+10) & 4.30 & 20 (12+8) & 4.42 \\
Wafer & 152k & 20 (0+20) & 5.47 & 20 (0+20) & 5.34 & 20 (0+20) & 11.5 & 20 (0+20) & 11.5 \\
Earthquakes & 164,864 & 20 (19+1) & 6.72 & 20 (19+1) & 6.56 & 20 (19+1) & 13.4 & 20 (19+1) & 13.5 \\
ECG5000 & 630k & 18 (4+14) & 19.3 & 18 (4+14) & 19.4 & 20 (8+12) & 36.5 & 20 (8+12) & 36.8 \\
ChlorineConc. & 637,440 & 20 (3+17) & 31.8 & 20 (3+17) & 32.4 & 20 (3+17) & 51.7 & 20 (3+17) & 52.7 \\
FordA & 660k & 20 (10+10) & 33.7 & 14 (8+6) & 34.0 & 20 (17+3) & 64.6 & 20 (13+7) & 65.1 \\
StarLightCurves & 1M & 20 (0+20) & 40.9 & 20 (1+19) & 39.8 & 20 (0+20) & 86.5 & 20 (1+19) & 88.3 \\
Mallat & 1M & 20 (0+20) & 43.4 & 20 (0+20) & 44.6 & 20 (0+20) & 166 & 20 (0+20) & 68.6 \\
\multicolumn{10}{@{}l}{\emph{Length scaling --- Wafer (tiled beyond 152k), lengths 30--80}} \\
Wafer & 8k & 20 (17+3) & 0.10 & 20 (17+3) & 0.10 & 20 (17+3) & 0.36 & 20 (17+3) & 0.36 \\
Wafer & 16k & 20 (17+3) & 0.24 & 20 (17+3) & 0.25 & 20 (17+3) & 0.79 & 20 (17+3) & 0.60 \\
Wafer & 32k & 20 (19+1) & 0.56 & 20 (19+1) & 0.59 & 20 (19+1) & 1.73 & 20 (19+1) & 1.73 \\
Wafer & 64k & 20 (10+10) & 1.43 & 20 (10+10) & 1.44 & 20 (10+10) & 3.89 & 20 (10+10) & 3.85 \\
Wafer & 128k & 20 (0+20) & 4.08 & 20 (0+20) & 4.10 & 20 (0+20) & 9.21 & 20 (0+20) & 7.56 \\
Wafer & 256k & 20 (0+20) & 10.8 & 20 (0+20) & 11.1 & 20 (0+20) & 33.9 & 20 (0+20) & 29.4 \\
Wafer & 512k & 20 (0+20) & 24.5 & 20 (0+20) & 24.9 & 20 (0+20) & 70.5 & 20 (0+20) & 59.2 \\
Wafer & 1M & 20 (0+20) & 53.4 & 20 (0+20) & 54.4 & 20 (0+20) & 144 & 20 (0+20) & 123 \\
Wafer & 2M & 20 (0+20) & 118 & 20 (0+20) & 121 & 20 (0+20) & 304 & 20 (0+20) & 264 \\
Wafer & 5M & 20 (0+20) & 330 & 20 (0+20) & 335 & 20 (0+20) & 866 & 20 (0+20) & 714 \\
\bottomrule
\end{tabular*}

\end{table*}

\begin{table}[H]
\caption{Census of reference motifs at non-anchor lengths against
the observed losses of the two anchored configurations without completion.
refs counts the reference motifs at odd lengths per row; the two
loss columns count the reference motifs lost at those lengths with
the sieve off and on.}
\label{tab:modea}
\scriptsize
\renewcommand{\arraystretch}{0.85}
\begin{tabular*}{\columnwidth}{@{\extracolsep{\fill}}llrlrr@{}}
\toprule
Dataset & $n$ & refs & at lengths $m$ & \multicolumn{2}{c@{}}{lost, completion off} \\
 & & & & sieve off & sieve on \\
\midrule
ChlorineConc. & 64k & 10 & 31, 33 & 10 & 10 \\
ChlorineConc. & 8k & 10 & 31, 33 & 10 & 10 \\
ECG200 & 8k & 8 & 21, 23 & 8 & 8 \\
ECG200 & 9,600 & 8 & 21, 23 & 8 & 8 \\
ECG5000 & 630k & 2 & 21 & 2 & 2 \\
ECG5000 & 64k & 8 & 31 & 8 & 8 \\
ECG5000 & 8k & 7 & 21, 23 & 7 & 7 \\
FordA & 64k & 6 & 31 & 6 & 6 \\
FordA & 660k & 8 & 31, 33 & 8 & 8 \\
FordA & 8k & 6 & 31, 33 & 6 & 6 \\
ItalyPowerDemand & 1,600 & 7 & 9 & 7 & 7 \\
ItalyPowerDemand & 1,608 & 7 & 9 & 7 & 7 \\
Mallat & 8k & 10 & 31 & 10 & 10 \\
StarLightCurves & 64k & 8 & 31, 33, 35 & 8 & 8 \\
StarLightCurves & 8k & 6 & 31, 33, 35 & 6 & 6 \\
TwoLeadECG & 8k & 8 & 21, 23, 25 & 8 & 8 \\
TwoLeadECG & 93,398 & 8 & 21 & 8 & 8 \\
\midrule
Total & & 127 & & 127 & 127 \\
\bottomrule
\end{tabular*}

\end{table}

\begin{table}[H]
\caption{Hot-class census of the six rows that lose motifs at
indexed lengths, measured by the control run's own counters. hot
buckets counts directory buckets whose population exceeded the
verification budget $R=128$; dir.\ share is the fraction of all
directory entries inside such buckets; sweep updates counts the
profile cells the hot-class sieve improved on that row.}
\label{tab:modeb}
\scriptsize
\renewcommand{\arraystretch}{0.85}
\begin{tabular*}{\columnwidth}{@{\extracolsep{\fill}}llrrrr@{}}
\toprule
Dataset & $n$ & hot & largest class & dir.\ share & sweep \\
 & & buckets & (\% windows) & (\%) & updates \\
\midrule
ECG5000 & 8k & 258 & 1,490 (19) & 56 & 9,479 \\
StarLightCurves & 8k & 290 & 3,695 (46) & 82 & 82,329 \\
TwoLeadECG & 93,398 & 1517 & 9,994 (11) & 89 & 358,587 \\
ECG5000 & 630k & 9878 & 67,806 (11) & 81 & 1,170,444 \\
FordA & 660k & 21502 & 31,489 (5) & 68 & 1,911,441 \\
StarLightCurves & 64k & 1043 & 21,531 (34) & 93 & 933,207 \\
\bottomrule
\end{tabular*}

\end{table}

\paragraph{Anchored pass with completion (anchoring and completion
on, no sieve).}
Completion should close the coverage gap, and it nearly does.
Recovery jumps from 609 to 736 of 740: completion evaluates the
neighborhoods of the motifs the stream pass found, at the skipped
lengths around them, which restores 125 of the 127 non-anchor
coverage losses and the four StarLightCurves 8k anchor ties. Four
losses remain, all on StarLightCurves 64k, two at $m=34$ and two
at $m=35$. This is the one row where the two emit mechanisms
depend on each other. Completion can only search around a motif
the stream pass already found, and with the sieve off the budgeted
pass never reaches into this row's enormous tie class, where 93\%
of the directory sits in over-budget buckets
(Table~\ref{tab:modeb}), so completion is handed no seed there to
expand from. With the sieve on as well, the full configuration
finds those seeds and recovers the row completely.

\paragraph{The dense half.}
Dense indexing has no uncovered lengths, so every loss in the
dense configurations is an evaluation loss. This lets each
mechanism be judged on evaluation losses alone. The bare
configuration loses 26, the sieve alone still loses 26, completion
alone loses 4, and the two together lose none. The steady count of
26 under the sieve looks like no effect, but it hides a change of
composition. An incomplete top-20 profile is full of borderline
ties. Any mechanism that lowers one more cell changes which of
them survive the cut, so the twenty-six misses become a different
twenty-six even though the total holds. The clear signal is the
exact-position column. It rises with every mechanism added, from
236 to 268 to 292 to 313 of 340, while the value-tie count falls
from 349 to 272 (Table~\ref{tab:factorial}). Each mechanism trades
a coincidental value match for an exact reproduction, an
improvement only the position column can show.

\paragraph{StarLightCurves anatomy.}
One row sits behind every residual in this section, and it also
explains why recovery and exact positions can disagree.
StarLightCurves 64k is dominated by near-identical repeats. Its
largest hash class alone covers 21{,}531 of its 64{,}000 windows,
and 93\% of its directory entries sit in over-budget buckets
(Table~\ref{tab:modeb}). The eight entries the partial
configurations lose here are all the same near-identical repeat,
recurring near positions $(4804,33469)$ at the non-anchor lengths
$m=31$, $33$, and $35$ (Table~\ref{tab:modea}), at distances of
0.0023 to 0.0026. Each sits inside a tie set far larger than
twenty. When so many pairs
tie at the same tiny distance, the top-20 list is not unique, and
two exact algorithms would themselves disagree on which members to
report. This makes the row a test of what the two recovery columns
mean. The bare dense configuration scores 18 of 20 here
yet matches zero exact positions: the score comes entirely from
tie values landing in range. Both full configurations score
20 of 20, with all 20 positions exact. Recovery alone cannot
separate a lucky tie-value match from an exact reproduction. The
exact-position column can, which is why every table in this
appendix reports both.

The factorial closes with a complete attribution. Every loss in
the six partial configurations belongs to a mechanism. The 127
non-anchor losses equal the reference census exactly and vanish
the moment completion is enabled. The evaluation losses at indexed
lengths lie on the six rows of Table~\ref{tab:modeb} and vanish
once the sieve and completion work together. The two emit
mechanisms close disjoint failures: completion closes the coverage
gap that anchoring opens (\S\ref{sec:anchors}), and the sieve
closes the evaluation gap that the budget opens
(\S\ref{sec:emit}). With both enabled, the full configuration
loses nothing under either indexing scheme. Hence, no mechanism is
redundant.

\FloatBarrier
\section{Recovery of Determinate Motifs}
\label{app:unique}

Recovery (\S\ref{sec:eval}) credits a reference motif to \panache{}
when it is reported by position or by value. This section examines a
stronger criterion, \emph{identity}: whether \panache{} reports the
same window as the exact baseline rather than a distinct window at an
equal distance. The criterion is well defined only for a motif with a
\emph{determinate} nearest neighbor, that is, a single closest window.
When a window has one or more exact duplicates, its nearest-neighbor
distance is zero and several identical windows are equidistant; the
nearest neighbor is then not unique, and two exact algorithms may
themselves report different members of the tie (\S\ref{app:recovery}).

We therefore measure, for each configuration's exact reference top-20
(computed by \stumpy{} or \scamp{}-GPU, \S\ref{sec:setup}), which of
its twenty cells have a determinate nearest neighbor. We read each
cell's exact nearest-neighbor distance: a distance above the zero-snap
threshold $\varepsilon_0=10^{-6}$ marks a single closest window and one
correct motif to recover, whereas a distance of zero does not. We then
compare \panache{}'s top-20 with the exact top-20 on the determinate
cells, window for window. Of the $740$ reference motifs, $320$ have a
determinate nearest neighbor; the other $420$ are distance-zero
duplicates, almost all in the tiled Wafer scaling rows.

\panache{} recovers every one of the $320$ determinate motifs, and its
top-20 agrees with the exact top-20 on all of them: $272$ at the
identical start position and $48$ as equal-distance value matches
(Table~\ref{tab:unique}). On the shorter series the two lists are
identical, window for window, as on FordA, ChlorineConcentration,
ECG5000, and TwoLeadECG at 8k, on ItalyPowerDemand, and on
StarLightCurves. Counting the distance-zero duplicates, which are all
recovered as well, benchmark recovery is $740/740$.

\begin{table}[t]
\caption{Same-motif recovery, default configuration. A reference motif
has a \emph{determinate} nearest neighbor when its exact
nearest-neighbor distance is nonzero, so its closest window is a
single, well-defined window; a distance-zero cell has one or more
duplicate windows and no single closest neighbor. det.\ (pos$+$val):
motifs with a determinate neighbor, of 20, split into position and
value matches (all recovered). dist-0: distance-zero cells, of 20.
rec.: motifs recovered by position or value, of 20. ``--'' marks
configurations whose reference top-20 is entirely distance-zero
duplicates.}
\label{tab:unique}
\scriptsize
\setlength{\tabcolsep}{3.6pt}
\renewcommand{\arraystretch}{0.88}
\begin{tabular*}{\columnwidth}{@{\extracolsep{\fill}}lrrrr@{}}
\toprule
Dataset & $n$ & det.\ (pos$+$val) & dist-0 & rec. \\
\midrule
\multicolumn{5}{@{}l}{\emph{Recovery suites --- UCR (per-dataset length ranges)}} \\
ECG200 & 8k & 20 (16+4) & 0 & 20 \\
ItalyPowerDemand & 1,600 & 20 (20+0) & 0 & 20 \\
Wafer & 8k & -- & 20 & 20 \\
Earthquakes & 8k & -- & 20 & 20 \\
StarLightCurves & 8k & 20 (20+0) & 0 & 20 \\
ChlorineConc. & 8k & 20 (20+0) & 0 & 20 \\
TwoLeadECG & 8k & 20 (20+0) & 0 & 20 \\
ECG5000 & 8k & 20 (20+0) & 0 & 20 \\
FordA & 8k & 20 (20+0) & 0 & 20 \\
Mallat & 8k & -- & 20 & 20 \\
\multicolumn{5}{@{}l}{\emph{Recovery suite --- 64k, lengths 30--80}} \\
ChlorineConc. & 64k & 20 (20+0) & 0 & 20 \\
Earthquakes & 64k & -- & 20 & 20 \\
ECG5000 & 64k & 20 (12+8) & 0 & 20 \\
FordA & 64k & 20 (15+5) & 0 & 20 \\
Mallat & 64k & -- & 20 & 20 \\
StarLightCurves & 64k & 20 (20+0) & 0 & 20 \\
Wafer & 64k & -- & 20 & 20 \\
\multicolumn{5}{@{}l}{\emph{Full natural lengths (capped at 1M)}} \\
ItalyPowerDemand & 1,608 & 20 (20+0) & 0 & 20 \\
ECG200 & 9,600 & 20 (16+4) & 0 & 20 \\
TwoLeadECG & 93,398 & 20 (12+8) & 0 & 20 \\
Wafer & 152k & -- & 20 & 20 \\
Earthquakes & 164,864 & -- & 20 & 20 \\
ECG5000 & 630k & 20 (8+12) & 0 & 20 \\
ChlorineConc. & 637,440 & -- & 20 & 20 \\
FordA & 660k & 20 (13+7) & 0 & 20 \\
StarLightCurves & 1M & -- & 20 & 20 \\
Mallat & 1M & -- & 20 & 20 \\
\multicolumn{5}{@{}l}{\emph{Length scaling --- Wafer (tiled beyond 152k), lengths 30--80}} \\
Wafer (10 rows, 8k--5M) & & -- & 200 & 200 \\
\midrule
\textbf{Total} & & \textbf{320 (272+48)} & \textbf{420} & \textbf{740} \\
\bottomrule
\end{tabular*}

\end{table}

\section{Recovery of the Rarest Patterns}
\label{app:rare}

Motif discovery must surface rare patterns, those that occur only once
in a series, as well as common ones that repeat many times. This
section is a dedicated study of rare-pattern discovery: we rank every
reference motif in the benchmark by how rarely its shape recurs,
isolate the rarest, and measure whether \panache{} discovers them and
reports the same patterns as the exact baseline.

We measure rarity directly from the raw series, independently of
\panache{}. Take a determinate reference motif: a window starting at
$q$ at length $m$ whose exact nearest-neighbor distance is $d_1$
(\S\ref{app:unique}). We z-normalize every length-$m$ window of the
series and compute the z-normalized Euclidean distance from each to the
window at $q$, its distance profile. Discarding the trivial matches
inside the exclusion zone, we count the windows whose distance is at
most $2d_1$. A window that close is another near-occurrence of the same
shape, so this count is the number of times the pattern occurs, the
size of its near-duplicate class. A recurring pattern, a repeated
heartbeat say, has a large class; a one-off pattern has an empty one
but for its single partner. We call a motif \emph{isolated} when the
count is exactly one: the pattern occurs once, as a pair that matches
each other and no third window. To order the isolated motifs by how
distinct they are, we use the gap $d_2/d_1$, the ratio of the
second-smallest to the smallest distance in the profile; $d_1$ is the
matched pair and $d_2$ the nearest unrelated window, so a large ratio
means the match stands alone and the pattern is sharply distinct. The
twenty rarest patterns are the isolated motifs with the largest gap.

Of the $320$ determinate reference motifs (\S\ref{app:unique}), $123$
are isolated, occurring across six datasets and lengths from $8$ to
$35$. \panache{} recovers \emph{all $123$}: every isolated motif the
exact baseline reports is found, with no miss on any dataset, at any
length, or at any stream size (Table~\ref{tab:rare-perds}). Rarity does
not cost recovery. Whatever a series' mix of common and one-off
patterns, and however few or many of its motifs are isolated, each
dataset's recovered count equals its isolated count exactly. Of the
$123$, $110$ are matched at the identical start position and the other
$13$ as equal-distance motifs, the latter only on the two largest
streams (FordA 660k, ECG5000 630k), where the budgeted scan returns an
equally close neighbor. The twenty rarest patterns of all, those with
the largest gap to their second-nearest window, are recovered $20/20$
at the identical position (Table~\ref{tab:rare-top}): \panache{}
returns the same one-off motif the exact baseline reports, on series
from $1{,}600$ to $660{,}000$ samples. Across the study, no rare
pattern the exact baseline reports goes undiscovered.

\begin{table}[H]
\small
\setlength{\tabcolsep}{5pt}
\renewcommand{\arraystretch}{0.95}
\caption{Isolated-pattern recovery by dataset. det.: determinate
motifs (nonzero-distance nearest neighbor, \S\ref{app:unique}).
isolated: those whose near-duplicate class has size one, so the pattern
occurs exactly once in the series. rec.: isolated motifs recovered by
position or value. exact pos.: recovered at the identical start.
Datasets whose determinate motifs are all repeated (ECG200) contribute
no isolated patterns.}
\label{tab:rare-perds}
\begin{tabular*}{\columnwidth}{@{\extracolsep{\fill}}lrrrr@{}}
\toprule
Dataset & det.\ & isolated & rec.\ & exact pos. \\
\midrule
ItalyPowerDemand & 40 & 10 & 10 & 10 \\
ECG200 & 40 & 0 & 0 & 0 \\
TwoLeadECG & 40 & 17 & 17 & 17 \\
ECG5000 & 60 & 5 & 5 & 3 \\
FordA & 60 & 48 & 48 & 37 \\
ChlorineConc. & 40 & 22 & 22 & 22 \\
StarLightCurves & 40 & 21 & 21 & 21 \\
\midrule
\textbf{Total} & \textbf{320} & \textbf{123} & \textbf{123} & \textbf{110} \\
\bottomrule
\end{tabular*}

\vspace{12pt}
\caption{The twenty rarest patterns in the benchmark, ranked by the gap
$d_2/d_1$ between the motif distance $d_1$ and the distance to the
second-nearest window (larger is more isolated). match: \panache{}
recovers the pattern at the identical start position (pos) or as an
equal-distance motif (val). All twenty are recovered at the identical
position, across series from $1{,}600$ to $660{,}000$ samples.}
\label{tab:rare-top}
\begin{tabular*}{\columnwidth}{@{\extracolsep{\fill}}llrrrc@{}}
\toprule
Dataset & $n$ & $m$ & $d_1$ & $d_2/d_1$ & match \\
\midrule
StarLightCurves & 8k & 34 & 0.007 & 8.9 & pos \\
StarLightCurves & 8k & 34 & 0.007 & 8.7 & pos \\
ChlorineConc. & 8k & 30 & 0.136 & 5.0 & pos \\
FordA & 660k & 31 & 0.053 & 5.0 & pos \\
FordA & 660k & 30 & 0.051 & 4.9 & pos \\
FordA & 660k & 31 & 0.053 & 4.8 & pos \\
FordA & 660k & 30 & 0.051 & 4.8 & pos \\
ChlorineConc. & 8k & 30 & 0.136 & 4.8 & pos \\
FordA & 660k & 32 & 0.056 & 4.8 & pos \\
FordA & 660k & 33 & 0.059 & 4.7 & pos \\
FordA & 660k & 32 & 0.056 & 4.6 & pos \\
StarLightCurves & 64k & 30 & 0.002 & 4.6 & pos \\
StarLightCurves & 64k & 30 & 0.002 & 4.5 & pos \\
FordA & 660k & 33 & 0.059 & 4.5 & pos \\
FordA & 8k & 31 & 0.195 & 4.3 & pos \\
ChlorineConc. & 8k & 31 & 0.162 & 4.3 & pos \\
FordA & 8k & 32 & 0.201 & 4.3 & pos \\
FordA & 8k & 32 & 0.201 & 4.2 & pos \\
FordA & 8k & 31 & 0.195 & 4.2 & pos \\
StarLightCurves & 8k & 30 & 0.006 & 4.2 & pos \\
\bottomrule
\end{tabular*}

\end{table}

\end{document}